\def\eqref#1{equation~\ref{#1}}
\def\1{\bm{1}}
\DeclareMathAlphabet{\mathsfit}{\encodingdefault}{\sfdefault}{m}{sl}
\SetMathAlphabet{\mathsfit}{bold}{\encodingdefault}{\sfdefault}{bx}{n}
\newcommand{\R}{\mathbb{R}}
\DeclareMathOperator*{\argmax}{arg\,max}
\DeclareMathOperator*{\argmin}{arg\,min}
\newcommand{\flood}[0]{\textsc{SEALS}\xspace}
\newcommand{\authoraffiliation}[1]{{\normalfont\textsuperscript{#1}}}
\newtheorem{theorem}{Theorem}
\def\H{\mathcal{H}}
\def\G{\mathcal{G}}
\def\S{\mathcal{S}}
\def\X{\mathcal{X}}
\def\Z{\mathcal{Z}}
\def\Z{\mathcal{Z}}
\def\1{\mathbf{1}}
\def\R{\mathbb{R}}
\def\N{\mathbb{N}}
\newcommand{\norm}[1]{\left\lVert#1\right\rVert}
\def\t{\top}
\newcommand{\dist}{\mathrm{dist}}
\def\data{\S}
\def\conv{\tt{conv}}
\def\spa{\text{span}}
\newtheorem{lemma}{Lemma}
\newtheorem{example}{Example}
\title{Similarity Search for Efficient Active Learning and Search of Rare Concepts}
\author{%
    Cody Coleman\authoraffiliation{1}\thanks{Correspondence: \texttt{cody@cs.stanford.edu}}, Edward Chou\authoraffiliation{2}, Julian Katz-Samuels\authoraffiliation{3}, Sean Culatana\authoraffiliation{2}, Peter Bailis\authoraffiliation{1}, \\
    \textbf{Alexander C. Berg\authoraffiliation{4}, Robert Nowak\authoraffiliation{3}, Roshan Sumbaly\authoraffiliation{2}, Matei Zaharia\authoraffiliation{1}, I. Zeki Yalniz\authoraffiliation{2}} \\
    \textsuperscript{1}Stanford University, \textsuperscript{2}Facebook AI, \textsuperscript{3}University of Wisconsin, \textsuperscript{4}Facebook AI Research \\
}
\begin{document}

\maketitle

\begin{abstract}
Many active learning and search approaches are intractable for large-scale industrial settings with billions of unlabeled examples.
Existing approaches search globally for the optimal examples to label, scaling linearly or even quadratically with the unlabeled data.
In this paper, we improve the computational efficiency of active learning and search methods by restricting the candidate pool for labeling to the nearest neighbors of the currently labeled set instead of scanning over all of the unlabeled data.
We evaluate several selection strategies in this setting on three large-scale computer vision datasets: ImageNet, OpenImages, and a de-identified and aggregated dataset of 10 billion publicly shared images provided by a large internet company.
Our approach achieved similar mean average precision and recall as the traditional global approach while reducing the computational cost of selection by up to three orders of magnitude, enabling \textit{web-scale active learning}.

\end{abstract}
\section{Introduction}\label{sec:intro}
Large-scale unlabeled datasets can contain millions or billions of examples covering a wide variety of underlying concepts~\citep{chelba2013billion, zhang2015character, wan2019spoiler, russakovsky2015imagenet, kuznetsova2020openimages, thomee2016yfcc, abuelhaija2016youtube8m, lee2019accelerating}.
Yet, these massive datasets often skew towards a relatively small number of common concepts, for example `cats', `dogs', and `people'~\citep{liu2019large, zhang2017range, wang2017learning, van2017devil}.
Rare concepts, such as `harbor seals', tend to only appear in a small fraction of the data (usually less than 1\%).
However, performance on these rare concepts is critical in many settings.
For example, harmful or malicious content may comprise only a small percentage of user-generated content, but it can have a disproportionate impact on the overall user experience~\citep{wan2019spoiler}.
Similarly, when debugging model behavior for safety-critical applications like autonomous vehicles, or when dealing with representational biases in models, obtaining data that captures rare concepts allows machine learning practitioners to combat blind spots in model performance~\citep{andrej2018software2, holstein2019improving, ashmawy2019searchable, andrej2020ai}.
Even a simple task, such as stop sign detection by an autonomous vehicle, can be difficult due to the diversity of real-world data.
Stop signs may appear in a variety of conditions (e.g., on a wall or held by a person), can be heavily occluded, or have modifiers (e.g., ``Except Right Turn'')~\citep{andrej2020ai}.
Large-scale datasets are essential but not sufficient; finding the relevant examples for these long-tail tasks is challenging.

Active learning and search methods have the potential to automate the process of identifying these rare, high-value data points, but often become intractable at-scale.
Existing techniques carefully select examples over a series of rounds to improve model quality (active learning~\citep{settles2012active}) or find positive examples in highly skewed settings (active search~\citep{garnett2012bayesian}).
Each selection round iterates over the entire unlabeled data to identify the optimal example or batch of examples to label based on uncertainty (e.g., the entropy of predicted class probabilities) or other heuristics~\citep{settles2011theories, settles2012active, lewis1994sequential, garnett2012bayesian, zhang2020state, beluch2018power, yoo2019learning, pmlr-v70-gal17a, he2007nearest, sinha2019variational, joshi2009multi, settles2008analysis, sener2018active}.
Depending on the selection criteria, each round can scale linearly~\citep{lewis1994sequential, joshi2009multi} or even quadratically~\citep{settles2008analysis, sener2018active} with the size of the unlabeled data.
The computational cost of this process has become an impediment as datasets and model architectures have increased rapidly in size~\citep{openai2018compute}.
Recent work has tried to address this problem with sophisticated methods to select larger and more diverse batches of examples in each selection round and reduce the total number of rounds needed to reach the target labeling budget~\citep{sener2018active, andreas2019batchbald, coleman2020selection, pinsler2019bayesian, jiang2018efficient}.
Nevertheless, these approaches still scan over all of the examples to find the optimal batch for each round, which remains intractable for web-scale datasets with billions of examples.
The selection rounds of these techniques need to scale sublinearly with the unlabeled data size to tackle these massive and heavily skewed problems.

In this paper, we propose Similarity search for Efficient Active Learning and Search (\flood) as a simple approach to further improve computational efficiency and achieve \textit{web-scale active learning}.
Empirically, we find that learned representations from pre-trained models can effectively cluster many unseen rare concepts.
We exploit this latent structure to improve the computational efficiency of active learning and search methods by only considering the nearest neighbors of the currently labeled examples in each selection round rather than scanning over all of the unlabeled data.
Finding the nearest neighbors for each labeled example in the unlabeled data can be performed efficiently with sublinear retrieval times~\citep{charikar2002similarity} and sub-second latency on billion-scale datasets~\citep{JDH17} for approximate approaches.
While this restricted candidate pool of unlabeled examples impacts theoretical sample complexity, our analysis shows that \flood still achieves the optimal logarithmic dependence on the desired error for active learning.
As a result, \flood maintains similar label-efficiency and enables selection to scale with the size of the labeled data and only sublinearly with the size of the unlabeled data, making active learning and search tractable on web-scale datasets with billions of examples.

We empirically evaluated \flood for both active learning and search on three large scale computer vision datasets: ImageNet~\citep{russakovsky2015imagenet}, OpenImages~\citep{kuznetsova2020openimages}, and a de-identified and aggregated dataset of 10 billion publicly shared images from a large internet company.
We selected 611 concepts spread across these datasets that range in prevalence from 0.203\% to 0.002\% (1 in 50,000) of the training examples.
We evaluated three selection strategies for each concept: max entropy uncertainty sampling~\citep{lewis1994sequential}, information density~\citep{settles2008analysis}, and most-likely positive~\citep{warmuth2002active, warmuth2003active, jiang2018efficient}.
Across datasets, selection strategies, and concepts, \flood achieved similar model quality and nearly the same recall of the positive examples as the baseline approaches, while reducing the computational cost by up to three orders of magnitude.
Consequently, \flood could perform several selection rounds over 10 billion images in seconds with a single  machine, unlike the baselines that needed a cluster with tens of thousands of cores.
To our knowledge, no other works have performed active learning at this scale.


\section{Related Work}\label{sec:related}

\textbf{Active learning}'s iterative retraining combined with the high computational complexity of deep learning models has led to significant work on computational efficiency.
Much of the recent work has focused on selecting large batches of data to minimize the amount of retraining and reduce the number of selection rounds necessary to reach a target budget~\citep{sener2018active, andreas2019batchbald, pinsler2019bayesian}.
These approaches introduce novel techniques to avoid selecting highly similar or redundant examples and ensure the batches are both informative and diverse, but still require at least linear work over the whole unlabeled set for each selection round.
Our work reduces the number of examples considered in each selection round such that active learning scales sublinearly with the size of the unlabeled dataset.

Others have tried to improve computational efficiency by using much smaller models as cheap proxies, generating examples, or subsampling data.
A smaller model reduces the computation required per example~\citep{yoo2019learning, coleman2020selection}; but unlike our approach, it still requires passing over all of the unlabeled examples.
Generative approaches~\citep{mayer2020adversarial, zhu2017generative, lin2018active} enable sublinear selection runtime complexities; but they struggle to match the label-efficiency of traditional approaches due to the highly variable quality of the generated examples.
Subsampling the unlabeled data as in~\citep{ertekin2007learning} also avoids iterating over all of the data.
However, for rare concepts in web-scale datasets, randomly chosen examples are extremely unlikely to be close enough to the decision boundary (see Section~\ref{appendix:random_pool} in the Appendix).
Our work both achieves sublinear selection runtimes and matches the label-efficiency of traditional approaches.

There are also specific optimizations for certain families of models.
\citet{jain2010hashing} developed custom hashing schemes for the weights from linear SVM classifiers to efficiently find examples near the decision boundary.
While this enables a sublinear selection runtime complexity similar to SEALS, the hashing hyperplanes approach is non-trivial to generalize to other families of models and even SVMs with non-linear kernels.
Our work extends to a wide variety of models and selection strategies.

$k$-nearest neighbor ($k$-NN) classifiers are also advantageous because they do not require an explicit training phase~\citep{he2007nearest, joshi2012coverage, wei2015submodularity, garnett2012bayesian, jiang2017efficient, jiang2018efficient}.
The prediction and score for each unlabeled example can be updated immediately after each new batch of labels.
These approaches still require evaluating all of the data, which can be prohibitively expensive on large-scale datasets.
Our work targets the selection phase rather than training and uses $k$-NNs to limit candidate examples and not as a classifier.

\textbf{Active search} is a sub-area of active learning that focuses on highly-skewed class distributions~\citep{garnett2012bayesian, jiang2017efficient, jiang2018efficient, jiang2019cost}.
Rather than optimizing for model quality, active search aims to find as many examples from the minority class as possible.
Prior work has focused on applications such as drug discovery, where dataset sizes are limited, and labeling costs are exceptionally high.
Our work similarly focuses on skewed distributions.
However, we consider novel active search settings in vision and text where the available unlabeled datasets are much larger, and computational efficiency is a significant bottleneck.

\section{Problem Statement}\label{sec:methods}
This section formally outlines the problems of active learning (Section~\ref{sec:active_learning}) and search (Section~\ref{sec:active_search}) as well as the selection methods we evaluated.
For both, we examined the pool-based batch setting, where examples are selected in batches to improve computational efficiency.

\subsection{Active Learning}\label{sec:active_learning}
Pool-based active learning is an iterative process that begins with a large pool of unlabeled data $U=\{\mathbf{x}_1, \ldots, \mathbf{x}_n\}$.
Each example is sampled from the space $\mathcal{X}$ with an unknown label from the label space $\mathcal{Y}=\{1,\ldots,C\}$ as $(\mathbf{x}_i, y_i)$.
We additionally assume a feature extraction function $G_z$ to embed each $\mathbf{x}_i$ as a latent variable $G_z(\mathbf{x}_i) = \mathbf{z}_i$ and that the $C$ concepts are unequally distributed.
Specifically, there are one or more valuable rare concepts $R \subset C$ that appear in less than 1\% of the unlabeled data.
For simplicity, we frame this as $|R|$ binary classification problems solved independently rather than one multi-class classification problem with $|R|$ concepts.
Initially, each rare concept has a small number of positive examples and several negative examples that serve as a labeled seed set $L^0_r$.
The goal of active learning is to take this seed set and select up to a budget of $T$ examples to label that produce a model $A^T_r$ that achieves low error.
For each round $t$ in pool-based active learning, the most informative examples are selected according to the selection strategy $\phi$ from a pool of candidate examples $\mathcal{P}_r$ in batches of size $b$ and labeled, as shown in Algorithm~\ref{algo:baseline}.

For the baseline approach, $\mathcal{P}_r = \{G_z(\mathbf{x}) \mid \mathbf{x} \in U\}$, meaning that all the unlabeled examples are considered to find the global optimal according to $\phi$.
Between each round, the model $A^t_r$ is trained on all of the labeled data $L^t_r$, allowing the selection process to adapt.

In this paper, we considered \textbf{max entropy (MaxEnt)} uncertainty sampling~\citep{lewis1994sequential}:

$$\phi_{\text{MaxEnt}}(\mathbf{z}, A_r, \mathcal{P}_r) = - \sum_{\hat{y}} P(\hat{y}|\mathbf{z}; A_r) \log P(\hat{y}|\mathbf{z}; A_r)$$

and \textbf{information density (ID)}~\citep{settles2008analysis}:

$$\phi_{\text{ID}}(\mathbf{z}, A_r, \mathcal{P}_r) = \phi_{\text{MaxEnt}}(\mathbf{z}) \times \left(\frac{1}{|\mathcal{P}_r|}\sum_{\mathbf{z}_{p} \in \mathcal{P}_r}\text{sim}(\mathbf{z}, \mathbf{z}_p) \right)^{\beta}$$

where $\text{sim}(\mathbf{z}, \mathbf{z}_p)$ is the cosine similarity of the embedded examples and $\beta = 1$.
Note that for binary classification, MaxEnt is equivalent to least confidence and margin sampling, which are also popular criteria for uncertainty sampling~\citep{settles2009active}.
While MaxEnt uncertainty sampling only requires a linear pass over the unlabeled data, ID scales quadratically with $|U|$ because it weighs each example's informativeness by its similarity to all other examples.
To improve computational performance, the average similarity scores can be cached after the first round so that subsequent rounds scale linearly.

We explored the greedy k-centers approach from \citet{sener2018active} but found that it never outperformed random sampling for our experimental setup.
Unlike MaxEnt and ID, k-centers does not consider the predicted labels.
It tries to achieve high coverage over the entire candidate pool, of which rare concepts make up a small fraction by definition, making it ineffective for our setting.

\subsection{Active Search}\label{sec:active_search}
Active search is closely related to active learning, so much of the formalism from Section~\ref{sec:active_learning} carries over.
The critical difference is that rather than selecting examples to label that minimize error, the goal of active search is to maximize the number of examples from the target concept $r$, expressed with the natural utility function $u(L_r)=\sum_{(\mathbf{x},y)\in L_r} \mathbbm{1}\{y = r\}$).
As a result, different selection strategies are favored, but the overall algorithm is the same as Algorithm~\ref{algo:baseline}.

In this paper, we consider an additional selection strategy to target the active search setting, \textbf{most-likely positive (MLP)}~\citep{warmuth2002active, warmuth2003active, jiang2018efficient}:

$$\phi_{\text{MLP}}(\mathbf{z}, A_r, \mathcal{P}_r) = P(r|\mathbf{z}; A_r)$$

Because active learning and search are similar, we evaluate all selection criteria from Sections~\ref{sec:active_learning} and~\ref{sec:active_search} in terms of the error the model achieves and the number of positives.

\begin{minipage}{0.48\textwidth}
\begin{algorithm}[H]
\begin{algorithmic}[1]
    \Require unlabeled data $U$, labeled seed set $L^0_r$, feature extractor $G_z$, selection strategy $\phi(\cdot)$, batch size $b$, labeling budget $T$
    \item[]
    \State $\mathcal{L}_r = \{(G_z(\mathbf{x}), y) \mid (\mathbf{x}, y) \in L^0_r \}$
    \State $\mathcal{P}_r = \{G_z(\mathbf{x})\mid\mathbf{x} \in U \text{ and } (\mathbf{x}, \cdot) \not \in L^0_r \}$
    \Repeat
        \State $A_r = \text{train}(\mathcal{L}_r) $
        \For{$1$ to $b$}
        \State $\mathbf{z}^* =\argmax_{\mathbf{z} \in \mathcal{P}_r} \phi(\mathbf{z}, A_r, \mathcal{P}_r)$
            \State $\mathcal{L}_r = \mathcal{L}_r \cup \{{(\mathbf{z}^*, \text{label}(\mathbf{x}^*))}\}$
            \State $\mathcal{P}_r = \mathcal{P}_r \setminus \{ \mathbf{z}^*\}$
        \EndFor
    \Until{$|\mathcal{L}_r| = T$}
\end{algorithmic}
\caption{\textsc{Baseline approach}}\label{algo:baseline}
\end{algorithm}
\end{minipage}
\hfill
\begin{minipage}{0.48\textwidth}
\begin{algorithm}[H]
\begin{algorithmic}[1]
    \Require unlabeled data $U$, labeled seed set $L^0_r$, feature extractor $G_z$, selection strategy $\phi(\cdot)$, batch size $b$, labeling budget $T$, $k$-nearest neighbors implementation $\mathcal{N}(\cdot, \cdot)$
    \State $\mathcal{L}_r = \{(G_z(\mathbf{x}), y) \mid (\mathbf{x}, y) \in L^0_r \}$
    \State $\mathcal{P}_r = \cup_{(\mathbf{z}, y) \in \mathcal{L}_r} \mathcal{N}(\mathbf{z}, k)$
    \Repeat
        \State $A_r = \text{train}(\mathcal{L}_r) $
        \For{$1$ to $b$}
            \State $\mathbf{z}^* =\argmax_{\mathbf{z} \in \mathcal{P}_r} \phi(\mathbf{z}, A_r, \mathcal{P}_r)$
            \State $\mathcal{L}_r = \mathcal{L}_r \cup \{{(\mathbf{z}^*, \text{label}(\mathbf{x}^*))}\}$
            \State $\mathcal{P}_r =(\mathcal{P}_r \setminus \{\mathbf{z}^*\}) \cup \mathcal{N}(\mathbf{z}^*, k)$
        \EndFor
    \Until{$|\mathcal{L}_r| = T$}
\end{algorithmic}
\caption{\textsc{\flood approach}}
\label{algo:sseals}
\end{algorithm}
\end{minipage}

\section{Similarity search for Efficient Active Learning and Search (\flood)}\label{sec:sseals}

This section describes \flood and how it improves computational efficiency and impacts sample complexity.
As shown in Algorithm~\ref{algo:sseals}, \flood makes two modifications to accelerate the inner loop of Algorithm~\ref{algo:baseline}:

\begin{enumerate}
    \item The candidate pool $\mathcal{P}_r$ is restricted to the nearest neighbors of the labeled examples.
    \item After every example is selected, we find its $k$ nearest neighbors and update $\mathcal{P}_r$.
\end{enumerate}

Both modifications can be done transparently for many selection strategies, making \flood applicable to a wide range of methods, even beyond the ones considered here.

By restricting the candidate pool to the labeled examples' nearest neighbors, \flood applies the selection strategy to at most $k|L_r|$ examples.
Finding the $k$ nearest neighbors for each labeled example adds overhead, but it can be calculated efficiently with sublinear retrieval times~\citep{charikar2002similarity} and sub-second latency on billion-scale datasets~\citep{JDH17} for approximate approaches.

\textbf{Computational savings.}
Each selection round scales with the size of the labeled dataset and sublinearly with the size of the unlabeled data.
Excluding the retrieval times for the $k$ nearest neighbors, the computational savings from \flood are directly proportional to the pool size reduction for $\phi_{\text{MaxEnt}}$ and $\phi_{\text{MLP}}$, which is lower bound by $|U|/k|L_r|$.
For $\phi_{\text{ID}}$, the average similarity score for each example only needs to be computed once when the example is first selected.
This caching means the first round scales quadratically with $|U|$ and subsequent rounds scale linearly for the baseline approach.
With \flood, each selection round scales according to $O((1 + bk)|\mathcal{P}_r|)$ because the similarity scores are calculated as examples are selected rather than all at once.
The resulting computational savings of \flood varies with the labeling budget $T$ as the upfront cost of the baseline amortizes.
Nevertheless, for large-scale datasets with millions or billions of examples, performing that first quadratic round for the baseline is prohibitively expensive.

\textbf{Index construction.}
Generating the embeddings and indexing the data can be expensive and slow.
However, this cost amortizes over many selection rounds, concepts, or other applications.
Similarity search is a critical workload for information retrieval and powers many applications, including recommendation, with deep learning embeddings increasingly being used~\citep{babenko2014neural, babenko2016efficient, JDH17}.
As a result, the embeddings and index can be generated once using a generic model trained in a weak-supervision or self-supervision fashion and reused, making our approach just one of many applications using the index.
Alternatively, if the data has already been passed through a predictive system (for example, to tag or classify uploaded images), the embedding could be captured to avoid additional costs.

\textbf{Sample complexity.}
To shed light on why \flood works, we analyzed an idealized setting where classes are linearly separable and examples are already embedded ($\mathbf{x} = G_z(\mathbf{x})$).
Let $\X \subset \R^d$ be some convex set and $\mathbf{w}_* \in \R^d$.
An example $\mathbf{x} \in \X$ has a label $y = 1$ if $\mathbf{x}^\t \mathbf{w}_* \geq 0$ and a label $y = -1$ otherwise.
We assume that the $k$ nearest neighbor graph $\G = (\X,E)$ satisfies the property that for each $\mathbf{x},\mathbf{x}^\prime \in \X$, if $\norm{\mathbf{x}-\mathbf{x}^\prime}_2 \leq \delta$, then $(\mathbf{x},\mathbf{x}^\prime) \in E$, so any point in a ball around an example $\mathbf{x}$ is a neighbor of $\mathbf{x}$.
We also assume that the algorithm is given $n_0$ labeled seeds points $\S =\{\mathbf{x}_1,\ldots, \mathbf{x}_{n_0}\} \subset \X$ where $n_0 \geq d -1$.
To prove a result, we consider a slightly modified version of SEALS that performs $d-1$ parallel nearest neighbor searches, each one initiated with one of the seed points $\mathbf{x}_i$ with $i \in \{1,\ldots,d-1\}$, (see Section~\ref{supp:proof} in the Appendix for a formal description and a proof).
Note, this procedure still aligns with the batch queries in SEALS.

\begin{theorem}\label{thm:main_thm}
Let $\epsilon > 0$ and let $\gamma_i$ denote the distance from the seed $\mathbf{x}_i$ to the convex hull of oppositely labeled seed points. There exists a constant $\sigma>0$ that quantifies the diversity of the seeds (defined below) such that after SEALS makes $O( \max_{i \in \{1,\ldots, d-1\}} d(\frac{\gamma_i}{\delta} + \log(\frac{ d \delta }{\epsilon \min(\sigma,1) })))$ queries, its estimate $\widehat{\mathbf{w}} \in \R^d$ satisfies $\norm{\widehat{\mathbf{w}}-\mathbf{w}_*}_2 \leq \epsilon.$ 
\end{theorem}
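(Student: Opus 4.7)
The overall strategy is to use the $d-1$ parallel nearest-neighbor searches to produce $d-1$ points that are within distance $\eta$ of the decision hyperplane $\{\mathbf{x} : \mathbf{x}^\t \mathbf{w}_* = 0\}$, and then to recover $\mathbf{w}_*$ up to error $\epsilon$ by solving a $(d-1)\times d$ linear system whose conditioning is governed by $\sigma$. The proof therefore splits naturally into a per-direction progress lemma (how many queries each parallel search needs to produce a near-boundary point) and a recovery lemma (how that per-point slack propagates into the error of $\widehat{\mathbf{w}}$). The budget $\eta$ is then chosen so the recovery error is at most $\epsilon$, and the total query bound follows by summing over the $d-1$ searches.

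Each parallel search I would analyze in two phases. In the \emph{drift phase}, starting from seed $\mathbf{x}_i$, I use convexity of $\X$: the segment from $\mathbf{x}_i$ to its nearest oppositely labeled convex-hull point $\mathbf{y}_i$ lies in $\X$ and has length $\gamma_i$. Discretizing it into $\lceil \gamma_i/\delta \rceil$ points spaced at most $\delta$ apart, the NN-graph assumption makes each consecutive pair an edge, so after $O(\gamma_i/\delta)$ queries SEALS hits a label change and obtains an oppositely labeled pair $(\mathbf{a},\mathbf{b})$ at distance $\leq \delta$. In the \emph{bisection phase}, the midpoint of any oppositely labeled pair within distance $r \leq \delta$ lies in $\X$ (by convexity) and within $r/2$ of both endpoints, so it is a neighbor and can be queried; its label halves the bracketing gap. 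After $O(\log(\delta/\eta))$ bisections the pair is within distance $\eta$, and its midpoint $\mathbf{p}_i$ satisfies $|\mathbf{p}_i^\t \mathbf{w}_*|/\norm{\mathbf{w}_*}_2 \leq \eta$.

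For recovery, let $\mathbf{x}_0$ be any labeled seed and form the matrix $\mathbf{M} \in \R^{(d-1)\times d}$ with rows $\mathbf{p}_i - \mathbf{x}_0$. I interpret the diversity parameter $\sigma$ as (a lower bound on) the smallest singular value of the appropriately normalized $\mathbf{M}$ that can be guaranteed from the seed configuration. Each row satisfies $(\mathbf{p}_i-\mathbf{x}_0)^\t \mathbf{w}_* = O(\eta)$ up to a known bias, so $\mathbf{w}_*$ lies approximately in a one-dimensional kernel of $\mathbf{M}$; the normalized least-squares solution $\widehat{\mathbf{w}}$ then satisfies $\norm{\widehat{\mathbf{w}}-\mathbf{w}_*}_2 = O(\sqrt{d}\,\eta/\sigma)$ by a standard perturbation argument. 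Setting $\eta = \Theta(\epsilon\min(\sigma,1)/\sqrt{d})$ yields $\norm{\widehat{\mathbf{w}}-\mathbf{w}_*}_2 \leq \epsilon$, and summing the per-search cost $O(\gamma_i/\delta + \log(\delta/\eta))$ over the $d-1$ parallel searches gives the stated $O(d\max_i(\gamma_i/\delta + \log(d\delta/(\epsilon\min(\sigma,1)))))$ bound.

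The main obstacle is the conditioning step: one must argue that the $d-1$ boundary-probes $\mathbf{p}_i$ inherit enough directional diversity from the seeds to make $\mathbf{M}$ well-conditioned, with the quantitative loss captured by $\sigma$. This is nontrivial because each $\mathbf{p}_i$ depends on the drift direction from $\mathbf{x}_i$, which could in principle align across seeds even when the seeds themselves are spread. The paper's precise definition of $\sigma$ presumably incorporates this by measuring diversity in a way stable under the drift-and-bisect map; the $\min(\sigma,1)$ inside the logarithm suggests the analysis is split into the $\sigma\geq 1$ and $\sigma<1$ regimes so the log argument stays at least $1$. Handling these regimes cleanly, and verifying that bisection midpoints remain in $\X$ throughout, are the remaining technical details.
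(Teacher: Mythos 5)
Your plan follows essentially the same route as the paper's proof: a per-direction two-phase analysis (a linear drift phase costing $O(\gamma_i/\delta)$ queries, followed by a bisection phase in which the distance to the oppositely labeled convex hull halves each round, costing $O(\log(\delta/\eta))$ queries), and then a recovery step in which the $(d-1)$th singular value $\sigma$ of the matrix of near-boundary points controls how tightly $\mathbf{w}_*$ is pinned down, with the accuracy budget $\eta$ set to roughly $\epsilon\min(\sigma,1)/\sqrt{d}$ (the paper uses $\epsilon^2$ there because its final bound goes through $\norm{\widehat{\mathbf{w}}-\mathbf{w}_*}_2^2 = 2(1-\widehat{\mathbf{w}}^\top\mathbf{w}_*)$, which only changes constants inside the logarithm). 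The conditioning worry you flag is resolved exactly as you guess: $\sigma$ is defined as a minimum over \emph{all} candidate boundary points in balls of radius $\gamma_i+2\delta+\epsilon$ around the seeds, and a separate locality step shows each search never strays farther than $\gamma_i+2\delta$ from its seed, so no diversity needs to be propagated through the drift-and-bisect dynamics.
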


The sample complexity bound compares favorably to known optimal sample complexities in this setting \cite{balcan2013active}: $O(d/\epsilon)$ and $O(d\log(1/\epsilon))$ for passive and active learning, respectively. In particular, the SEALS bound has the optimal logarithmic dependence on $\epsilon$. 

The parameter $\gamma_i$ is an upper bound on the distance of $\mathbf{x}_i$ to the true decision boundary.  Let $B_i$ denote the ball of radius $\gamma_i+2\delta+\epsilon$ centered at $\mathbf{x}_i$, where $\epsilon > 0$ is fixed.  The true decision boundary must intersect $B_i$.  Let $\Z_i \subset B_i$ denote the set of points in $B_i$ that are within $\epsilon$ of the boundary.
The constant $\sigma$ is a measure of the diversity of the seed examples, defined as: 
\begin{align*}
    \sigma & =   \min_{\mathbf{z}_i \in \Z_i : i\in \{1,\ldots, d-1\}} \sigma_{d-1}([\mathbf{z}_1  \ \cdots, \ \mathbf{z}_{d-1} ]) \nonumber 
\end{align*} 
where $\sigma_{d-1}(\cdot)$ is the $(d-1)$th singular value of the matrix.
If the $\Z_i$ sets are well separated and if the centers form a well-conditioned basis for a ($d-1$)-dimensional subspace in $\R^d$, then $\sigma$ is a reasonable constant. 

Intuitively, the algorithm has two phases: a slow phase and a fast phase.
During the slow phase, the algorithm queries points that slowly approach the true decision boundary at a rate $\delta$.
After at most $O(\max_i d\frac{\gamma_i}{\delta})$ queries, the algorithm finds $d-1$ points that are within $\delta$ of the true decision boundary and enters the fast phase.
Since the algorithm has already found points that are close to the decision boundary, the constraints of the nearest neighbor graph essentially do not encumber the algorithm, enabling it to home in on the true decision boundary at an exponential rate of $O(d\log(\frac{ d \delta }{\epsilon \sigma })))$.

\section{Experiments}\label{sec:results}

We applied \flood to three selection strategies (MaxEnt, MLP, and ID) and performed active learning and search on three separate datasets: ImageNet~\citep{russakovsky2015imagenet}, OpenImages~\citep{kuznetsova2020openimages}, and a de-identified and aggregated dataset of 10 billion publicly shared images (Table~\ref{table:datasets}).
Section~\ref{sec:experimental_setup} details the experimental setup used for both the baselines that run over all of the data (*-All) and our proposed method that restricts the candidate pool (*-\flood).
Sections~\ref{results:imagenet},~\ref{results:openimages}, and~\ref{results:10bimages} provide dataset-specific details and present the active learning and search results for ImageNet, OpenImages, and the proprietary dataset, respectively.
We also evaluated \flood with self-supervised embeddings using SimCLR~\citep{chen2020simple} for ImageNet and using SentenceBERT~\citep{reimers2019sentence} for Goodreads spoiler detection in the Appendix.
\begin{table*}[t]
\centering
\caption{Summary of datasets}
\resizebox{\textwidth}{!}{
\begin{tabular}{ccccc}
\toprule
     & \makecell{\bfseries Number of \\ \bfseries Concepts ($|R|$)} & \makecell{\bfseries Embedding \\ \bfseries Model ($G_z$)} & \makecell{\bfseries Number of \\ \bfseries Examples ($|U|$)} & \makecell{\bfseries Percentage \\ \bfseries Positive} \\
\hline
ImageNet~\citep{russakovsky2015imagenet} & 450  & \makecell{ResNet-50~\citep{he2016deep} \\ (500 classes)} & 639,906 &  0.114-0.203\% \\
\hline
OpenImages~\citep{kuznetsova2020openimages} & 153  & \makecell{ResNet-50~\citep{he2016deep} \\ (1000 classes)} & 6,816,296 &  0.002-0.088\% \\
\hline
\makecell{10 billion (10B) images (proprietary)} & 8 & \makecell{ResNet-50~\citep{he2016deep} \\ (1000 classes)} & 10,094,719,767 &  - \\
\bottomrule
\end{tabular}
}
\label{table:datasets}
\end{table*}

\subsection{Experimental Setup}\label{sec:experimental_setup}

We followed the same general procedure for both active learning and search across all datasets and selection strategies.
Each experiment started with 5 positive examples because finding positive examples for rare concepts is challenging a priori.
Negative examples were randomly selected at a ratio of 19 negative examples to every positive example to form a seed set $L_r^0$ with 5 positives and 95 negatives.
The slightly higher number of negatives in the initial seed set improved average precision on the validation set across the datasets.
The batch size $b$ for each selection round was the same as the size of the initial seed set (i.e., 100 examples), and the max labeling budget $T$ was 2,000 examples.

As the binary classifier for each concept $A_r$, we used logistic regression trained on the embedded examples.
For active learning, we calculated average precision on the test data for each concept after each selection round.
For active search, we count the number of positive examples labeled so far.
We take the mean average precision (mAP) and number of positives across concepts, run each experiment 5 times, and report the mean (dotted line) and standard deviation (shaded area around the line).

For similarity search, we used locality-sensitive hashing (LSH)~\citep{charikar2002similarity} implemented in Faiss~\citep{JDH17} with Euclidean distance for all datasets aside from the 10 billion images dataset.
This simplified our implementation, so the index could be created quickly and independently, allowing experiments to run in parallel trivially.
However, retrieval times for this approach were not as fast as \citet{JDH17} and made up a larger part of the overall active learning loop.
In practice, the search index can be heavily optimized and tuned for the specific data distribution, leading to computational savings closer to the improvements described in Section~\ref{sec:sseals} and differences in the ``Selection'' portion of the runtimes in Table~\ref{table:wallclock}.
$k$ was 100 for ImageNet and OpenImages unless specified otherwise, while the experiments on the 10 billion images dataset used a $k$ of 1,000 or 10,000 to compensate for the size.

We split the data, selected concepts, and created embeddings as detailed below and summarized in Table~\ref{table:datasets}.
In the Appendix, we varied the embedding models (\ref{supp:varying_gz}), the value of $k$ for \flood (\ref{supp:varying_k}), and the number of initial positives and negatives (\ref{supp:varying_positives}) to test how robust \flood was to our choices above.
Across all values, \flood performed similarly to the results presented here.

\begin{table*}[]
\centering
\caption{Wall clock runtimes for varying selection strategies on ImageNet and OpenImages. The last 3 columns break the total time down into 1) the time to apply the selection strategy to the candidate pool, 2) the time to find the $k$ nearest neighbors ($k$-NN) for the newly labeled examples, and 3) the time to train logistic regression on the currently labeled examples. Despite using a simple LSH search index, \flood substantially improved runtimes across datasets and strategies.}
\resizebox{\textwidth}{!}{
\begin{tabular}{cclccccccc}
\toprule
\multicolumn{6}{c}{} & & \multicolumn{3}{c}{\bfseries Time Breakdown (seconds)} \\
\bfseries Dataset &  \bfseries Budget $T$ &    \bfseries Strategy $\phi$ &      \bfseries mAP/AUC &  \makecell{\bfseries Recall \\ \bfseries (\%)} & \makecell{\bfseries Pool Size \\ \bfseries (\%)} & \makecell{\bfseries Total Time \\ \bfseries (seconds)} & Selection & $k$-NN  & Training \\
\hline

 ImageNet &    2,000 &    MaxEnt-All &  $0.695$ &  $57.2$ &  $100.0$ &      45.23 &         44.65 &        - &       0.59 \\
 &     &  MaxEnt-SEALS &  $0.695$ &  $56.9$ &    $6.6$ &      12.49 &          1.73 &    10.27 &       0.50 \\
 &     &       MLP-All &  $0.693$ &  $74.5$ &  $100.0$ &      43.32 &         42.75 &        - &       0.57 \\
 &     &     MLP-SEALS &  $0.692$ &  $74.2$ &    $6.0$ &      12.03 &          1.48 &     9.94 &       0.63 \\
 &     &        ID-All &  $0.688$ &  $50.8$ &  $100.0$ &    4654.59 &       4653.55 &        - &       1.05 \\
 &     &      ID-SEALS &  $0.688$ &  $50.9$ &    $6.9$ &     104.57 &         94.22 &     9.76 &       0.60 \\
\cline{2-10}
 &    1,000 &        ID-All &  $0.646$ &  $26.3$ &  $100.0$ &    4620.04 &       4619.78 &        - &       0.28 \\
 &     &      ID-SEALS &  $0.654$ &  $27.8$ &    $4.7$ &      36.66 &         31.95 &     4.56 &       0.17 \\
\cline{2-10}
 &     500 &        ID-All &  $0.586$ &  $12.5$ &  $100.0$ &    4602.64 &       4602.57 &        - &       0.09 \\
 &      &      ID-SEALS &  $0.601$ &  $13.5$ &    $3.2$ &       9.75 &          7.75 &     1.95 &       0.05 \\
\cline{2-10}
 &     200 &        ID-All &  $0.506$ &   $4.7$ &  $100.0$ &    4588.76 &       4588.73 &        - &       0.04 \\
 &      &      ID-SEALS &  $0.511$ &   $4.8$ &    $2.0$ &       1.53 &          1.03 &     0.49 &       0.02 \\
\hline
\hline
 OpenImages &    2,000 &    MaxEnt-All &  $0.399$ &  $35.0$ &  $100.0$ &     295.20 &        294.78 &        - &       0.42 \\
 &     &  MaxEnt-SEALS &  $0.386$ &  $35.1$ &    $0.8$ &      80.61 &          1.56 &    78.63 &       0.43 \\
 &     &       MLP-All &  $0.398$ &  $35.1$ &  $100.0$ &     285.27 &        284.88 &        - &       0.40 \\
 &     &     MLP-SEALS &  $0.386$ &  $35.1$ &    $0.8$ &      82.18 &          1.48 &    80.27 &       0.44 \\
 &     &      ID-All &  - &  - &    100.0 &       $>$24 hours &          $>$24 hours &     - &        - \\
 &     &      ID-SEALS &  $0.359$ &  $29.3$ &    $0.9$ &     129.79 &         48.98 &    80.40 &       0.41 \\
\bottomrule
\end{tabular}
}
\label{table:wallclock}
\end{table*}

\subsection{ImageNet}\label{results:imagenet}
ImageNet~\citep{russakovsky2015imagenet} has 1.28 million training images spread over 1,000 classes.
To simulate rare concepts, we split the data in half, using 500 classes to train the feature extractor $G_z$ and treating the other 500 classes as unseen concepts.
For $G_z$, we used ResNet-50 but added a bottleneck layer before the final output to reduce the dimension of the embeddings to 256.
We kept all of the other hyperparameters the same as in~\citet{he2016deep}.
We extracted features from the bottleneck layer and applied $l^2$ normalization. In total, the 500 unseen concepts had 639,906 training examples that served as the unlabeled pool.
We used 50 concepts for validation, leaving the remaining 450 concepts for our final experiments.
The number of examples for each concept varied slightly, ranging from 0.114-0.203\% of $|U|$.
The 50,000 validation images were used as the test set.

\textbf{Active learning.}
With a labeling budget of 2,000 examples per concept (\textasciitilde 0.31\% of $|U|$), all baseline and \flood approaches ($k=100$) were within 0.011 mAP of the 0.699 mAP achieved with full supervision, as shown in Figure~\ref{fig:imagenet}.
In contrast, random sampling (Random-All) only achieved 0.436 mAP.
MLP-All, MaxEnt-All, and ID-All achieved mAPs of 0.693, 0.695, and 0.688, respectively, while the \flood equivalents were all within 0.001 mAP at 0.692, 0.695, and 0.688 respectively and considered less than 7\% of the unlabeled data.
The resulting selection runtime for MLP-\flood and MaxEnt-\flood dropped by over 25$\times$, leading to a 3.6$\times$ speed-up overall (Table~\ref{table:wallclock}).
The speed-up was even larger for ID-\flood, ranging from about 45$\times$ at 2,000 labels to 3000$\times$ at 200 labels. 
Even at a per-class level, the results were highly correlated with Pearson correlation coefficients of 0.9998 or more (Figure~\ref{fig:imagenet_per_class_ap} in the Appendix).
The reduced skew from the initial seed set only accounted for a small part of the improvement, as Random-\flood achieved an mAP of only 0.498.

\textbf{Active search.}
As expected, MLP-All and MLP-\flood significantly outperformed all other selection strategies for active search.
At 2,000 labeled examples per concept, both approaches recalled over 74\% of the positive examples for each concept at 74.5\% and 74.2\% recall, respectively.
MaxEnt-All and MaxEnt-\flood had a similar gap of 0.3\%, labeling 57.2\% and 56.9\% of positive examples, while ID-All and ID-\flood were even closer with a gap of only 0.1\% (50.8\% vs. 50.9\%).
Nearly all of the gains in recall are due to the selection strategies rather than the reduced skew in the initial seed, as Random-\flood increased the recall by less than 1.0\% over Random-All.

\begin{figure}[]
  \centering
  \begin{subfigure}{0.95\textwidth}
    \includegraphics[width=\columnwidth]{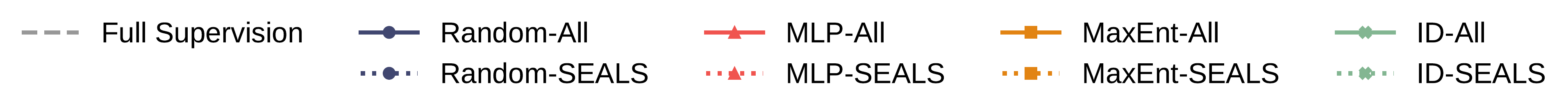}
  \end{subfigure}

  \begin{subfigure}{0.95\textwidth}
    \includegraphics[width=\columnwidth]{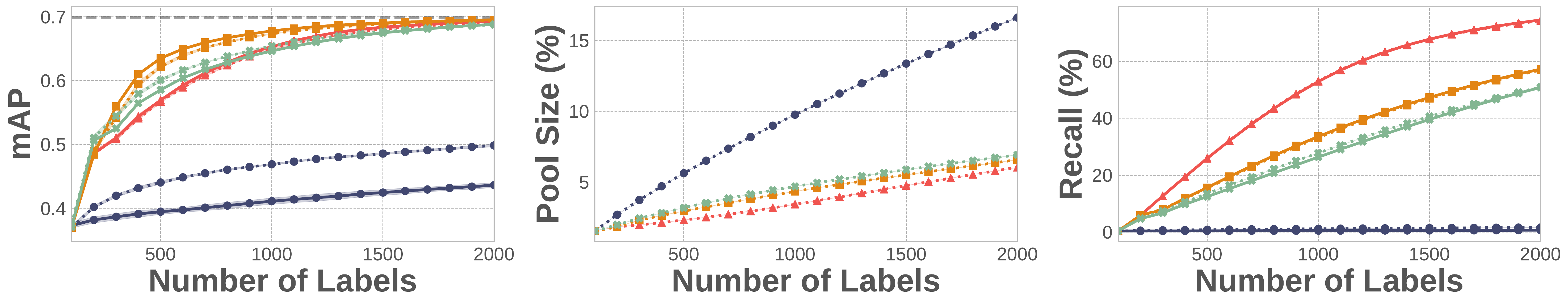}
    \caption{ImageNet ($|U|$=639,906)}
    \label{fig:imagenet}
  \end{subfigure}

  \begin{subfigure}{0.95\textwidth}
    \includegraphics[width=\columnwidth]{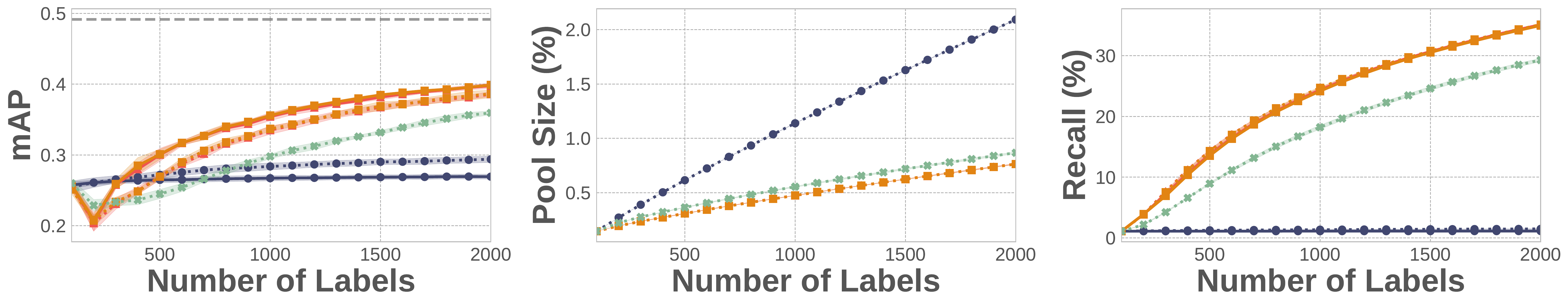}
    \caption{OpenImages ($|U|$=6,816,296)}
    \label{fig:openimages}
    \end{subfigure}

\caption{
Active learning and search on ImageNet (top) and OpenImages (bottom).
Across datasets and strategies, \flood with $k=100$ performed similarly to the baseline approach in terms of both the error the model achieved for active learning (left) and the recall of positive examples for active search (right), while only considering a fraction of the data $U$ (middle).
}
  \label{fig:active_learning_and_search}
\end{figure}

\subsection{OpenImages}\label{results:openimages}
OpenImages~\citep{kuznetsova2020openimages} has 7.34 million images from Flickr.
However, only 6.82 million images were still available in the training set at the time of writing.
The human-verified labels provide partial coverage for over 19,958 classes.
Like~\citet{kuznetsova2020openimages}, we treat examples that are not positively labeled for a given class as negative examples.
This label noise makes the task much more challenging, but all of the selection strategies adjust after a few rounds.
As a feature extractor, we took ResNet-50 pre-trained on all of ImageNet and used the $l^2$ normalized output from the bottleneck layer.
As rare concepts, we randomly selected 200 classes with between 100 to 6,817 positive training examples.
We reviewed the selected classes and removed 47 classes that overlapped with ImageNet.
The remaining 153 classes appeared in 0.002-0.088\% of the data.
We used the predefined test split for evaluation.

\textbf{Active learning.}
At 2,000 labels per concept (\textasciitilde 0.029\% of $|U|$), MaxEnt-All and MLP-All achieved 0.399 and 0.398 mAP, respectively, while MaxEnt-\flood and MLP-\flood both achieved 0.386 mAP and considered less than 1\% of the data (Figure~\ref{fig:openimages}).
This sped-up the selection time by over 180$\times$ and the total time by over 3$\times$, as shown in Table~\ref{table:wallclock}.
Increasing $k$ to 1,000 significantly narrowed this gap for MaxEnt-\flood and MLP-\flood, improving mAP to 0.395, as shown in the Appendix (Figure~\ref{fig:openimages_ks}).
Moreover, the reduced candidate pool from \flood made ID tractable on OpenImages, whereas ID-All ran for over 24 hours in wall-clock time without completing a single round (Table~\ref{table:wallclock}).

\textbf{Active search.}
The gap between the baselines and \flood was even closer on OpenImages than on ImageNet despite considering a much smaller fraction of the overall unlabeled pool.
MLP-All, MLP-\flood, MaxEnt-\flood, and MaxEnt-All were all within 0.1\% with \textasciitilde35\% recall at 2,000 labels per concept.
ID-\flood had a recall of 29.3\% but scaled nearly as well as the linear approaches.

\subsection{10 Billion Images}\label{results:10bimages}
10 billion (10B) publicly shared images from a large internet company were used to test \flood' scalability.
We used the same pre-trained ResNet-50 model as the OpenImages experiments.
We also selected eight additional classes from OpenImages as rare concepts: `rat,' `sushi,' `bowling,' `beach,' `hawk,' `cupcake,' and `crowd.'
This allowed us to use the test split from OpenImages for evaluation.
Unlike the other datasets, we hired annotators to label images as they were selected and used a proprietary index to achieve low latency retrieval times to capture a real-world setting.

\begin{figure}[]
  \centering
  \begin{subfigure}{0.95\textwidth}
    \includegraphics[width=\columnwidth]{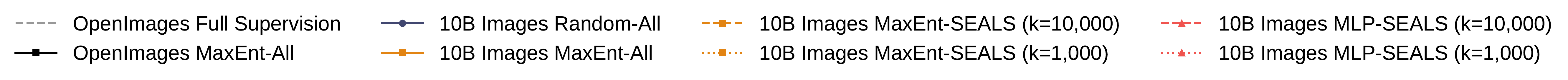}
  \end{subfigure}

  \begin{subfigure}{0.95\textwidth}
    \includegraphics[width=\columnwidth]{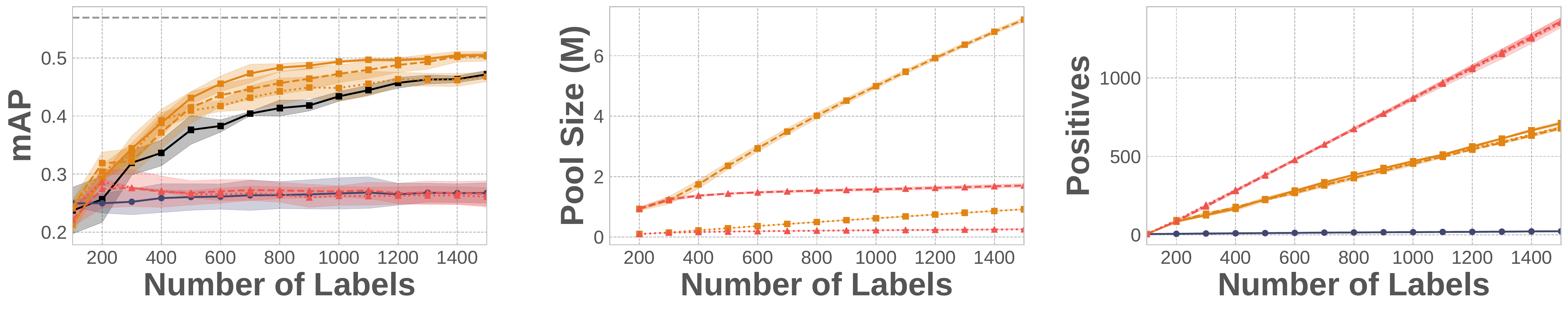}
    \label{fig:10bimages}
  \end{subfigure}

\caption{
Active learning and search on a de-identified and aggregated dataset of 10 billion publicly shared images.
Across strategies, \flood with $k=10,000$ performed similarly to the baseline approach in terms of both the error the model achieved for active learning (left) and the recall of positive examples for active search (right), while only considering a fraction of the data $U$ (middle).
}
  \label{fig:active_learning_and_search_10b}
\end{figure}

\textbf{Active learning.}
Despite the limited pool size, \flood performed similarly to the baseline approaches that scanned all 10 billion images.
At a budget of 1,500 labels, MaxEnt-\flood ($k$=10K) achieved a similar mAP to the baseline (0.504 vs. 0.508 mAP), while considering only about 0.1\% of the data (Figure~\ref{fig:active_learning_and_search_10b}).
This reduction allowed MaxEnt-\flood to finish selection rounds in just seconds on a single 24-core machine, while MaxEnt-All took several minutes on a cluster with tens of thousands of cores.
Unlike the ImageNet and OpenImages experiments, MLP-\flood performed poorly at this scale because there are likely many redundant or near-duplicate examples of little value.

\textbf{Active search.}
SEALS performed well despite considering less than 0.1\% of the data and collected two orders of magnitude more positive examples than random sampling.

\section{Discussion and Conclusion}~\label{sec:discussion}

\begin{figure}[t]
  \centering
  \begin{subfigure}{.95\textwidth}
      \begin{subfigure}{.49\columnwidth}
          \includegraphics[width=0.95\columnwidth]{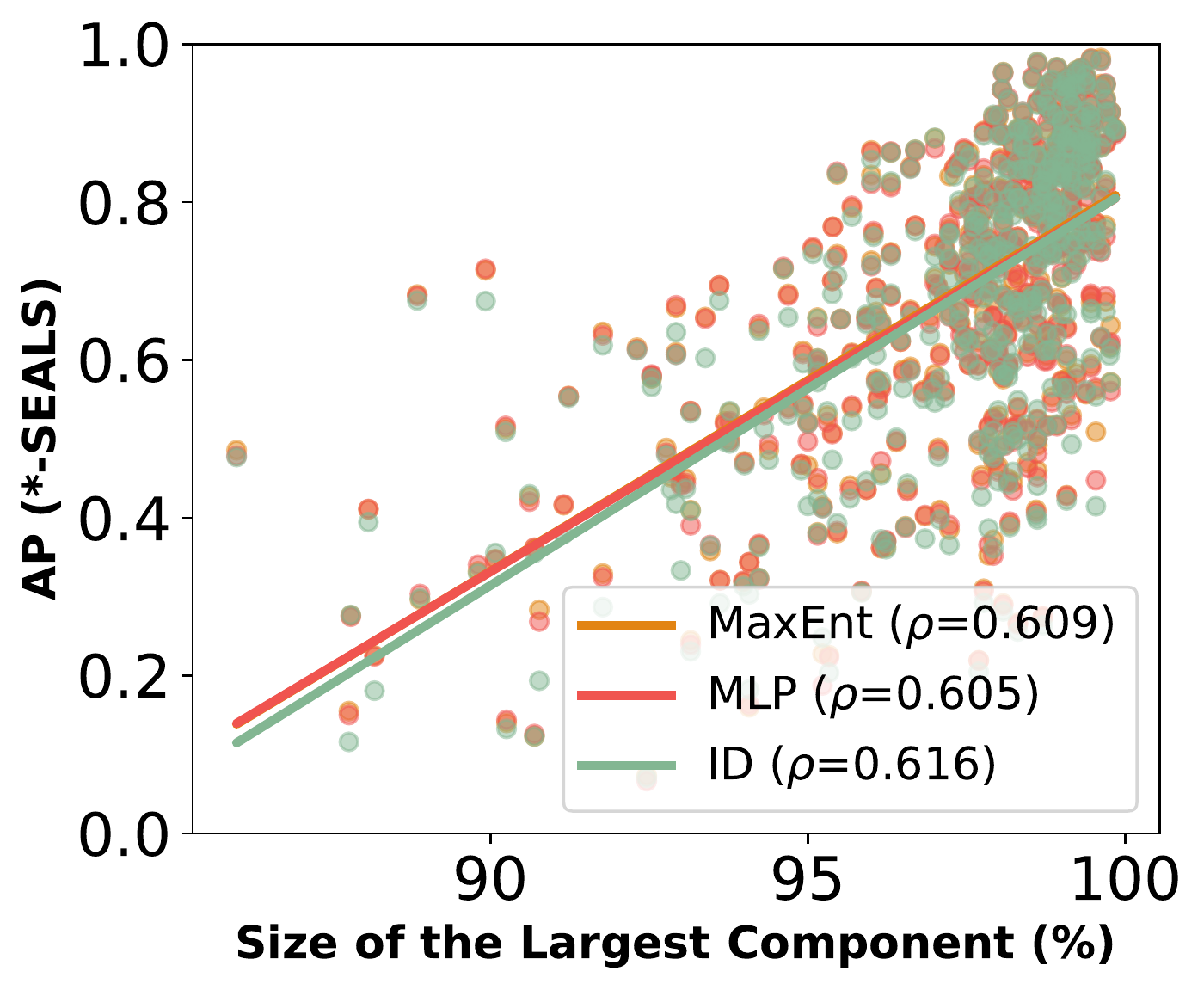}
      \end{subfigure}
      \hfill
      \begin{subfigure}{.49\columnwidth}
          \includegraphics[width=0.95\columnwidth]{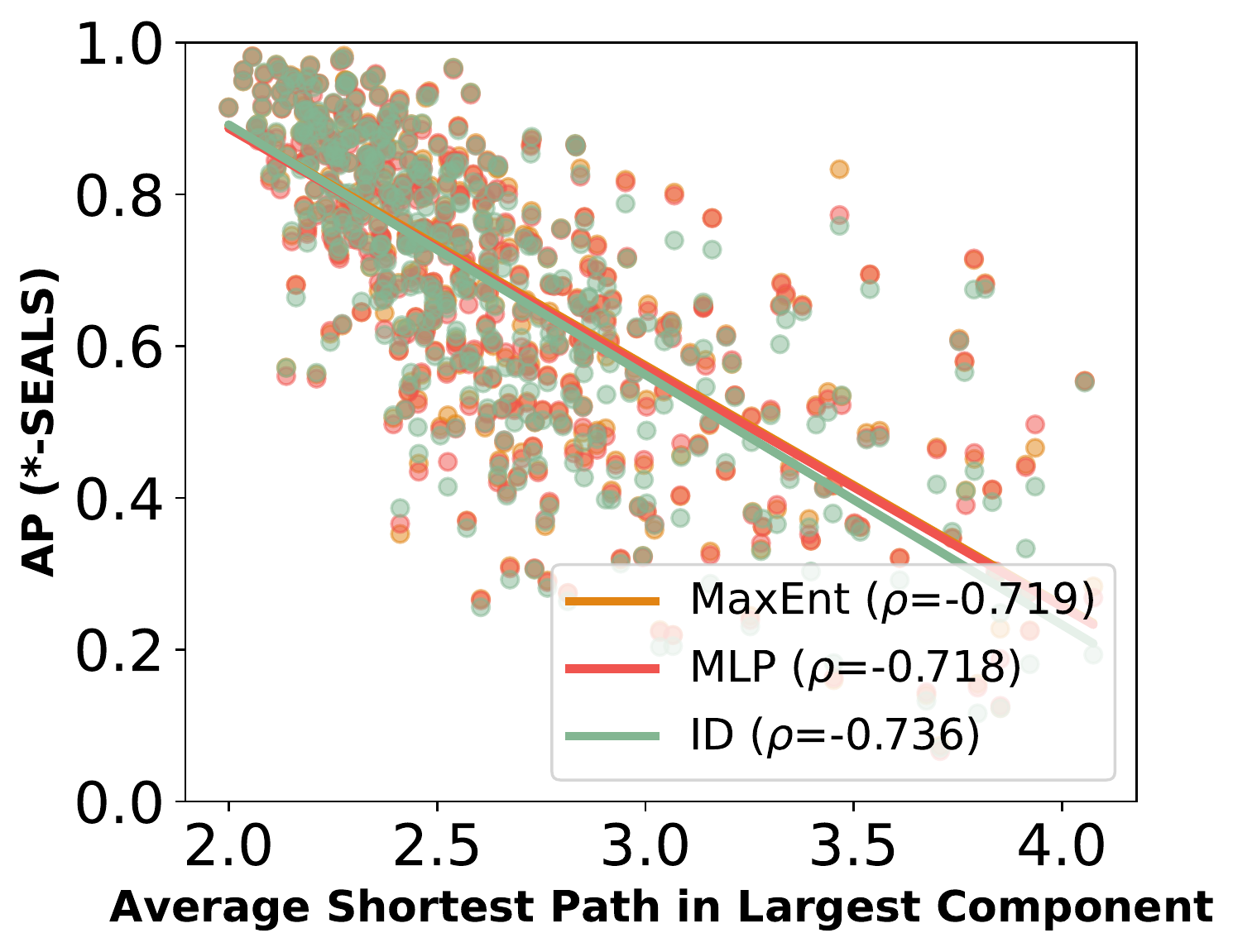}
      \end{subfigure}
      \caption{ImageNet}
      \label{fig:imagenet_per_class_latent_structure}
  \end{subfigure}\vspace{-1mm}
  \begin{subfigure}{0.95\textwidth}
      \begin{subfigure}{.49\columnwidth}
          \includegraphics[width=0.95\columnwidth]{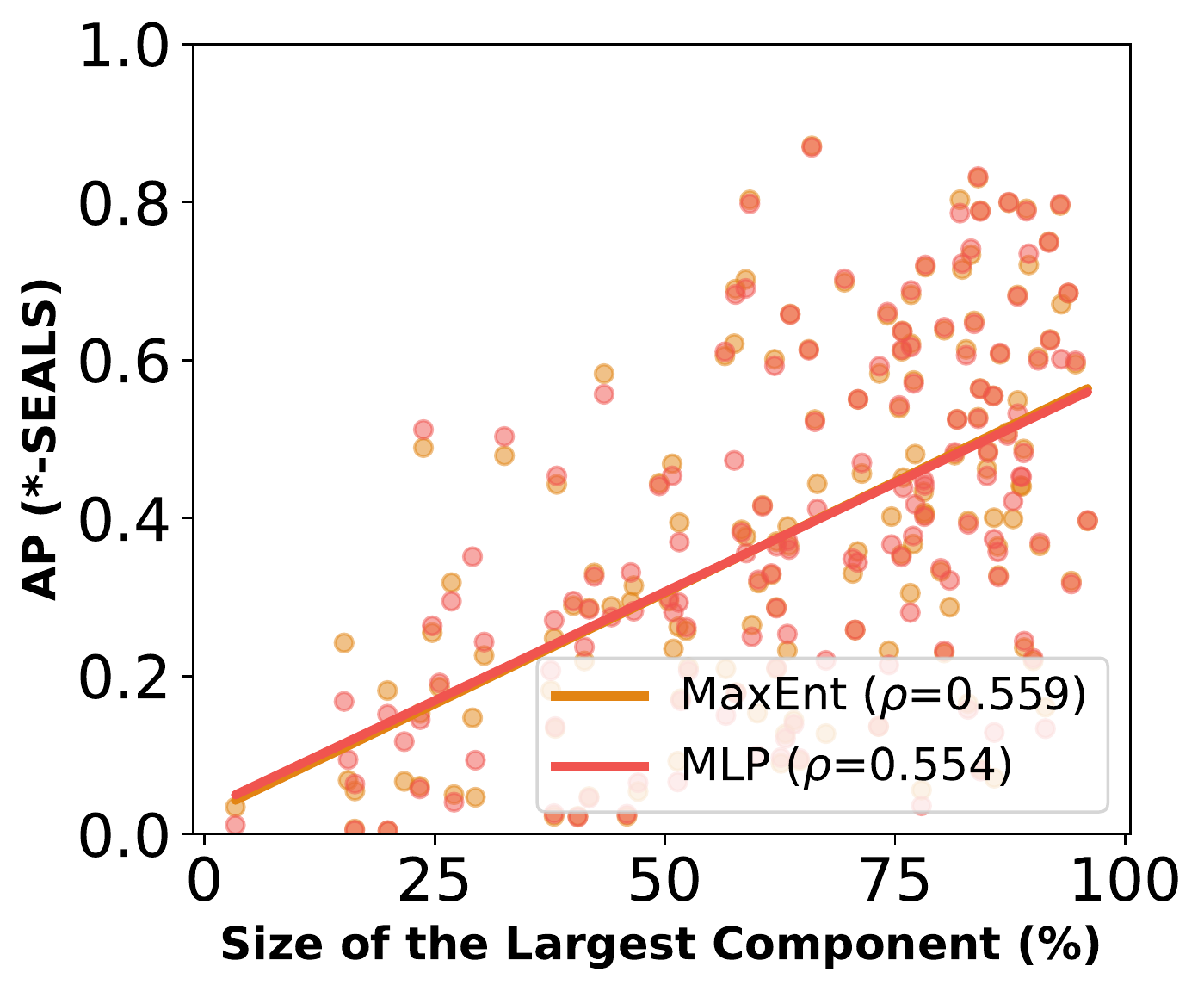}
      \end{subfigure}
      \hfill
      \begin{subfigure}{.49\columnwidth}
          \includegraphics[width=0.95\columnwidth]{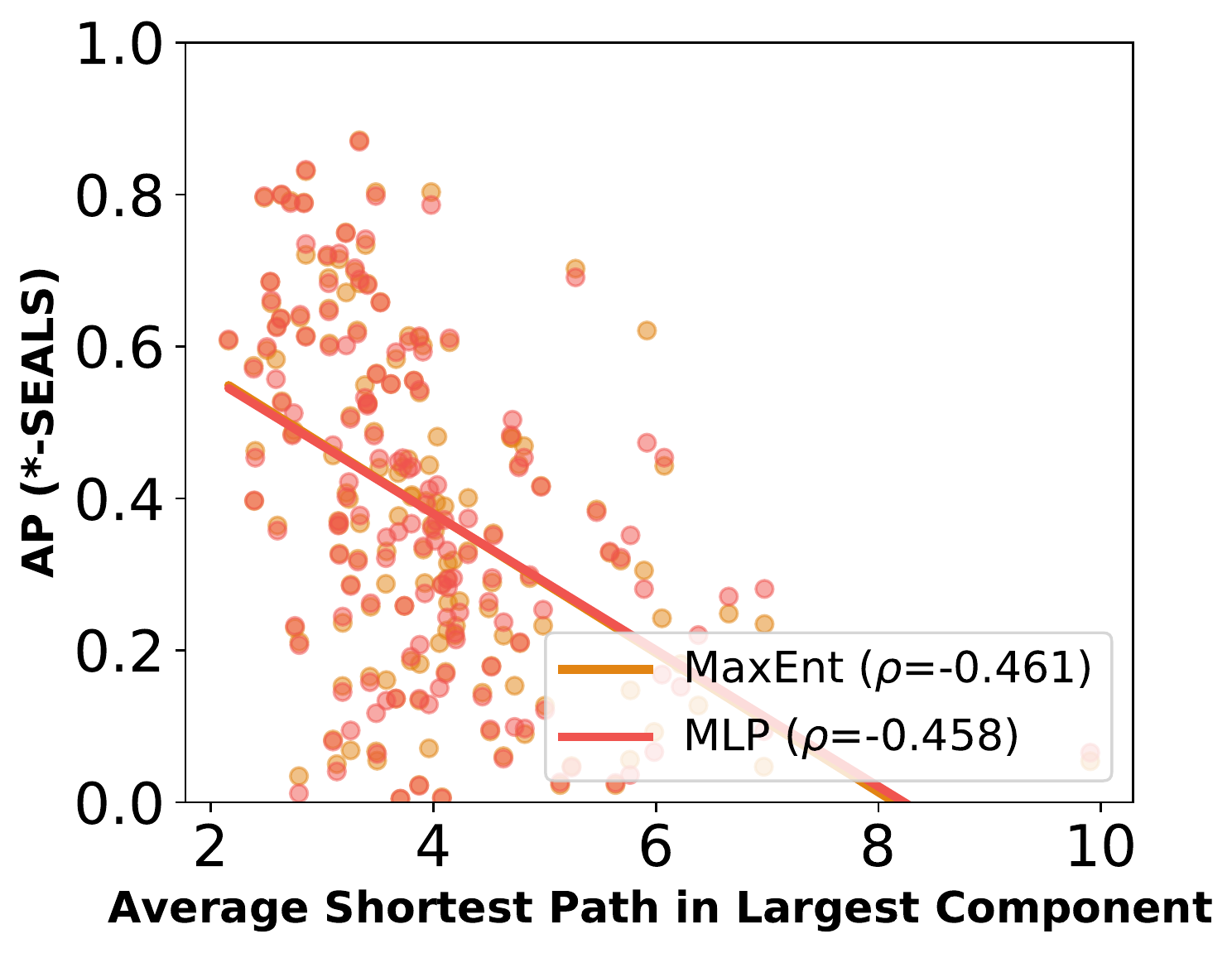}
      \end{subfigure}
      \caption{OpenImages}
      \label{fig:openimages_per_class_latent_structure}
  \end{subfigure}\vspace{-1mm}
\caption{
    Correlations between AP and measurements of the latent structure of unseen concepts. \flood ($k=100$) achieved higher APs for classes that formed larger connected components (left) and had shorter paths between examples (right) in ImageNet (top) and OpenImages (bottom).
}\vspace{-2mm}
  \label{fig:per_class_latent_structure}
\end{figure}

\textbf{Latent structure of unseen concepts.}
To better understand when \flood works, we analyzed the relationship between average precision and the structure of the nearest neighbor graph across concepts.
Overall, \flood performed better for concepts that formed larger connected components and had shorter paths between examples, as shown in Figure~\ref{fig:per_class_latent_structure}.
For most concepts in ImageNet, the largest connected component contained the majority of examples, and the paths between examples were very short.
These tight clusters explain why so few examples were needed to learn accurate binary concept classifiers, as shown in Section~\ref{sec:results}, and why \flood recovered \textasciitilde 74\% of positive examples on average while only labeling \textasciitilde 0.31\% of the data.
If we constructed the candidate pool by randomly selecting examples as in~\citep{ertekin2007learning}, mAP and recall would drop for all strategies (Section~\ref{appendix:random_pool} in the Appendix).
The concepts were so rare that the randomly chosen examples were not close to the decision boundary.
For OpenImages, rare concepts were more fragmented, but each component was fairly tight, leading to short paths between examples.
On a per-class level, concepts like `monster truck' and `blackberry' performed much better than generic concepts like `electric blue' and `meal' that were more scattered (Sections~\ref{appendix:openimages_per_class_active_learning} and~\ref{appendix:openimages_per_class_active_search} in the Appendix).
This fragmentation partly explains the gap between \flood and the baselines in Section~\ref{sec:results}, and why increasing $k$ closed it.
However, even for small values of $k$, \flood led to significant gains over random sampling, as shown in Figures~\ref{fig:imagenet_ks} and~\ref{fig:openimages_ks} in the Appendix.


\textbf{Conclusion.}
We introduced \flood as a simple approach to make the selection rounds of active learning and search methods scale sublinearly with the unlabeled data.
Instead of scanning over all of the data, \flood restricted the candidate pool to the nearest neighbors of the labeled set.
Despite this limited pool, we found that \flood achieved similar average precision and recall while improving computational efficiency by up to three orders of magnitude, enabling \textit{web-scale active learning}.



\bibliographystyle{plainnat}
\bibliography{references}

\clearpage

\section{Appendix}

\subsection{Broader Impact}\label{sec:broader_impact}
Our work attacks both the labeling and computational costs of machine learning and will hopefully make machine learning much more affordable.
Instead of being limited to a small number of large teams and organizations with the budget to label data and the computational resources to train on it, \flood dramatically reduces the barrier to machine learning, enabling small teams or individuals to build accurate classifiers.
\flood does, however, introduce another system component, a similarity search index, which adds some additional engineering complexity to build, tune, and maintain.
Fortunately, several highly optimized implementations like Annoy~\footnote{\url{https://github.com/spotify/annoy}} and Faiss~\footnote{\url{https://github.com/facebookresearch/faiss}} work reasonably well out of the box.
There is a risk that poor embeddings will lead to disjointed components for a given concept.
This failure mode may prevent \flood from reaching all fragments of a concept or take a longer time to do so, as mentioned in Section~\ref{sec:discussion}.
However, active learning and search methods often involve humans in the loop, which could detect biases and correct them by adding more examples.

\subsection{Proof of \flood under idealized conditions}\label{supp:proof}

To begin, we introduce the mathematical setting. Assume the input space is a convex set $\X \subset \R^d$ and that the optimal linear classifier $w_* \in \R^d$ satisfies $\norm{w_*}_2 =1$. We assume the \emph{homogenous setting} where $\H_* = \{x \in \R^d : w_*^\t x = 0\}$ is the hyperplane defining the optimal classification. For each $x \in \X$, let $y_x$ denote its associated label and assume that $y_x = 1$ if $w_*^\t x \geq 0$ and $y_x = -1$ if $w_*^\t x < 0$. Define $\X^+ = \{x \in \X : w_*^\t x \geq 0\}$ and $\X^- = \{x \in \X : w_*^\t x < 0\}$. Let $\delta > 0$ and let $\G = (\X,E)$ be a nearest-neighbor graph where we assume that for each $x,x^\prime \in \X$, if $\norm{x-x^\prime}_2 \leq \delta$, then $(x,x^\prime) \in E$.

Our analysis makes two key assumptions.
First, we assume that the classes are linearly separable. Since the SEALS algorithm uses feature embedding, often extracted from a deep neural network, the classes are likely to be almost linearly separable in many applications. 
Second, we assume a \emph{membership query model} where the algorithm can query any point belonging to the input space $\X$. Since SEALS is typically applied to datasets with billions of examples, this query model is a reasonable approximation of practice. It should be possible to extend our analysis to the pool-based active learning setting. Suppose there are enough points so that for every possible direction there is at least one point in every $\delta$-nearest neighborhood with a component of size $c\delta$, for some $0<c<1$, in that direction. Then we believe that a bound like that in Theorem~\ref{thm:main_thm} should hold by replacing $\delta$ with $c\delta$. Relaxing these two assumptions is left to future theory work.



The main goal of our theory is to quantify the effect of the nearest neighbor restriction. 
Our analysis considers the modified SEALS procedure described in Algorithm \ref{alg:margin_multiple_seeds}. It differs from original SEALS in some minor ways that make it more amenable to analysis, but crucially it too is based on nearest neighbor graph search.  First, we introduce some notation. We let $\data_r \subset \X \times \{-1,1\}$ consist of the examples and their labels queried until round $r$. We let  $\data_r^1 = \{x \in \data_1 : (x,1) \in \data_r \}  $ denote the positive examples queried until round $r$ and $\data_r^{-1} = \{x \in \data_1 : (x,-1) \in \data_r \}$ the negative examples queried until round $r$. Let $A,B \subset \R^d$. The subroutine $\text{MaxMarginSeparator}(A, B)$ finds a maximum margin separator of $A$ and $B$ and returns the hyperplane $H$ and margin~$\gamma$: $(H,\gamma) \longleftarrow \text{MaxMarginSeparator}(A, B)$.  This is a support vector machine.

Now, we present Algorithm \ref{alg:margin_multiple_seeds}. We suppose that the algorithm has an initial set of seed points $ \{x_{1,0},\ldots, x_{d-1,0}\}$, with which it initiates $d-1$ nearest neighbor searches. We note that the algorithm could perform $n$ nearest neighbor searches and the analysis would still go through provided that $d-1 \leq n = O(d)$, and that the initial set of labeled points $\data_r$ may be larger than $d-1$. At each round $r$, Algorithm \ref{alg:margin_multiple_seeds} queries one unlabeled neighbor from each set $C_{i,r}$, $i=1,\dots,d-1$ (at $r=0$ these sets are the seed points themselves). The decision rule is as follows: first, for each $x \in C_{i,r}$, the algorithm computes a max-margin separating hyperplane $H_x$ separating $x$ from the examples with opposing labels. Second, the algorithm selects the example with the smallest margin $\bar{x}_{i,r}$ and queries a neighbor of $\bar{x}_{i,r}$ that is closest to $H_{\bar{x}_{i,r}}$. This is similar to using \emph{MaxEnt} uncertainty sampling in SEALS.
In Algorithm \ref{alg:margin_multiple_seeds}, $\argmin_{x^\prime : (\bar{x}_{i,r},x^\prime) \in E}\, \dist(x^\prime,H_{\bar{x}_{i,r},r})$ may contain several examples if $\dist(\bar{x}_{i,r},H_{\bar{x}_{i,r},r}) \leq  \delta$. In this case, we tiebreak by letting $\tilde{x}_{i,r}$ be the projection of $\bar{x}_{i,r}$ onto $H_{\bar{x}_{i,r},r}$.


\begin{algorithm}[H]
 \caption{Modified SEALS}
 \label{alg:margin_multiple_seeds}
 \begin{algorithmic}[1]
\State \textbf{Input: } seed labeled examples $\data_{1}  \subset \X \times \{-1,1\}$    
\State $r=1$  
\State Initialize the clusters $C_{i,r} = \{x_{i,0}\}$ for $i =1,\ldots, d-1$ 
\For{$r=1,2,\ldots$}
\For{$i=1,\ldots,d-1$}
\State $(H_{x,r},\gamma_{x,r}) =\text{MaxMarginSeparator}(\data_{r}^{-y_x}, x)$ for all $x \in C_{i,r} $
\State Let $\bar{x}_{i,r} \in \argmin_{x \in C_{i,r}} \gamma_{x,r}$ and $\widetilde{x}_{i,r} \in  \argmin_{x^\prime : (\bar{x}_{i,r},x^\prime) \in E} \dist(x^\prime,H_{\bar{x}_{i,r},r})$
\State Query $\widetilde{x}_{i,r}$ 
\State Update $\data_{r+1} = \data_r \cup \{(\widetilde{x}_{i,r}, y_{\widetilde{x}_{i,r}})\}$ and $C_{i,r+1} = C_{i,r} \cup \{\widetilde{x}_{i,r}\}$ 
\EndFor
\State Fit a homogenous max-margin separator with normal vector $\widehat{w}_{r+1}$ to $\data_{r+1}$\;
\EndFor
 \end{algorithmic}
\end{algorithm}


We actually restate the Theorem~\ref{thm:main_thm} a bit more formally in Theorem~\ref{thm:main_thm_supp}, below.

\begin{theorem}\label{thm:main_thm_supp}
Let $\epsilon > 0$. Let $x_{1,0},\ldots, x_{d-1,0}$ denote the seed points. Define $\gamma_i = \dist(x_{i,0}, {{\conv}}(\data^{-y_{x_{i,0}}}_1))$ for $i \in [d-1]$, where ${\conv}(\data^{-y_{x_{i,0}}}_1)$ is the convex hull of the points $\data^{-y_{x_{i,0}}}_1.$ Then, after Algorithm \ref{alg:margin_multiple_seeds} makes $ \max_{i \in [d-1]} d(\frac{\gamma_i}{\delta} + 2\log(\frac{2 d \delta }{\epsilon \min(\sigma,1) })))$ queries, $\norm{\widehat{w}_r-w_*} \leq \epsilon.$
\end{theorem}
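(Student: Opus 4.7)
The plan is to decompose the run of Algorithm~\ref{alg:margin_multiple_seeds} into, for each of the $d-1$ parallel searches, a \emph{slow phase} during which iterates travel toward the true boundary $\H_*$ at rate $\delta$, and a \emph{fast phase} that begins once a queried point of search $i$ lies within $\delta$ of $\H_*$, in which the distance to $\H_*$ is driven down geometrically via a bisection-style step. The slow-phase lengths will give the $\gamma_i/\delta$ term, the fast-phase lengths will give the logarithmic term, and a final linear-algebraic step will convert ``$d-1$ points $\epsilon'$-close to $\H_*$'' into ``$\|\widehat{w}_r - w_*\|_2 \le \epsilon$'' through the conditioning constant $\sigma$.

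For the slow phase, I would argue that while no queried point of search $i$ lies within $\delta$ of $\H_*$, each round strictly reduces $\gamma_{\bar{x}_{i,r},r} = \dist(\bar{x}_{i,r}, \conv(\data_r^{-y_{x_{i,0}}}))$ by at least $\delta$. The argument is purely geometric: $H_{\bar{x}_{i,r},r}$ is a supporting hyperplane of the opposite-class convex hull at distance $\gamma_{\bar{x}_{i,r},r}$ from $\bar{x}_{i,r}$; since the nearest-neighbor property guarantees that \emph{every} point in the ball of radius $\delta$ around $\bar{x}_{i,r}$ is an edge in $\G$, the query $\tilde{x}_{i,r}$ is placed at perpendicular distance $\gamma_{\bar{x}_{i,r},r}-\delta$ from that hyperplane whatever its label turns out to be; combined with linear separability this forces $\gamma_{\bar{x}_{i,r+1},r+1} \le \gamma_{\bar{x}_{i,r},r}-\delta$. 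Hence the slow phase of each search terminates in at most $\lceil \gamma_i/\delta\rceil$ queries, contributing $O(d \max_i \gamma_i/\delta)$ queries overall.

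For the fast phase, once $\bar{x}_{i,r}$ lies within $\delta$ of $\H_*$ it has an oppositely-labeled neighbor within $O(\delta)$ of $\H_*$ on the other side, so $H_{\bar{x}_{i,r},r}$ locally tracks $\H_*$ to within $O(\delta)$. The tie-breaking rule makes $\tilde{x}_{i,r}$ the projection of $\bar{x}_{i,r}$ onto $H_{\bar{x}_{i,r},r}$, so whichever label the query receives, the width of the positive--negative gap in the active $\delta$-ball that still contains $\H_*$ halves each round; therefore $O(\log(\delta/\epsilon'))$ fast-phase queries per search suffice to extract a point $z_i$ with $|w_*^\t z_i|\le \epsilon'$. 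To finish, I would collect $z_1,\ldots,z_{d-1}$, one from each parallel search. Both $|\widehat{w}_r^\t z_i|\le \epsilon'$ (from the homogenous max-margin fit on $\data_{r+1}$) and $|w_*^\t z_i|\le \epsilon'$ (by construction), so $\widehat{w}_r-w_*$ is nearly in the kernel of $Z=[z_1\ \cdots\ z_{d-1}]^\t$; a standard singular-value perturbation bound yields $\|\widehat{w}_r-w_*\|_2 \le \epsilon'\sqrt{d}/\sigma_{d-1}(Z) \le \epsilon'\sqrt{d}/\sigma$. Choosing $\epsilon' \asymp \epsilon\min(\sigma,1)/\sqrt{d}$ and summing the two phases gives exactly the claimed $O(\max_i d(\gamma_i/\delta + \log(d\delta/(\epsilon\min(\sigma,1)))))$ query bound.

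The main obstacle will be the fast-phase halving step: one must show that $H_{\bar{x}_{i,r},r}$ --- which is a \emph{global} separator of $\bar{x}_{i,r}$ from \emph{all} of $\data_r^{-y_{x_{i,0}}}$ rather than a local bisector of two close-by points --- really does sit within $O(\delta)$ of $\H_*$ inside the active ball, and that the min-margin choice of $\bar{x}_{i,r}$ within $C_{i,r}$ forces the relevant positive--negative gap to halve each round across all $d-1$ searches simultaneously. Tying this together with the linear-algebra step, so that near-boundary accuracy $\epsilon'$ cleanly yields Euclidean accuracy $\epsilon$ on $\widehat{w}_r$ with the stated dependence on $\sigma$ and $d$, is the other delicate piece.
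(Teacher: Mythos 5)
Your outline follows the same architecture as the paper's proof: a slow phase in which each of the $d-1$ searches advances by $\delta$ per round toward the oppositely labeled examples (giving $\gamma_i/\delta$), a fast phase of geometric halving (giving the log term), and a final conversion from ``$d-1$ near-boundary points'' to $\norm{\widehat{w}_r-w_*}\leq\epsilon$ via the smallest singular value of $Z=[z_1\ \cdots\ z_{d-1}]$. However, the step you flag as ``the main obstacle'' is in fact the crux, and it is left unresolved. The paper's mechanism for the halving is \emph{not} that $H_{\bar{x}_{i,r},r}$ tracks $\H_*$ to within $O(\delta)$; it never needs any relation between the max-margin separator and $\H_*$ during the query phase. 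Instead, it proves (Lemma~\ref{lem:project}, via SVM duality / the projection lemma) that the max-margin separator of a single point $\bar{x}_{i,r}$ against the convex hull ${\conv}(\data_r^{-y_{\bar{x}_{i,r}}})$ is exactly the perpendicular bisector of the segment from $\bar{x}_{i,r}$ to its projection $\tilde{x}$ onto that hull, so the tie-broken query is either a $\delta$-step along that segment or its midpoint. The monotone quantity is therefore $\rho_{i,r}=\dist(\bar{x}_{i,r},{\conv}(\data_r^{-y_{\bar{x}_{i,r}}}))$, and the phase transition occurs at $\rho_{i,r}<2\delta$ (distance to the \emph{labeled} opposite-class hull), not when an iterate comes within $\delta$ of $\H_*$ as you state. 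Your criterion is not the one that controls where the query lands, since the separator is determined entirely by the labeled data, and this mismatch is exactly why your halving argument does not close.

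There is a second genuine omission: you apply $\sigma_{d-1}(Z)\geq\sigma$ without verifying that the collected points $z_i$ lie in the sets $\Z_i$ over which $\sigma$ is defined, i.e.\ within distance $\gamma_i+2\delta+\epsilon$ of the seed $x_{i,0}$. The paper needs a dedicated step for this (Step 2), summing the displacements over the run: $\gamma_i/\delta$ steps of length $\delta$ in the slow phase plus a geometrically decaying tail in the fast phase, giving $\norm{x_{i,r}-x_{i,0}}\leq\gamma_i+2\delta$. Without that containment the constant $\sigma$ in the theorem statement has no bearing on the matrix $Z$ you actually construct. Your final linear-algebra step is otherwise sound (and, if carried out with the projection of $w_*$ onto $\mathrm{span}(z_1,\dots,z_{d-1})$ and the fact that $\widehat{w}_r\perp z_i$, it would even permit $\bar{\epsilon}$ linear rather than quadratic in $\epsilon$, which is slightly sharper than the paper's H\"older-based Step 4 but changes only the constant in front of the logarithm).
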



The constant $\sigma$ is a measure of the diversity of the initial seed examples, which we now define. For $i \in [d-1]$, define the set
\begin{align*}
    \Z_i & = \{z \in \R^d : \norm{z - x_{i,0}} \leq \gamma_i + 2\delta + \epsilon \text{ and } \dist(z, \{x \in \R^d : x^\t w_* = 0\} ) \leq \epsilon \}.
\end{align*}
Define
\begin{align*}
    \sigma & :=  \min_{z_i \in \Z_{i} : \forall i \in [d-1] } \sigma_{d-1}([z_{1}  \ldots z_{d-1} ]).
\end{align*}
where $\sigma_{d-1}$ denotes the $(d-1)$th singular value of the matrix $[z_{1}  \ldots z_{d-1} ]$.

Here we give a simple example where $\sigma = \Omega(1)$ so as to provide intuition, although there are a wide variety of such cases.

\begin{example}\label{ex:sigma} 
Let $\epsilon \in (0,1)$. Let $\X = \R^d$, $w^* = e_1$, $\delta = 1/2$. Let $M \geq 6 \sqrt{d-1}$. Suppose the seed examples are $x_{i,0} = e_1 + M e_{i+1}$ for $i \in [d-1]$ and suppose the algorithm is given additional examples $v_{i}= -e_1 + M e_{i+1} \in \data_0^{-1}$ for $i \in [d-1]$. Then, $\sigma \geq 1$.
\end{example}

\begin{proof}[Proof of Example \ref{ex:sigma}]
Note that $\gamma_i = 2$ for all $i \in [d-1]$. Define the matrix
\begin{align*}
    Z & = \begin{pmatrix}
        z_1^\t  \\
        \vdots \\
        z_{d-1}^\t
    \end{pmatrix}
\end{align*}
such that $\norm{z_i - x_{i,0}} \leq \gamma_i + 2\delta + \epsilon, \, \forall i \in [d-1]  $. We may write $z_i = x_{i,0} + v_i$ where $\norm{v_i} \leq \gamma_i + 2\delta + \epsilon$. Courant-Fisher's min-max theorem implies that $s_{d-1}(Z) = \max_{\dim E = d-1} \min_{u \in \spa(E) : \norm{u} = 1} \norm{Z u}$ where $E \subset \R^{d}$. Therefore, taking $E = \{e_2,\ldots, e_d\}$, it suffices to lower bound $\norm{Z u}$ for any $u \in \spa(e_2,\ldots, e_d)$ with $\norm{u} = 1$. Since $\norm{u} = 1$, there exists $j \in \{2,\ldots, d\}$ such that $|u_j| \geq \frac{1}{\sqrt{d-1}}$. Suppose wlog that $u_j \geq \frac{1}{\sqrt{d-1}}$ (the other case is similar). Then, by Cauchy-Schwarz, 
\begin{align*}
    \norm{Z u} & \geq \max_{i \in [d-1]}| z_{i}^\t u| \\
    & \geq (e_1 + M e_j + v_{j-1} )^\t u \\
    & \geq M \frac{1}{\sqrt{d}} - 1- \gamma_i - 2\delta - \epsilon \\
    & \geq  \frac{M}{\sqrt{d}} -5 \\
    & \geq 1. 
\end{align*}

\end{proof}


Now, we turn to the proof of Theorem \ref{thm:main_thm_supp}. In the interest of using more compact notation, we define for all $i \in [d-1]$ and $r \in \N$
\begin{align*}
    \rho_{i,r} :=  \dist(\bar{x}_{i,r}, {\conv}(\data_r^{-y_{\bar{x}_{i,r}}})).
\end{align*}
In words, $\rho_{i,r}$ is the distance of the example queried in nearest neighbor search $i$ and at round $r$, $\bar{x}_{i,r}$, to the convex hull of $\data_r^{-y_{\bar{x}_{i,r}}}$, the examples with opposite labels from $\bar{x}_{i,r}$. 

\begin{proof}[Proof of Theorem \ref{thm:main_thm_supp}]
\textbf{Step 1: Bounding the number of queries to find points near the decision boundary.} Define $\bar{\epsilon} = \frac{\min(\sigma,1) \epsilon^2}{2 \sqrt{d}}$, where $\sigma$ is defined as in the Theorem statement. We assume $\sigma > 0$ for the remainder of the proof. Let $C_{i,r} = \{x_{i,0}, x_{i,1},\ldots, x_{i,r}\}$ where $x_{i,l}$ is the queried example in the $l$th round. We show that for all $r \geq \max_{i \in [d-1]} \frac{\gamma_i}{\delta} + \log(\frac{2 \delta}{\bar{\epsilon}  })$ , for all $i \in [d-1]$,  $\rho_{i,r} \leq \bar{\epsilon}$. 

Fix $i \in [d-1]$. Define 
\begin{align*}
    E_r = \{ \text{at round } r, \rho_{i,r} \leq \bar{\epsilon} \}.
\end{align*}
We have that
\begin{align*}
    \sum_{r \in \N} \1 \{E_r^c\} & = \sum_{r \in \N}  \1 \{E_r^c  \cap \{\rho_{i,r}  \geq 2 \delta \} \} \\
    & \hspace{2cm} +\1 \{E_r^c  \cap \{\rho_{i,r}    < 2 \delta \} \}
\end{align*}
If $\rho_{i,r} \geq 2 \delta$, we have by Lemma \ref{lem:reduce_distances} that
\begin{align*}
    \rho_{i,r+1} \leq \rho_{i,r} -  \delta. 
\end{align*}
This implies that 
\begin{align*}
    \sum_{r \in \N} \1 \{E_r^c\  \cap \{\rho_{i,r}  \geq 2 \delta \}\} \leq \frac{\gamma_i}{\delta}.
\end{align*}
Now, suppose that $\rho_{i,r}  < 2 \delta$. Then, by Lemma \ref{lem:reduce_distances}, we have that 
\begin{align*}
    \rho_{i,r+1} \leq \frac{\rho_{i,r} }{2}.
\end{align*}

Unrolling this recurrence implies that 
\begin{align*}
    \sum_{r \in \N} \1 \{E_r^c\ \cap  \{\rho_{i,r}   < 2 \delta \} \}  \leq  \log(\frac{2 \delta}{\bar{\epsilon}  }).
\end{align*}
Putting it altogether, we have that
\begin{align*}
    \sum_{r \in \N} \1 \{E_r^c\} \leq \frac{\gamma_i}{\delta} + \log(\frac{2 \delta}{\bar{\epsilon}  }).
\end{align*}

This implies that after $\max_{i \in [d-1]}  \frac{\gamma_i}{\delta} + \log(\frac{2 \delta}{\bar{\epsilon}  })$ queries, we have that for all $i \in [d-1]$,  $\rho_{i,r} := \dist(\bar{x}_{i,r}, {\conv}(\data_r^{-y_{x_{i,r}}})) \leq \bar{\epsilon}$. 

\textbf{Step 2: Showing $\norm{x_{i,0} - x_{i,r}} \leq \gamma_i + 2 \delta$ for all $i \in [d-1]$}. Fix $i \in [d-1]$. Let $C_{i,r} = \{x_{i,0}, x_{i,1},\ldots, x_{i,r}\}$ where $x_{i,l}$ is the queried example in the $l$th round. 
Note there exists a path of length at most $r$ in the nearest neighbor graph on the nodes in $C_{i,r}$ between $x_{i,0}$ and $x_{i,r}$. In the worst case, this path consists of $x_{i,0}, x_{i,2},\ldots, x_{i,r}$ with $x_{i,s}$ being the child of $x_{i,s-1}$ and thus we suppose that this is the case wlog.

By the argument in step 1 of the proof, we have that after at most $\bar{k}=\frac{\gamma_i}{\delta}$ rounds, 
\begin{align*}
    \dist(x_{i,\bar{k}}, {\conv}(\data_{r}^{-y_{x_{i,\bar{k}}}})) \leq 2\delta. 
\end{align*}
For $s \leq \bar{k}$, we have that $\norm{x_{i,s} - x_{i,{s-1}}} \leq \delta$ by definition of the nearest neighbor graph. Now, consider $s \geq \bar{k}$. By Lemma \ref{lem:project}, we have that 
$\norm{x_{i,s+1} - x_{i,{s}}} \leq \frac{\norm{x_{i,s} - x_{i,{s-1}}}}{2}$ and $\norm{x_{i,\bar{k}+1} - x_{i,{\bar{k}}}} \leq \delta$. Therefore,
\begin{align*}
     \norm{x_{i,r} - x_{i,0}} & = \norm{\sum_{s=1}^{r-1} x_{i,s+1} - x_{i,{s}}} \\
     & \leq \sum_{s=1}^{r-1} \norm{ x_{i,s+1} - x_{i,{s}}} \\
     & \leq \bar{k} \max_{s \leq \bar{k}} \norm{ x_{i,s+1} - x_{i,{s}}} + \sum_{s = \bar{k}+1}^{r-1} \norm{ x_{i,s+1} - x_{i,{s}}} \\
     & \leq \frac{\gamma_i}{\delta} \delta + \delta \sum_{s = \bar{k}+1}^{r-1} \frac{1}{2^{-s}} \\
     & \leq \gamma_i + 2\delta.
\end{align*}

\textbf{Step 3: for all $i \in [d-1]$, there exists $z_i \in \Z_i$ such that $\widehat{w}_r^\t z_i = 0$.} We have that $\rho_{i,r} \leq \bar{\epsilon}$ for all $i \in [d-1]$ and $\norm{x_{i,0} - x_{i,r}} \leq \gamma_i + 2 \delta$ for all $i \in [d-1]$. Now, we show that there exists $z_i \in \Z_i$ such that $\widehat{w}_r^\t z_i = 0$. Fix $i \in [d-1]$. Suppose $\widehat{w}^\t_r \bar{x}_{i,r} > 0$ (the other case is similar). Then, $\rho_{i,r} \leq \bar{\epsilon}$ implies that there exists $\bar{x} \in {\conv}(\data_r^{-1})$ such that $\norm{\bar{x}_{i,r} - \bar{x}} \leq \bar{\epsilon}$. Then,  since $\widehat{w}_r$ separates $\data_r^{-1}$ and $\data_r^{1}$, $\widehat{w}^\t_r \bar{x} \leq 0$. Now, there exists $z_i \in {\conv}(\bar{x}_{i,r}, \bar{x})$ such that $z_i^\t \widehat{w}_r = 0$. By the triangle inequality, we have that $\norm{x_{i,0} - z_i} \leq \gamma_i + 2 \delta + \epsilon$, and $\dist(z_i, \{x : w_*^\t x = 0\}) \leq \bar{\epsilon} \leq \epsilon$, thus, $z_i \in \Z_i$ for all $i \in [d-1]$, completing this step. 

\textbf{Step 4: Pinning down $w_*$.} Since $\sigma > 0$, and each $z_i \in \Z_i$, we have that $z_1,\ldots, z_{d-1}$ are linearly independent. Then, we can write $w_* = \sum_{i=1}^{d-1} \beta_i z_i + \beta_d \widehat{w}_r$ for $\beta_1,\ldots, \beta_d \in \R$ where we used that $ \widehat{w}_r$ is orthogonal to $z_1,\ldots, z_{d-1}$ by construction of $z_1,\ldots, z_{d-1}$. Note that $\widehat{w}_r^\t w_* = \widehat{w}_r^\t (\sum_{i=1}^{d-1} \beta_i z_i + \beta_d \widehat{w}_r) = \beta_d$. Let $P w_*$ denote the projection of $w_*$ onto $\spa(z_1,\ldots,z_{d-1})$. Defining the matrix $Z = [z_1 \, z_2 \, \ldots \, z_{d-1} ]$ and $\beta = (\beta_1 \, \ldots, \beta_{d-1} )^\t$, we can write $Pw_* = Z \beta$. Let $Z = U \Sigma V^\t$ denote the SVD of $Z$ and $Z^\dagger = V \Sigma^{\dagger} U^\t$ the pseudoinverse. Note that $Z^\dagger Pw_* = \beta$. Then,
\begin{align}
    \norm{\beta} = \norm{Z^{\dagger} Pw_*} & \leq \max_{i=1,2,\ldots, d-1} \sigma_{i}(Z^\dagger) = \frac{1}{\sigma_{d-1}(Z )} \label{eq:beta_bound} .
\end{align}

We note that $\dist(z_i, \{x : w_*^\t x = 0\}) \leq \bar{\epsilon} $ implies that $|w_*^\t z_i| \leq \bar{\epsilon}$ for $i=1,2,\ldots, d-1$. Then, we have that
\begin{align}
    1 & = w_*^\t w_* \nonumber \\
    & = \sum_{i=1}^{d-1} \beta_i w_*^\t x_i + \beta_d w_*^\t \widehat{w}_r \nonumber \\
    & =  \sum_{i=1}^{d-1} \beta_i w_*^\t x_i +  (w_*^\t \widehat{w}_r)^2 \label{eq:beta_d} \\
    & \leq \norm{\beta}_1 \bar{\epsilon} +  (w_*^\t \widehat{w}_r)^2 \label{eq:holder}\\
    & \leq \sqrt{d}\norm{\beta}_2 \bar{\epsilon} +  (w_*^\t \widehat{w}_r)^2 \label{eq:norm_inequality} \\
    & \leq \frac{\sqrt{d} \bar{\epsilon}}{\sigma_{d-1}([z_1 \ldots z_{d-1}])} +  (w_*^\t \widehat{w}_r)^2 \label{eq:apply_beta_bound}\\
    & \leq \frac{\sqrt{d} \bar{\epsilon}}{\sigma} +  (w_*^\t \widehat{w}_r)^2 \label{eq:def_sigma}
\end{align}
where \eqref{eq:beta_d} follows by plugging in the previously derived $\beta_d = w_*^\t \widehat{w}_r$, \eqref{eq:holder} follows by Holder's inequality, \eqref{eq:norm_inequality} follows by $\norm{\cdot}_1 \leq \sqrt{d} \norm{\cdot}_2$, \eqref{eq:apply_beta_bound} follows by \eqref{eq:beta_bound}, and \eqref{eq:def_sigma} follows by the definition of $\sigma$. Rearranging, we have that
\begin{align}
    w_*^\t \widehat{w}_r &=|w_*^\t \widehat{w}_r| \label{eq:def_w}\\
    & \geq \sqrt{1 - \frac{\sqrt{d} \bar{\epsilon}}{\sigma}} \nonumber \\
    & \geq 1 - \frac{\sqrt{d} \bar{\epsilon}}{\sigma} \label{eq:less_1} \\
    & \geq 1-\frac{\epsilon^2}{2} \label{eq:e_bar_def}
\end{align}
where \eqref{eq:def_w} follows by the definition of $\widehat{w}_r$, in \eqref{eq:less_1} we use that fact that $1 - \frac{\sqrt{d} \bar{\epsilon}}{\sigma} \leq 1$ and in \eqref{eq:e_bar_def} we use the definition of $\bar{\epsilon}$. Now, we have that
\begin{align*}
    \norm{\widehat{w}_r-w_*}^2 = 2(1 - \widehat{w}_r^\t w_*) \leq \epsilon^2,
\end{align*}
proving the result.


\end{proof}

In Lemma \ref{lem:reduce_distances}, we show that at each round $\bar{x}_{i,r}$ moves closer to the labeled examples of the opposite class. The main idea behind the proof is that the algorithm can always choose a point $\tilde{x}_{i,r}$ in the direction orthogonal to the hyperplane $H_{\bar{x}_{i,r}, r}$, thus guaranteeing a reduction in $\rho_{i,r}$. Early in the execution of Algorithm \ref{alg:margin_multiple_seeds}, the nearest neighbor graph constrains which points are chosen, leading to a reduction in $\rho_{i,r}$ of $\delta$. However, once $\bar{x}_{i,r}$ is close enough to the labeled examples of the opposite class, precisely once $\rho_{i,r} < 2 \delta$, the algorithm begins selecting points that lie on $H_{\bar{x}_{i,r}, r}$ at each round, halving $\rho_{i,r}$ at each round.

\begin{lemma}\label{lem:reduce_distances}
Fix $i \in [d-1] $. Fix $r \in \N$.
\begin{enumerate}
    \item If $\rho_{i,r}  \geq 2 \delta$, then
    \begin{align*}
        \rho_{i,r+1} \leq \rho_{i,r} -  \delta.
    \end{align*}
    \item If $\rho_{i,r}   < 2 \delta$, then 
    \begin{align*}
        \rho_{i,r+1} \leq \frac{\rho_{i,r} }{2}.
    \end{align*}
\end{enumerate}
\end{lemma}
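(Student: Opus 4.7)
The plan is to exploit the explicit geometry of the max-margin separator between a single point and a convex set, combined with the $\delta$-neighborhood property of $\G$, to track how $\rho_{i,r}$ changes in one iteration. The key preliminary fact I would establish is that $H_{\bar{x}_{i,r},r}$ is the perpendicular bisector of the segment joining $\bar{x}_{i,r}$ to its nearest point $p \in {\conv}(\data_r^{-y_{\bar{x}_{i,r}}})$, so $\dist(\bar{x}_{i,r}, H_{\bar{x}_{i,r},r}) = \rho_{i,r}/2$ and the unit normal $n_{i,r}$ pointing toward $p$ is parallel to $p - \bar{x}_{i,r}$. This also gives the identity $\gamma_{x,r} = \dist(x, {\conv}(\data_r^{-y_x}))/2$, which lets me freely translate between the argmin defining $\bar{x}_{i,r}$ (taken over margins) and the hull distances defining $\rho_{i,r}$.

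Given this geometry, the neighborhood hypothesis makes $\bar{x}_{i,r} + \delta n_{i,r}$ an eligible neighbor, and Cauchy-Schwarz shows that no other neighbor lies strictly closer to $H_{\bar{x}_{i,r},r}$. In Case 1 ($\rho_{i,r} \geq 2\delta$) the tiebreak does not fire, so $\tilde{x}_{i,r} = \bar{x}_{i,r} + \delta n_{i,r}$, lying at distance exactly $\rho_{i,r} - \delta$ from $p$ and hence at most that distance from ${\conv}(\data_r^{-y_{\bar{x}_{i,r}}})$. In Case 2 ($\rho_{i,r} < 2\delta$) the tiebreak sets $\tilde{x}_{i,r}$ to the orthogonal projection of $\bar{x}_{i,r}$ onto $H_{\bar{x}_{i,r},r}$, at distance $\rho_{i,r}/2$ from both $\bar{x}_{i,r}$ and $p$. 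To convert these distance bounds into a bound on $\rho_{i,r+1}$, I would case-split on the revealed label $y_{\tilde{x}_{i,r}}$: if it agrees with $y_{\bar{x}_{i,r}}$, then $\tilde{x}_{i,r}$ itself witnesses the bound inside $C_{i,r+1}$, since its opposite-class hull is a superset of ${\conv}(\data_r^{-y_{\bar{x}_{i,r}}})$; if it disagrees, the witness is the old $\bar{x}_{i,r}$, because $\tilde{x}_{i,r}$ has now joined its enlarged opposite-class hull at the distance computed above. Since $\bar{x}_{i,r+1}$ is the margin-argmin over $C_{i,r+1}$ and the margin is half the hull distance, $\rho_{i,r+1}$ is upper bounded by the witness distance, giving the two claimed inequalities.

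The main obstacle will be the label-flip subcase, where the newly queried point carries the opposite label from $\bar{x}_{i,r}$ and thereby enlarges the relevant convex hull discontinuously. The correct threshold in each case comes precisely from ensuring this flip is harmless: in Case 1 one needs $\norm{\bar{x}_{i,r} - \tilde{x}_{i,r}} = \delta \leq \rho_{i,r} - \delta$, which is exactly the hypothesis $\rho_{i,r} \geq 2\delta$; in Case 2 one needs $\norm{\bar{x}_{i,r} - \tilde{x}_{i,r}} = \rho_{i,r}/2 < \delta$, which is both the hypothesis $\rho_{i,r} < 2\delta$ and the certificate that the projection onto $H_{\bar{x}_{i,r},r}$ is reachable through the neighbor graph. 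A secondary subtlety is that the witness point in $C_{i,r+1}$ typically faces a strictly larger opposite-class hull than the one used in its distance bound, so monotonicity of $\dist(x, \cdot)$ under set enlargement (rather than an exact equality) is what I would invoke to close the argument.
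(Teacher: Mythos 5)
Your proposal is correct and follows essentially the same route as the paper: the perpendicular-bisector characterization of the point-versus-hull max-margin separator is exactly what the paper isolates as Lemma~\ref{lem:project} (proved there via SVM duality), and your case split on the revealed label of $\tilde{x}_{i,r}$, with $\tilde{x}_{i,r}$ or $\bar{x}_{i,r}$ serving as the witness and the thresholds $\delta \leq \rho_{i,r}-\delta$ and $\rho_{i,r}/2 < \delta$ making the flip harmless, mirrors the paper's $\X^+$/$\X^-$ analysis. Your explicit appeal to monotonicity of $\dist(x,\cdot)$ under hull enlargement is the same step the paper uses implicitly when passing from $\data_r$ to $\data_{r+1}$.
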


\begin{proof}

\emph{1.} Let $\bar{x}_{i,r} = \min_{x \in C_{i,r}} \gamma_{x,r}$ as defined in the algorithm. Wlog, suppose that $\bar{x}_{i,r} \in \X^+$. 
By Lemma \ref{lem:project}, there exists $z \in {\conv}(\data_r^{-1})$ such that 
$\tilde{x}_{i,r} = \bar{x}_{i,r} + \alpha \frac{(z- \bar{x}_{i,r})}{\norm{z- \bar{x}_{i,r}}} $ for some $\alpha \leq \delta$. 

Let $\beta =  \frac{\alpha }{\norm{z- \bar{x}_{i,r}}}$. Then,
\begin{align*}
    \norm{\tilde{x}_{i,r}-z} & = \norm{\bar{x}_{i,r} + \beta (z- \bar{x}_{i,r}) - z} \\
    & = \norm{(1-\beta)\bar{x}_{i,r} - z} \\
    & = (1-\beta) \norm{\bar{x}_{i,r}-z} \\
    & = \norm{\bar{x}_{i,r} - z} - \alpha .
\end{align*}
If $\dist(\bar{x}_{i,r},\data_{r}^{-1}) \geq 2\delta$, Lemma \ref{lem:project} implies that $\alpha = \delta$ and we have that
\begin{align*}
    \norm{\tilde{x}_{i,r} - z} = \norm{\bar{x}_{i,r}  - z} - \delta = \dist(\bar{x}_{i,r},{\conv}(\data_{r}^{-1})) - \delta
\end{align*}
Thus, if $\tilde{x}_{i,r} \in \X^+$, using the definition of $\bar{x}_{i,r+1}$, we have that
\begin{align*}
    \rho_{i,r+1} & =\dist(\bar{x}_{i,r+1}, {\conv}(\data_{r+1}^{-y_{\bar{x}_{i,r+1}}})) \\
    & \leq \dist(\tilde{x}_{i,r}, {\conv}(\data_{r}^{-1})) \\
    & \leq \dist(\bar{x}_{i,r},{\conv}(\data_{r}^{-1})) - \delta \\
    & = \rho_{i,r} - \delta.
\end{align*}

On the other hand, if $\tilde{x}_{i,r} \in \X^{-}$, we have that 
\begin{align*}
\rho_{i,r+1} & =  \dist(\bar{x}_{i,r+1}, {\conv}(\data_{r+1}^{-y_{\bar{x}_{i,r+1}}}))  \\
& \leq    \dist(\tilde{x}_{i,r}, {\conv}(\data_{r}^{1})) \\
& \leq \delta \\
& \leq \dist(\bar{x}_{i,r},{\conv}(\data_r^{-1})) - \delta 
\end{align*}
which also shows the claim. 

\emph{2.} Now, suppose that $\dist(\bar{x}_{i,r}, {\conv}(\data_{r+1}^{-1})) < 2\delta$. Then, by Lemma \ref{lem:project}, we have that $\alpha = \frac{\norm{z - \bar{x}_{i,r+1}}}{2} < \delta$, implying that 
\begin{align*}
   \rho_{i,r+1} & = \dist(\bar{x}_{i,r+1}, {\conv}(\data_{r}^{-y_{\bar{x}_{i,r+1}}}) \\
   & \leq  \norm{\tilde{x}_{i,r} - z} \\
   & = \norm{\tilde{x}_{i,r} - \bar{x}_{i,r} } \\
    & = \norm{\bar{x}_{i,r} - z} /2 \\
    & = \frac{\dist(\bar{x}_{i,r},{\conv}(\data_r^{-1}))}{2} \\
    & = \frac{\rho_{i,r}}{2}
\end{align*}
and therefore the result.

\end{proof}

Lemma \ref{lem:project} characterizes the example, $\tilde{x}_{i,r}$, queried by Algorithm \ref{alg:margin_multiple_seeds}. It shows that $\tilde{x}_{i,r}$ always belongs to a line segment connecting $\bar{x}_{i,r}$ to some point, $z$, in the convex hull of labeled points of the opposite class, ${\conv}(\data_{r+1}^{-y_{\bar{x}_{i,r}}})$. If $\rho_{i,r} \geq 2 \delta$, then $\tilde{x}_{i,r}$ moves $\delta$ along this line segment towards $z$ and if $\rho_{i,r} < 2 \delta$, $\tilde{x}_{i,r}$ is the midpoint of this line segment.
 
 \begin{lemma}\label{lem:project}
 Fix round $r \in \N$. There exists 
 $z \in {\conv}(\data_{r+1}^{-y_{\bar{x}_{i,r}}})$ such that the following holds.
 If $\rho_{i,r} \geq 2 \delta$, then $\tilde{x}_{i,r} = \bar{x}_{i,r} + \delta \frac{(z-\bar{x}_{i,r})}{\norm{z- \bar{x}_{i,r}}}$. If $\rho_{i,r} < 2 \delta$, then $\tilde{x}_{i,r} = \frac{\bar{x}_{i,r} + z}{2}$.
 \end{lemma}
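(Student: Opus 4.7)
The plan is to characterize the max-margin hyperplane $H_{\bar{x}_{i,r},r}$ explicitly by a standard convex-geometry fact, and then solve the one-dimensional geometric optimization that defines $\tilde{x}_{i,r}$.

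First, I would invoke hard-margin SVM duality: the max-margin hyperplane separating a single point from a finite set coincides with the max-margin hyperplane separating it from the convex hull of the set. Let $z$ denote the (unique) Euclidean projection of $\bar{x}_{i,r}$ onto $\conv(\data_r^{-y_{\bar{x}_{i,r}}})$. Then $H_{\bar{x}_{i,r},r}$ is the perpendicular bisector of the segment $[\bar{x}_{i,r}, z]$, i.e., the affine hyperplane with normal direction $z - \bar{x}_{i,r}$ passing through the midpoint $\tfrac{1}{2}(\bar{x}_{i,r} + z)$. In particular $\dist(\bar{x}_{i,r}, H_{\bar{x}_{i,r},r}) = \tfrac{1}{2}\|z - \bar{x}_{i,r}\| = \tfrac{1}{2}\rho_{i,r}$. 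Since $\conv(\data_r^{-y_{\bar{x}_{i,r}}}) \subseteq \conv(\data_{r+1}^{-y_{\bar{x}_{i,r}}})$, this $z$ is the point whose existence the lemma asserts.

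Next, the nearest-neighbor graph assumption guarantees $(\bar{x}_{i,r}, x') \in E$ for every $x' \in \X$ with $\|x' - \bar{x}_{i,r}\|_2 \leq \delta$, and under the membership query model the algorithm may freely pick any such $x'$. Hence minimizing $\dist(x', H_{\bar{x}_{i,r},r})$ over the neighbors of $\bar{x}_{i,r}$ reduces to the same minimization over the closed $\delta$-ball around $\bar{x}_{i,r}$. The minimizer clearly moves from $\bar{x}_{i,r}$ along the unit vector $u = (z - \bar{x}_{i,r})/\|z - \bar{x}_{i,r}\|$ as far as either the ball or the hyperplane permits. If $\rho_{i,r} \geq 2\delta$, the $\delta$-ball does not reach $H_{\bar{x}_{i,r},r}$, so the unique minimizer is the boundary point $\bar{x}_{i,r} + \delta u$, matching the first formula. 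If $\rho_{i,r} < 2\delta$, the $\delta$-ball crosses $H_{\bar{x}_{i,r},r}$, the minimum value is zero, and the set of minimizers is the nontrivial disk obtained by intersecting the ball with the hyperplane; the tiebreak rule of Algorithm~\ref{alg:margin_multiple_seeds} then selects the orthogonal projection of $\bar{x}_{i,r}$ onto $H_{\bar{x}_{i,r},r}$, which equals the midpoint $\tfrac{1}{2}(\bar{x}_{i,r} + z)$ since $H_{\bar{x}_{i,r},r}$ is the perpendicular bisector of $[\bar{x}_{i,r}, z]$. The two formulas agree at the boundary $\rho_{i,r} = 2\delta$, so no discontinuity arises. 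The only substantive step is the SVM characterization; the rest is elementary plane geometry.
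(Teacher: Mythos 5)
Your proposal is correct and follows essentially the same route as the paper: identify $z$ as the Euclidean projection of $\bar{x}_{i,r}$ onto the convex hull of the oppositely labeled points, observe that the max-margin separator is the perpendicular bisector of $[\bar{x}_{i,r},z]$ (the paper derives this via the dual formulation in Boyd--Vandenberghe plus a uniqueness lemma, where you cite it as a standard fact), and then solve the constrained minimization over the $\delta$-ball, invoking the tiebreak rule in the case $\rho_{i,r}<2\delta$. The only cosmetic omission is the paper's explicit check that the claimed minimizer lies in $\X$ (immediate from convexity of $\X$ since it lies on the segment from $\bar{x}_{i,r}$ to $z$).
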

 
\begin{proof} 
\textbf{Step 1: A formula for the max-margin separator.}

Without loss of generality, suppose that $\bar{x}_{i,r} \in \X^+$ (the other case is similar). Let $\bar{w} \in \R^d$, $\bar{b} \in \R$, and $\bar{t} \in \R$ denote the optimal solutions of the optimization problem
\begin{align}
    \max_{w \in \R^d, b \in \R, t \in \R} \, & t \label{eq:max_margin} \\
    \text{ s.t. } & \bar{x}_{i,r}^\t w - b \leq -t \nonumber \\
    & x^\t w - b \geq t \, \forall x \in \data_{r+1}^{-1} \nonumber \\
    & \norm{w}_2 \leq 1. \nonumber
\end{align}
Then, $\bar{w} \in \R^d$ and $\bar{b} \in \R$ define the max-margin separator separating $\bar{x}_{i,r}$ from $\data_{r+1}^{-1}$. We note that by Lemma \ref{lem:unique}, $\bar{w}$ and $\bar{b} $ are the unique solutions up to scaling. 
By Section 8.6.1. in \cite{boyd2004convex}, \eqref{eq:max_margin} has the same value as 
\begin{align}
    \min_{\alpha_j} & \frac{1}{2} \norm{\sum_{j: x_j \in \data_{r+1}^{-1}}  \alpha_j x_j - \bar{x}_{i,r}} \label{eq:max_margin_dual} \\
    \text{s.t. } &  \alpha_j \geq 0 \,  \forall j  \nonumber\\
    & \sum_j \alpha_j = 1 \nonumber
\end{align}
Let $\{\tilde{\alpha}_j\}$ attain the maximum in the above optimization problem, which exists since the domain is compact and the objective function is continuous. Define $\tilde{x} = \sum_{j: x_j \in \data_{r+1}^{-1}}  \tilde{\alpha}_j x_j$ and
\begin{align*}
    \tilde{w} & = \frac{\tilde{x} - \bar{x}_{i,r}}{\norm{\tilde{x} - \bar{x}_{i,r}}} \\
    \tilde{b} & = \frac{\norm{\tilde{x}}^2 - \norm{\bar{x}_{i,r}}^2}{2\norm{\tilde{x} - \bar{x}_{i,r}}}
\end{align*}
We claim that $\tilde{w} = \bar{w}$ and $\tilde{b} = \bar{b}$. First, we show that that there exists $\tilde{t} \in \R$ such that  $(\tilde{w}, \tilde{b}, \tilde{t})$ satisfy the constraints in \eqref{eq:max_margin}. Arithmetic shows that $\tilde{w}^\t \tilde{x} - \tilde{b} > 0$ and $\tilde{w}^\t \bar{x}_{i,r} - \tilde{b} < 0$. Since $\tilde{x}$ is the projection of $\bar{x}_{i,r}$ onto ${\conv}(\data^+_r)$, by the Projection Lemma, we have that
\begin{align*}
    (\tilde{x} - \bar{x}_{i,r})^\t x \geq (\tilde{x} - \bar{x}_{i,r})^\t \tilde{x}
\end{align*}
for all $x \in {\conv}(\data^+_r)$. Thus, for all $x \in {\conv}(\data^+_r)$, we have that
\begin{align*}
    \tilde{w}^\t x - \tilde{b}  \geq \tilde{w}^\t \tilde{x} - \tilde{b} > 0.
\end{align*}
We conclude that there exists $\tilde{t} \in \R$ such that $(\tilde{w}, \tilde{b}, \tilde{t})$ satisfy the constraints in \eqref{eq:max_margin}.

We have that
\begin{align*}
    \tilde{w }^\t \bar{x}_{i,r} - \tilde{b} = \frac{1}{\norm{\tilde{x}-\bar{x}_{i,r}}}(\bar{x}_{i,r}^\t (\tilde{x} - \bar{x}_{i,r}) - \frac{\norm{\tilde{x}}^2 - \norm{\bar{x}_{i,r}^2}}{2}) = -\frac{1}{2}\norm{\bar{x}_{i,r}-\tilde{x}}.
\end{align*}
A similar calculation shows that $\tilde{w}^\t x - \tilde{b}  \geq  \frac{1}{2}\norm{\bar{x}_{i,r}-\tilde{x}}$ for all $x \in {\conv}(\data^+_r)$. Thus, putting $\tilde{t} = \frac{1}{2} \norm{\tilde{x} - \bar{x}_{i,r}}$, $(\tilde{w},\tilde{b},\tilde{t})$ is feasible to \eqref{eq:max_margin}. By the equivalence in value of \eqref{eq:max_margin} and \eqref{eq:max_margin_dual} and the definition of $\tilde{x}$, we have that $\frac{1}{2} \norm{\tilde{x}-\bar{x}_{i,r}} = \bar{t}$. Thus, by uniqueness of $\bar{w}$ and $\bar{b}$, we have that $\tilde{w} = \bar{w}$ and $\tilde{b} = \bar{b}$. 

\textbf{Step 2: Putting it together.} We have shown that $\tilde{w}$ and $\tilde{b}$ define the max-margin separator. Let $P\bar{x}_{i,r}$ denote the projection of $\bar{x}_{i,r}$ onto $H := \{z \in \R^d : \tilde{w}^\t z = \tilde{b} \}$. We have that
\begin{align*}
    P \bar{x}_{i,r} & = \bar{x}_{i,r} + (\frac{\norm{\tilde{x}}^2 - \norm{\bar{x}_{i,r}}^2}{2 \norm{\tilde{x}-\bar{x}_{i,r}}} - (\frac{\tilde{x} - \bar{x}_{i,r}}{\norm{\tilde{x}-\bar{x}_{i,r}}})^\t \bar{x}_{i,r} \frac{\tilde{x}-\bar{x}_{i,r}}{\norm{\tilde{x}-\bar{x}_{i,r}}} \\
    & = \frac{\tilde{x}+\bar{x}_{i,r}}{2}.
\end{align*}
Suppose that $\dist(\bar{x}_{i,r}, {\conv}(\data_{r+1}^{-y_{\bar{x}_{i,r}}})) \geq 2 \delta$. Then, we have that $\dist(\bar{x}_{i,r}, H) > 2 \delta$. It can be easily seen that $\tilde{x}_{i,r} = \argmin_{x : \norm{x -\bar{x}_{i,r}} \leq \delta} \dist(x,H) = \bar{x}_{i,r} + \delta\frac{\tilde{x}-\bar{x}_{i,r}}{\norm{\tilde{x}-\bar{x}_{i,r}}}$. Note that $\bar{x}_{i,r} + \delta\frac{\tilde{x}-\bar{x}_{i,r}}{\norm{\tilde{x}-\bar{x}_{i,r}}} \in \X$, since $\bar{x}_{i,r}, \tilde{x} \in \X$ and $\X$ is convex.   

Similarly, if $\dist(\bar{x}_{i,r}, {\conv}(\data_{r+1}^{-y_{\bar{x}_{i,r}}})) < 2 \delta$, Then, we have that $\dist(\bar{x}_{i,r}, H) < \delta$. Then, by definition of $\tilde{x}_{i,r}$ we have that $\tilde{x}_{i,r} = \frac{\tilde{x}+\bar{x}_{i,r}}{2} \in \X$, where we have that $\frac{\tilde{x}+\bar{x}_{i,r}}{2} \in \X$ by convexity of $\X$.

\end{proof}

The following result shows that the max-margin separator is unique, and is a standard result on SVMs. 

\begin{lemma}\label{lem:unique}
Let $A,B \subset \R^d$ be disjoint, closed, and convex. The max-margin separator separating $A$ and $B$ is unique.
\end{lemma}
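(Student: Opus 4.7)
The plan is to prove uniqueness by a strict-convexity argument on the normal vector. I would suppose for contradiction that two distinct max-margin separators $(w_1, b_1)$ and $(w_2, b_2)$ exist, normalized so that $\norm{w_1}_2 = \norm{w_2}_2 = 1$, each achieving the common optimal margin $t^* > 0$. (Such a maximum is attained with positive value because $A$ and $B$ are disjoint, closed, and convex, and in the applications of this lemma inside the proof of Lemma~\ref{lem:project} the sets are finite, so the supremum is actually attained.)

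The key step is to consider the midpoint $\bar{w} = (w_1+w_2)/2$, $\bar{b} = (b_1+b_2)/2$ and observe that $(\bar w,\bar b)$ still separates $A$ from $B$ with margin at least $t^*$: for $a \in A$, averaging $w_j^\t a - b_j \geq t^*$ over $j \in \{1,2\}$ yields $\bar w^\t a - \bar b \geq t^*$, and analogously on $B$. Next I would apply the triangle inequality, which gives $\norm{\bar w}_2 \leq 1$, combined with strict convexity of $\norm{\cdot}_2^2$ to obtain strict inequality $\norm{\bar w}_2 < 1$ unless $w_1 = w_2$. In the strict case, rescaling $(\bar w,\bar b)$ by $1/\norm{\bar w}_2 > 1$ produces a unit-norm feasible separator achieving margin $t^*/\norm{\bar w}_2 > t^*$, contradicting optimality of $t^*$. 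Hence $w_1 = w_2 =: w$.

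With $w$ pinned down, it remains to show the offset is unique. Any feasible offset $b$ must satisfy $\max_{b' \in B} w^\t b' + t^* \leq b \leq \min_{a \in A} w^\t a - t^*$. If the two bounds differed, one could increase $t^*$ by a small amount and retain feasibility, contradicting the maximality of $t^*$; thus the bounds coincide and $b_1 = b_2$, establishing uniqueness of the hyperplane.

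The main obstacle I expect is the clean bookkeeping around the normalization and the verification that the averaged separator remains feasible; both are routine once the right variables are chosen. An alternative proof would go through the dual characterization already invoked in the proof of Lemma~\ref{lem:project}: the max-margin separator is the perpendicular bisector of the (unique) closest pair of points in $\conv(A)$ and $\conv(B)$, whose uniqueness follows from strict convexity of the squared distance on a product of convex sets. Either route yields the lemma with essentially the same work.
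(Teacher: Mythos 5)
Your proof is correct, and it takes a somewhat different route from the paper's. The paper reformulates the problem as the standard SVM quadratic program $\min \norm{w}^2$ subject to $w^\t x + b \geq 1$ on $A$ and $w^\t x + b \leq -1$ on $B$, and then invokes strong convexity of the objective to conclude uniqueness in one line. You instead work directly with the normalized margin-maximization form and run a midpoint argument: averaging two optimal separators preserves feasibility and margin, while strict convexity of the Euclidean unit ball forces $\norm{(w_1+w_2)/2}_2 < 1$ unless $w_1 = w_2$, after which rescaling contradicts optimality. The two arguments rest on the same underlying fact (strict convexity of $\norm{\cdot}_2$), but yours is more self-contained and, notably, more complete on one point: the paper's objective $\norm{w}^2$ is strongly convex in $w$ alone and constant in $b$, so "strongly convex objective, therefore unique solution" only pins down $w$ directly; your separate interval argument for the offset $b$ (if the feasible interval for $b$ were nondegenerate, the margin could be increased) supplies exactly the missing step. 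One shared caveat, which you already flag: for general disjoint closed convex sets the optimal margin need not be attained (or positive), so existence really relies on the sets being compact (finite convex hulls) in the lemma's applications; your parenthetical acknowledging this is appropriate, and your alternative route via the closest pair of points in $\conv(A)$ and $\conv(B)$ would work equally well in that compact setting.
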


\begin{proof}
There exists a separating hyperplane between $A$ and $B$ by the separating hyperplane Theorem. By a standard argument, the optimization problem for the max-margin separator can be stated as 
\begin{align*}
    \min_{w \in \R^d, b \in \R} & \norm{w}^2 \\
    \text{ s.t. } & w^\t x + b \geq 1 \, \forall x \in A \\
    & w^\t x + b \leq -1 \, \forall x \in B.
\end{align*}
This is a convex optimization problem with a strongly convex objective and, therefore, has a unique solution. 
\end{proof}

\subsection{SEALS with Querying Anywhere Capability}

\begin{algorithm}[h]
 \caption{Modified SEALS: Project onto Hyperplane}
 \label{alg:margin_multiple_seeds_project}
  \begin{algorithmic}[1]
 \State \textbf{Input: } seed labeled points $\data_{1}  \subset \X \times \{-1,1\}$   \; 
$r=1$  
\State Initialize the clusters $C_{i,r} = \{x_{i,0}\}$ for $i =1,\ldots, d-1$ 
\For{$r=1,2,\ldots$}
\For{$i=1,\ldots,d-1$}
\State $(H_{x,r},\gamma_{x,r}) =\text{MaxMarginSeparator}(\data_{r}^{-y_x}, x)$ for all $x \in C_{i,r} $
\State Let $\bar{x}_{i,r} \in \argmin_{x \in C_{i,r}} \gamma_{x,r}$ and $\widetilde{x}_{i,r} \in  \argmin_{x^\prime } \dist(x^\prime,H_{\bar{x}_{i,r},r})$
\State Query $\widetilde{x}_{i,r}$ 
\State Update $\data_{r+1} = \data_r \cup \{(\widetilde{x}_{i,r}, y_{\widetilde{x}_{i,r}})\}$ and $C_{i,r+1} = C_{i,r} \cup \{\widetilde{x}_{i,r}\}$ 
\EndFor

\State Fit a homogenous max-margin separator with normal vector $\widehat{w}_{r+1}$ to $\data_{r+1}$
\EndFor
 \end{algorithmic}
\end{algorithm}

The data structure used in SEALS enables queries of the $k$-nearest neighbors of any point in the input space. Therefore, a natural question is whether we can leverage this querying capability to improve the sample complexity of Algorithm \ref{alg:margin_multiple_seeds}, \emph{at the cost of being less generic than SEALS}. To address this question, we consider the Algorithm \ref{alg:margin_multiple_seeds_project}, which queries the nearest neighbor of the projection onto the max-margin separator. The key takeaway is that this modification of the algorithm removes the slow phase in the sample complexity of SEALS. The analysis requires a slightly different definition of $\Z_i$:
\begin{align*}
    \Z_i & = \{z \in \R^d : \norm{z - x_{i,0}} \leq 2\gamma_i + \epsilon \text{ and } \dist(z, \{x \in \R^d : x^\t w_* = 0\} ) \leq \epsilon \}.
\end{align*}

\begin{theorem}\label{thm:main_thm_supp_project}
Let $\epsilon > 0$. Let $x_{1,0},\ldots, x_{d-1,0}$ denote the seed points. Define $\gamma_i = \dist(x_{i,0}, {{\conv}}(\data^{-y_{x_{i,0}}}_1))$ for $i \in [d-1]$, where ${\conv}(\data^{-y_{x_{i,0}}}_1)$ is the convex hull of the points $\data^{-y_{x_{i,0}}}_1.$ Then, after Algorithm \ref{alg:margin_multiple_seeds_project} makes $ \max_{i \in [d-1]} d( 2\log(\frac{2 d \gamma_i }{\epsilon \min(\sigma,1) })))$ queries, we have that $\norm{\widehat{w}_r-w_*} \leq \epsilon.$
\end{theorem}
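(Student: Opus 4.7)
The plan is to mirror the structure of the proof of Theorem~\ref{thm:main_thm_supp}, but exploit the fact that Algorithm~\ref{alg:margin_multiple_seeds_project} drops the nearest-neighbor restriction on $\widetilde{x}_{i,r}$, which is exactly what eliminates the slow phase $\gamma_i/\delta$ in the query count. First I would prove the analog of Lemma~\ref{lem:project}: since Line~6 minimizes $\dist(x',H_{\bar{x}_{i,r},r})$ over all $x'\in\R^d$, $\widetilde{x}_{i,r}$ is the Euclidean projection of $\bar{x}_{i,r}$ onto $H_{\bar{x}_{i,r},r}$. The SVM duality computation already carried out in the proof of Lemma~\ref{lem:project} shows that this projection equals $\tfrac{1}{2}(\bar{x}_{i,r}+\tilde{x})$, where $\tilde{x}$ is the Euclidean projection of $\bar{x}_{i,r}$ onto $\conv(\data_r^{-y_{\bar{x}_{i,r}}})$. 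Crucially, there is no case split on $\delta$; the midpoint identity always holds.

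Next I would prove the analog of Lemma~\ref{lem:reduce_distances}: at every round, $\rho_{i,r+1}\le \rho_{i,r}/2$. This drops out of the midpoint formula, because $\|\widetilde{x}_{i,r}-\tilde{x}\|=\tfrac{1}{2}\|\bar{x}_{i,r}-\tilde{x}\|=\rho_{i,r}/2$, and if $y_{\widetilde{x}_{i,r}}$ disagrees with $y_{\bar{x}_{i,r}}$, $\rho_{i,r+1}$ is only smaller. With these two ingredients, Step~1 of the original proof collapses to the fast-phase bound alone: setting $\bar{\epsilon}=\min(\sigma,1)\epsilon^2/(2\sqrt d)$ and iterating from $\rho_{i,0}\le\gamma_i$, after $O(\log(\gamma_i/\bar{\epsilon}))=O(\log(d\gamma_i/(\epsilon\min(\sigma,1))))$ queries one has $\rho_{i,r}\le\bar{\epsilon}$; multiplying by the $d-1$ parallel searches reproduces the theorem's query bound. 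Steps~3 and~4 of the original proof---constructing $z_i\in\Z_i$ with $\widehat{w}_r^\t z_i=0$ and then bounding $\|\widehat{w}_r-w_*\|$ via Courant--Fischer and Cauchy--Schwarz---go through essentially verbatim, with the slightly enlarged radius in the new $\Z_i$ definition.

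The only place that requires real work is the analog of Step~2, i.e.\ bounding $\|\bar{x}_{i,r}-x_{i,0}\|$, which in turn controls $\|z_i-x_{i,0}\|$ and justifies the admissible radius $2\gamma_i+\epsilon$ in the new definition of $\Z_i$. Unlike in the proof of Theorem~\ref{thm:main_thm_supp}, successive queries are no longer guaranteed to be close to each other via graph edges; moreover $\bar{x}_{i,r}$ is only defined as the min-margin element of $C_{i,r}$, not necessarily the most recent query. I would handle this by an induction showing that every queried point lies in the closed ball $B(x_{i,0},\gamma_i)$: the base case is immediate, and for the inductive step, $\widetilde{x}_{i,r}=\tfrac{1}{2}(\bar{x}_{i,r}+\tilde{x})$ is a convex combination of $\bar{x}_{i,r}\in C_{i,r}\subset B(x_{i,0},\gamma_i)$ and $\tilde{x}$, the nearest point in the current opposite convex hull to $\bar{x}_{i,r}$, at distance at most $\rho_{i,r}\le \gamma_i/2^r$ from $\bar{x}_{i,r}$. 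The inductive invariant $\|\widetilde{x}_{i,r}-x_{i,0}\|\le\gamma_i(1-2^{-(r+1)})$ then closes the loop, and combined with $\|z_i-\bar{x}_{i,r}\|\le\bar{\epsilon}+\epsilon$ (from the triangle inequality on the segment $[\bar{x}_{i,r},\bar x]$ used to produce $z_i$) gives $\|z_i-x_{i,0}\|\le 2\gamma_i+\epsilon$, placing $z_i\in\Z_i$ as required.

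The main obstacle I anticipate is making the inductive invariant in the adapted Step~2 tight enough to justify the $2\gamma_i$ factor cleanly in the presence of the arbitrary choice of $\bar{x}_{i,r}$ across the whole of $C_{i,r}$; one may need to track not just the ball around $x_{i,0}$ but also the invariant that every element of $C_{i,r}$ is on the correct side of the true decision boundary within distance $\gamma_i$ of $x_{i,0}$, ruling out the possibility that $\bar{x}_{i,r}$ is chosen from a region that has drifted away. Once that is pinned down, the remainder of the argument is a strict simplification of the original proof, since the graph-edge constraint that produced the $\gamma_i/\delta$ term is no longer present.
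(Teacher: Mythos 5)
Your proposal is correct and follows essentially the same route as the paper's (sketched) proof: the removal of the nearest-neighbor constraint eliminates the case split in Lemma~\ref{lem:reduce_distances}, so $\rho_{i,r}$ halves every round and only the fast phase survives, while Steps 3--4 carry over verbatim. Your explicit induction for the Step~2 analog (bounding $\norm{x_{i,r}-x_{i,0}}$ by a geometric series despite $\bar{x}_{i,r}$ ranging over all of $C_{i,r}$) is a more careful version of the paper's one-line ``similar argument about the geometric series,'' and it does close as you describe, landing comfortably inside the $2\gamma_i+\epsilon$ radius used in the modified definition of $\Z_i$.
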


\begin{proof}[Proof Sketch]
The proof is very similar to the of Theorem \ref{thm:main_thm_supp}. Step 1 of Theorem \ref{thm:main_thm_supp} is essentially the same, but it does not matter whether $\rho_{i,r} \geq 2 \delta$ since the algorithm is not constrained by the nearest neighbor graph. A similar argument shows that after $\max_{i \in [d-1]}   \log(\frac{2 \gamma_i}{\bar{\epsilon}  })$ queries, we have that for all $i \in [d-1]$,  $\rho_{i,r} := \dist(\bar{x}_{i,r}, {\conv}(\data_r^{-y_{x_{i,r}}})) \leq \bar{\epsilon}$. 

Step 2 is also quite similar, except that we now have using a similar argument about the geometric series that
\begin{align*}
     \norm{x_{i,r} - x_{i,0}} & \leq 2\gamma_i .
\end{align*}

Step 3 is exactly the same. 

\end{proof}

\clearpage

\subsection{Impact of Embedding model ($G_z$) on \flood}\label{supp:varying_gz}

\begin{figure}[H]
  \centering
  \begin{subfigure}{0.95\textwidth}
    \includegraphics[width=\columnwidth]{figures/imagenet_legend_al.pdf}
  \end{subfigure}

  \begin{subfigure}{0.95\textwidth}
    \includegraphics[width=\columnwidth]{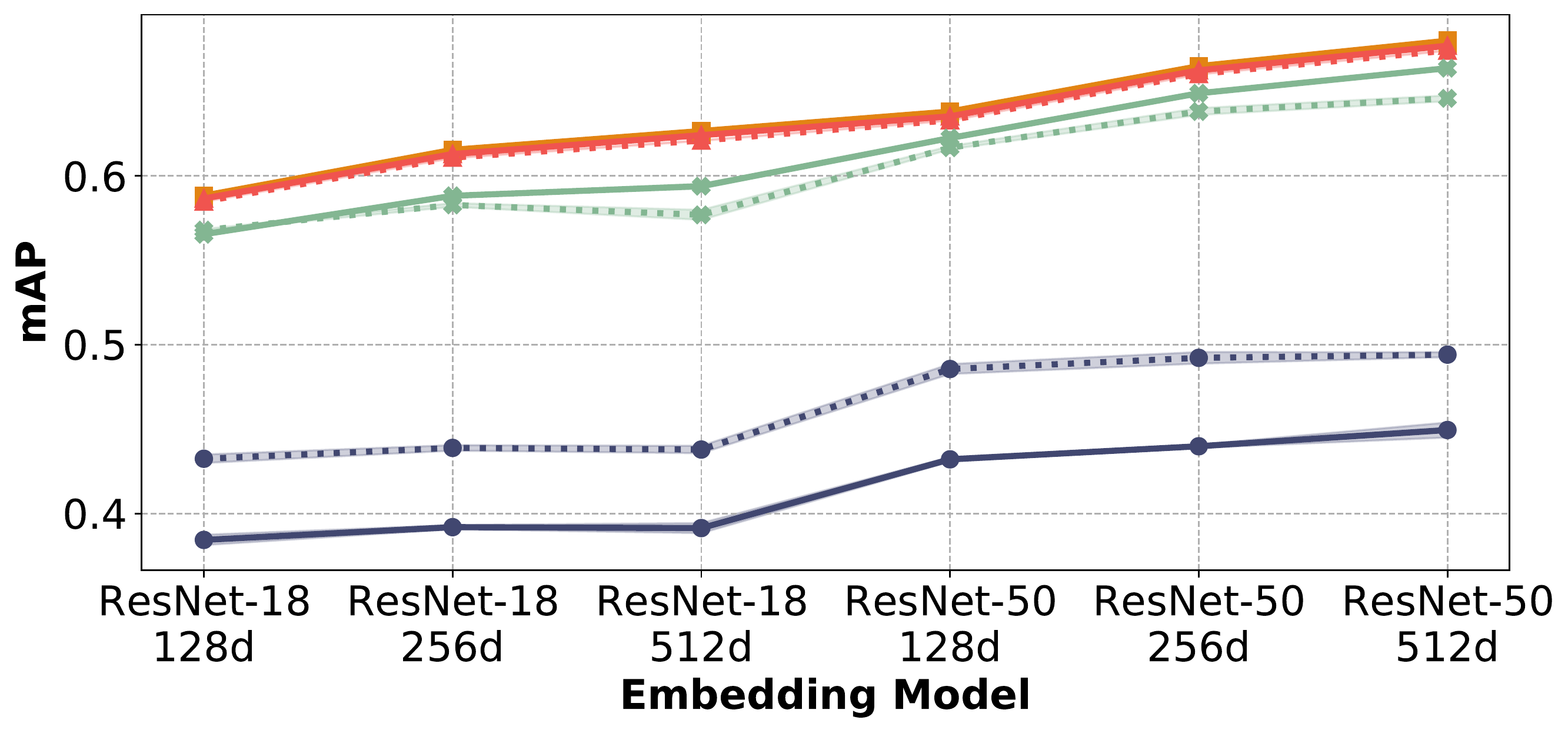}
  \end{subfigure}
\caption{
    Active learning on ImageNet with varying embedding models (ResNet-18 or ResNet-50) and dimensions (128, 256, or 512).
    Performance increases with larger models and higher dimensional embeddings.
    However, \flood achieves similar performance to the baseline approach regardless of the choice of model and dimension, empirically demonstrating  \flood' robustness to the embedding function.
}
\label{fig:imagenet_other_varying_gz}
\end{figure}

\begin{figure}[H]
  \centering
  \begin{subfigure}{0.95\textwidth}
    \includegraphics[width=\columnwidth]{figures/imagenet_legend_al.pdf}
  \end{subfigure}

  \begin{subfigure}{0.95\textwidth}
    \includegraphics[width=\columnwidth]{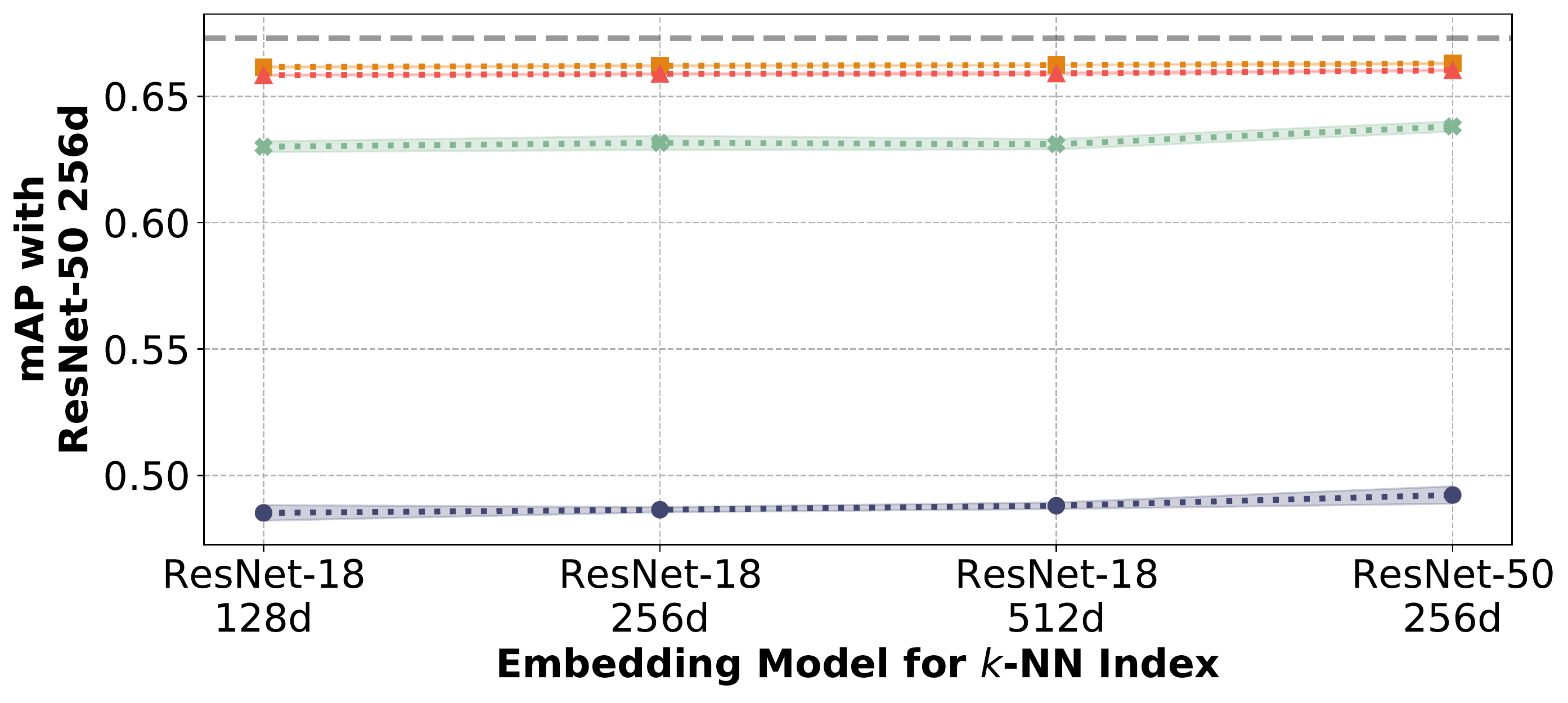}
  \end{subfigure}
\caption{
    Active learning on ImageNet with 256-dimensional ResNet-50 embeddings and varying k-NN indices.
    Different embeddings might be used for learning rare concepts than the embeddings used for similarity search in practice.
    Fortunately, \flood performs similarly for varying $k$-NN indices, as shown above.
    This can also be exploited to reduce further the cost of constructing the index by using a smaller, cheaper model to generate the embedding for similarity search and only applying the larger model to examples added to the candidate pool.
}
\label{fig:imagenet_other_varying_gz_knn}
\end{figure}
\clearpage

\subsection{Impact of $k$ on \flood}\label{supp:varying_k}

\begin{figure}[H]
  \centering
  \begin{subfigure}{0.95\textwidth}
    \includegraphics[width=\columnwidth]{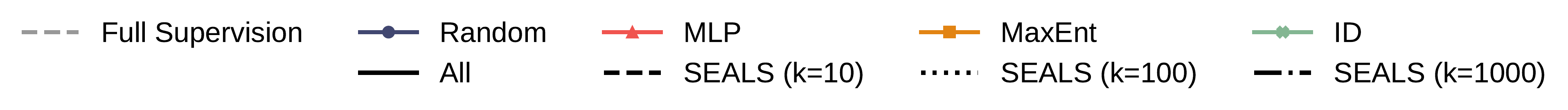}
  \end{subfigure}

  \begin{subfigure}{0.95\textwidth}
    \includegraphics[width=\columnwidth]{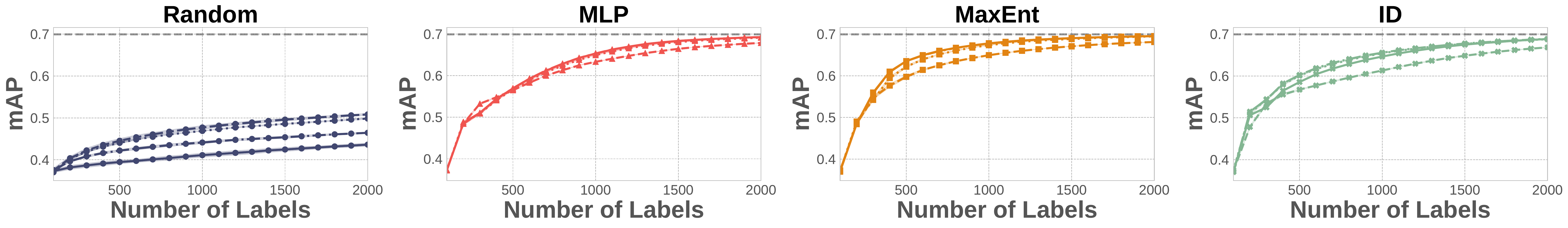}
    \label{fig:imagenet_ks_map}
  \end{subfigure}

  \begin{subfigure}{0.95\textwidth}
    \includegraphics[width=\columnwidth]{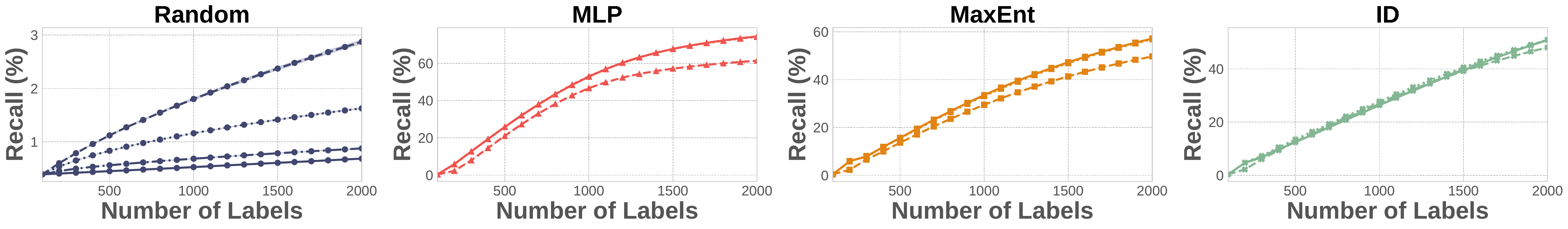}
    \label{fig:imagenet_ks_recall}
    \end{subfigure}

  \begin{subfigure}{0.95\textwidth}
    \includegraphics[width=\columnwidth]{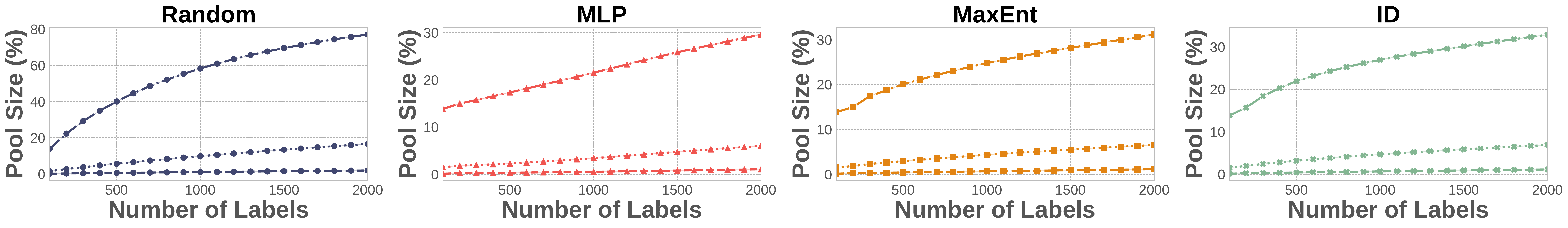}
    \label{fig:imagenet_ks_pool_size}
  \end{subfigure}
\caption{
\textbf{Impact of increasing $k$ on ImageNet ($|U|$=639,906).}
Larger values of $k$ help to close the gap between \flood and the baseline approach for active learning (top) and active search (middle).
However, increasing $k$ also increases the candidate pool size (bottom), presenting a trade-off between labeling efficiency and computational efficiency.
}
  \label{fig:imagenet_ks}
\end{figure}

\begin{figure}[H]
  \centering
  \begin{subfigure}{0.95\textwidth}
    \includegraphics[width=\columnwidth]{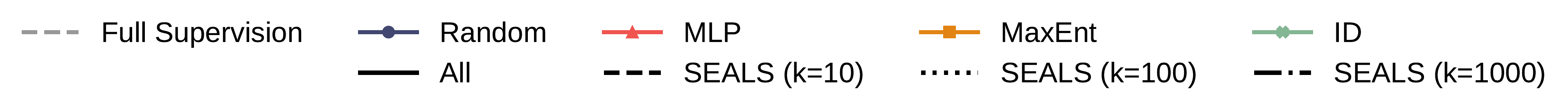}
  \end{subfigure}

  \begin{subfigure}{0.95\textwidth}
    \includegraphics[width=\columnwidth]{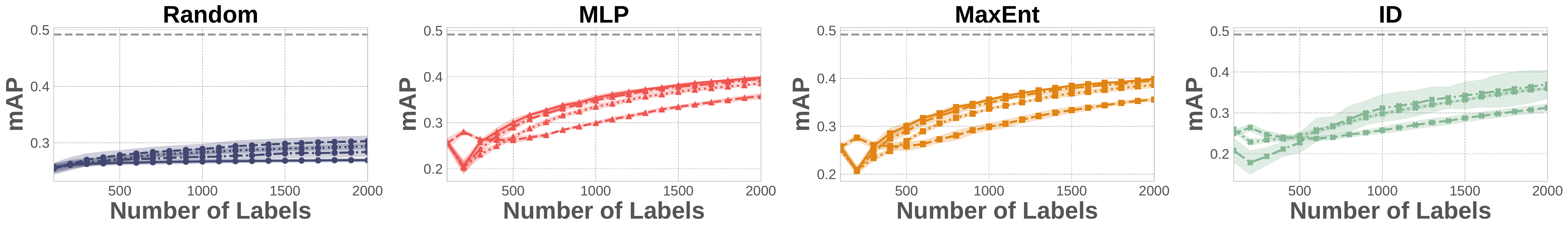}
    \label{fig:openimages_ks_map}
  \end{subfigure}

  \begin{subfigure}{0.95\textwidth}
    \includegraphics[width=\columnwidth]{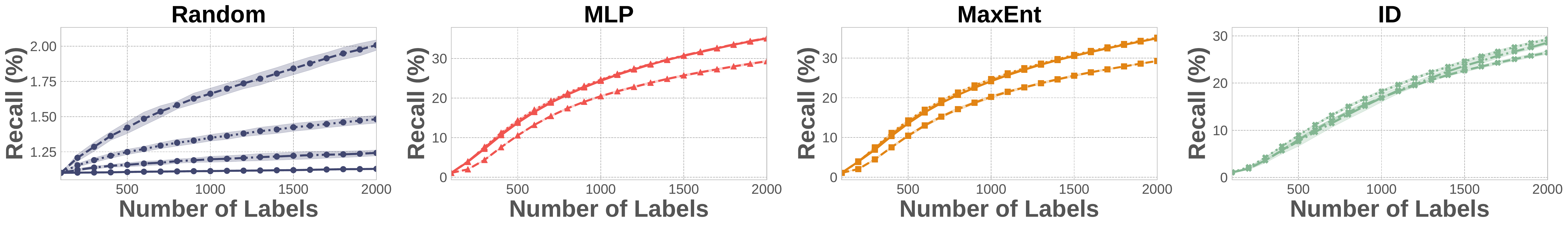}
    \label{fig:openimages_ks_recall}
    \end{subfigure}

  \begin{subfigure}{0.95\textwidth}
    \includegraphics[width=\columnwidth]{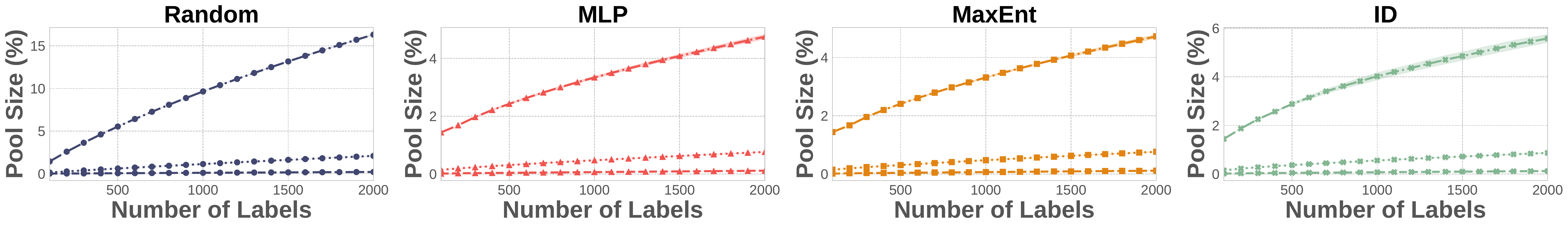}
    \label{fig:openimages_ks_pool_size}
  \end{subfigure}
\caption{
\textbf{Impact of increasing $k$ on OpenImages ($|U|$=6,816,296).}
Larger values of $k$ help to close the gap between \flood and the baseline approach for active learning (top) and active search (middle).
However, increasing $k$ also increases the candidate pool size (bottom), presenting a trade-off between labeling efficiency and computational efficiency.
}
  \label{fig:openimages_ks}
\end{figure}
\clearpage

\subsection{Impact of the number of initial positives on \flood}\label{supp:varying_positives}

\begin{figure}[H]
  \centering
  \begin{subfigure}{0.95\textwidth}
    \includegraphics[width=\columnwidth]{figures/imagenet_legend_al.pdf}
  \end{subfigure}

  \begin{subfigure}{0.95\textwidth}
    \includegraphics[width=\columnwidth]{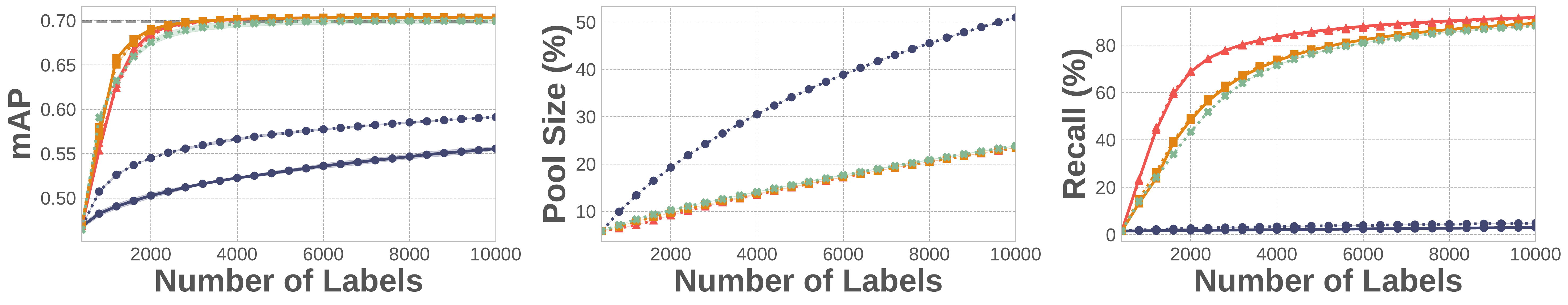}
    \caption{ImageNet ($|U|$=639,906)}
    \label{fig:imagenet_pos20}
  \end{subfigure}\vspace{2mm}

  \begin{subfigure}{0.95\textwidth}
    \includegraphics[width=\columnwidth]{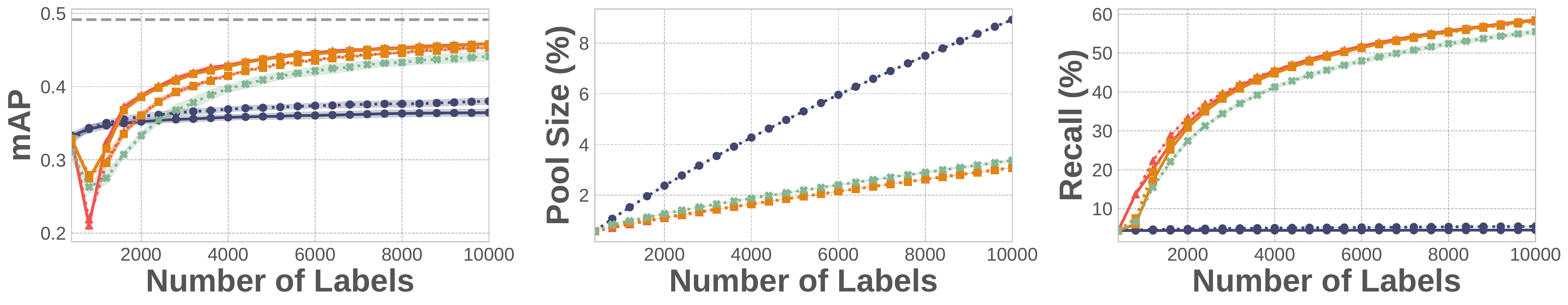}
    \caption{OpenImages ($|U|$=6,816,296)}
    \label{fig:openimages_pos20}
    \end{subfigure}\vspace{2mm}
\caption{
\textbf{Active learning and search with 20 positive seed examples} and a labeling budget of 10,000 examples on ImageNet (top) and OpenImages (bottom).
Across datasets and strategies, \flood with $k=100$ performs similarly to the baseline approach in terms of both the error the model achieves for active learning (left) and the recall of positive examples for active search (right), while only considering a fraction of the unlabeled data $U$ (middle).
}
  \label{fig:active_learning_and_search_pos20}
\end{figure}

\begin{figure}[H]
  \centering
  \begin{subfigure}{0.95\textwidth}
    \includegraphics[width=\columnwidth]{figures/imagenet_legend_al.pdf}
  \end{subfigure}

  \begin{subfigure}{0.95\textwidth}
    \includegraphics[width=\columnwidth]{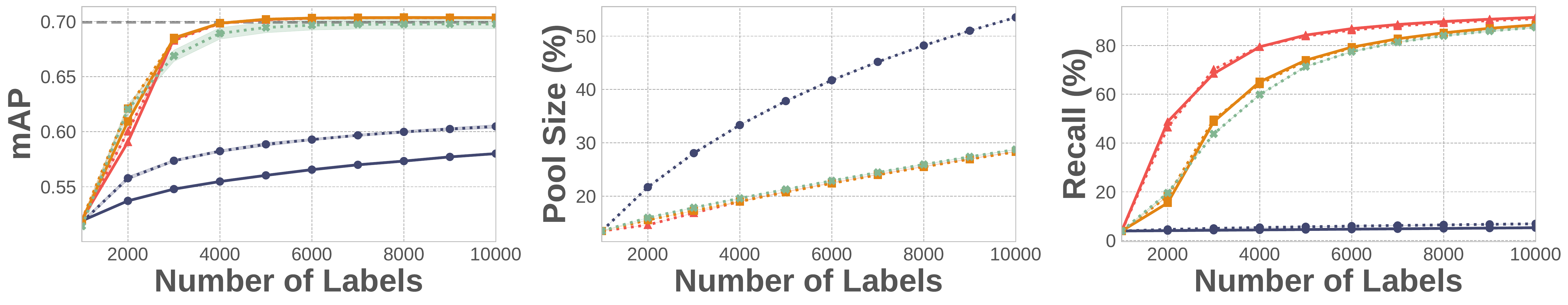}
    \caption{ImageNet ($|U|$=639,906)}
    \label{fig:imagenet_pos50}
  \end{subfigure}\vspace{2mm}

  \begin{subfigure}{0.95\textwidth}
    \includegraphics[width=\columnwidth]{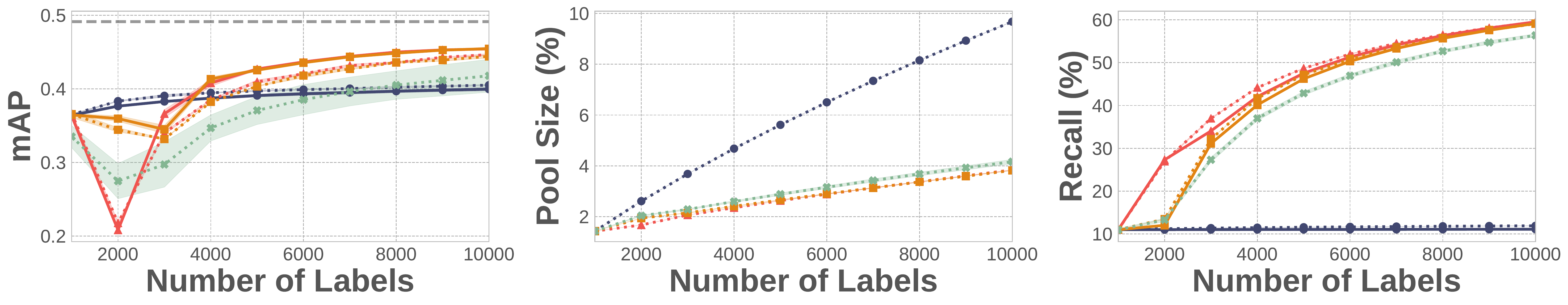}
    \caption{OpenImages ($|U|$=6,816,296)}
    \label{fig:openimages_pos50}
    \end{subfigure}\vspace{2mm}
\caption{
\textbf{Active learning and search with 50 positive seed examples} and a labeling budget of 10,000 examples on ImageNet (top) and OpenImages (bottom).
Across datasets and strategies, \flood with $k=100$ performs similarly to the baseline approach in terms of both the error the model achieves for active learning (left) and the recall of positive examples for active search (right), while only considering a fraction of the unlabeled data $U$ (middle).
}
  \label{fig:active_learning_and_search_pos50}
\end{figure}

\subsection{Latent structure}

\begin{figure}[H]
  \centering
  \begin{subfigure}{0.475\textwidth}
      \includegraphics[width=\columnwidth]{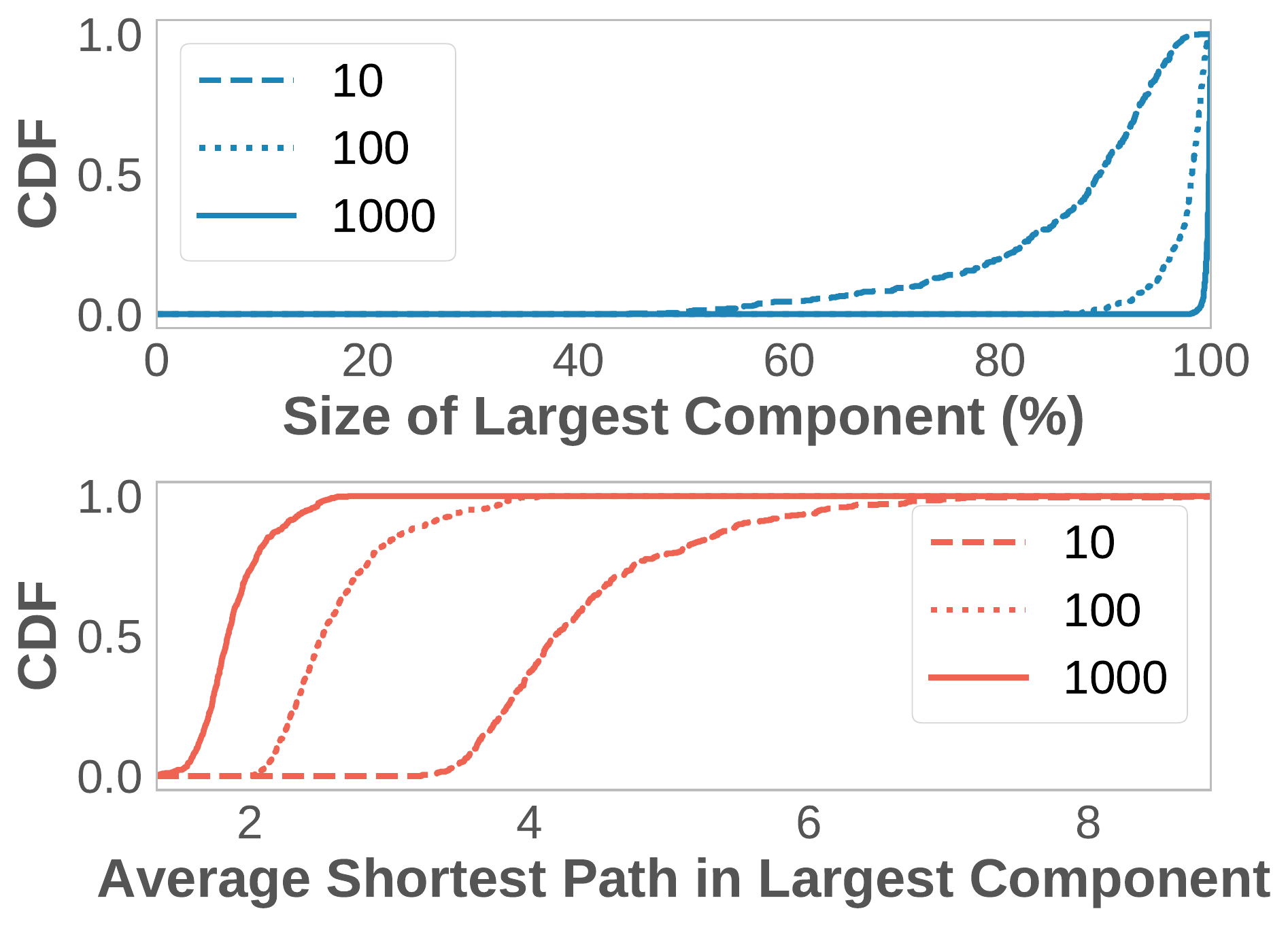}
    \caption{ImageNet}
    \label{fig:imagenet_clusters}
  \end{subfigure}
  \hfill
  \begin{subfigure}{0.475\textwidth}
      \includegraphics[width=\columnwidth]{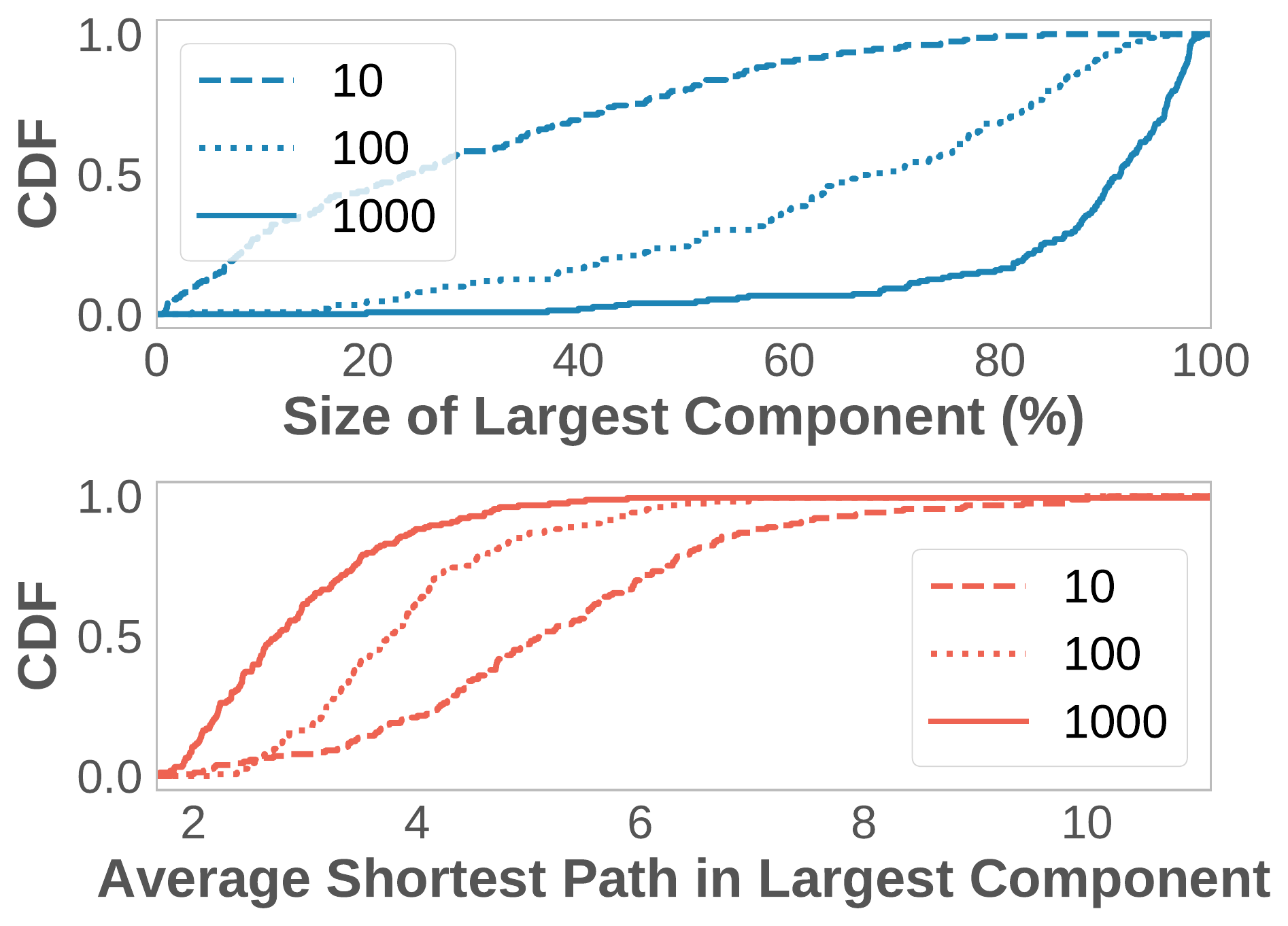}
    \caption{OpenImages}
    \label{fig:openimages_clusters}
  \end{subfigure}
  \caption{
  Measurements of the latent structure of unseen concepts in ImageNet and OpenImages.
  The 10B images dataset was excluded because only a few thousand examples were labeled.
  The largest connected component gives a sense of how much of the concept \flood can reach, while the average shortest path serves as a proxy for how long it will take to explore.
  }
  \label{fig:latent_structure}
\end{figure}

\begin{figure}[H]
  \centering
  \begin{subfigure}{.80\textwidth}
      \begin{subfigure}{.49\columnwidth}
          \includegraphics[width=\columnwidth]{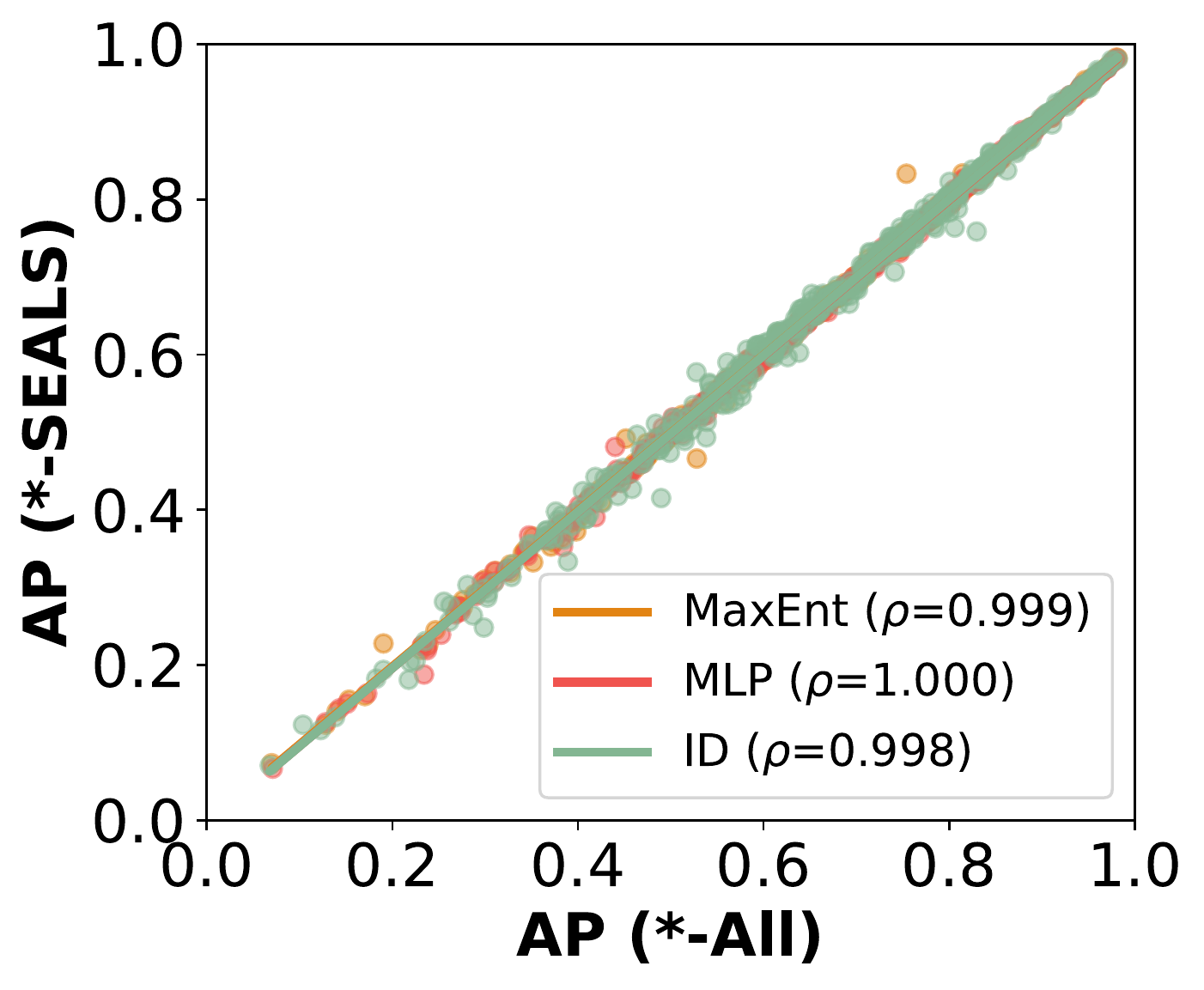}
        \caption{ImageNet}
        \label{fig:imagenet_per_class_ap}
      \end{subfigure}
      \begin{subfigure}{.49\columnwidth}
          \includegraphics[width=\columnwidth]{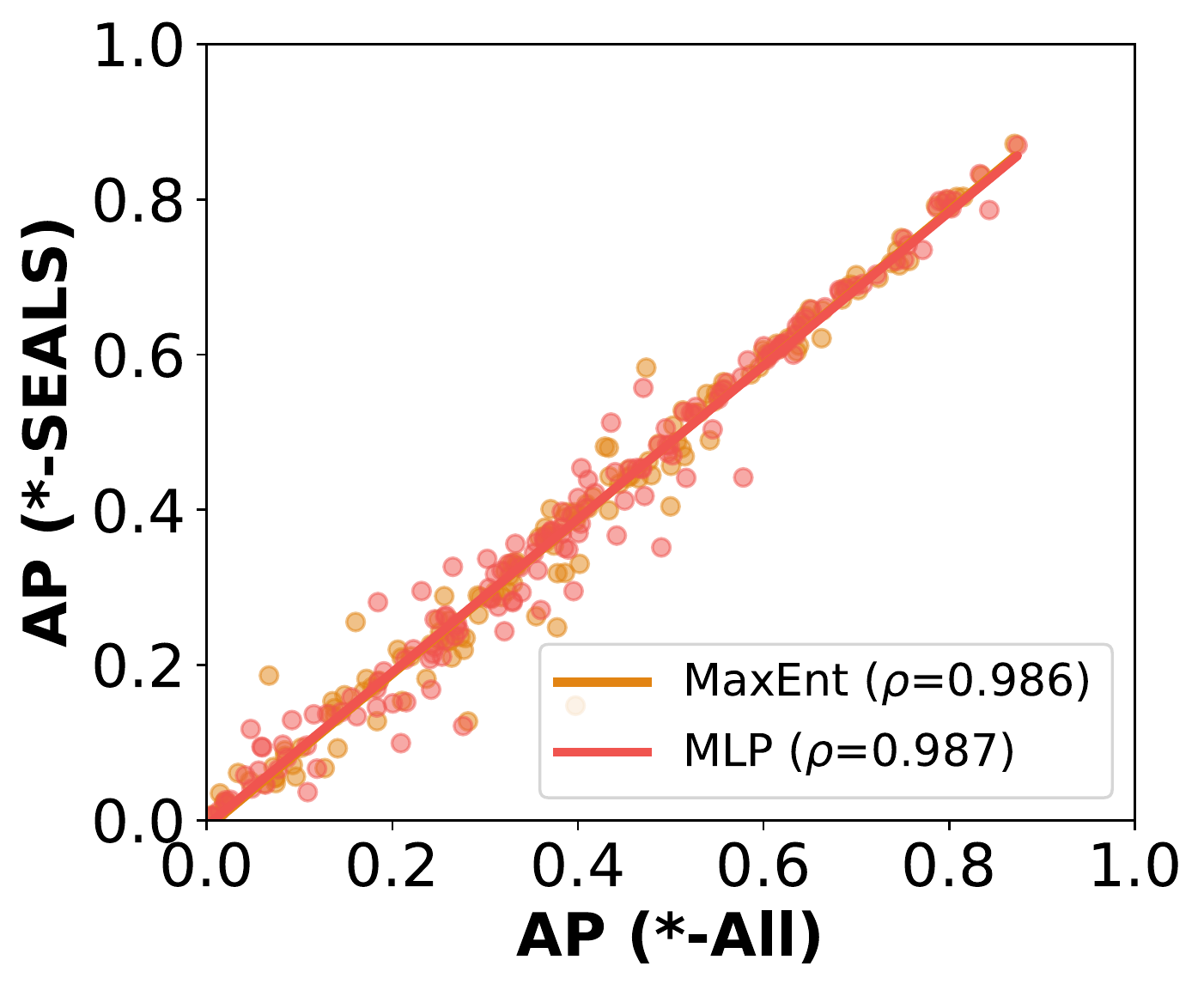}
        \caption{OpenImages}
        \label{fig:openimages_per_class_ap}
      \end{subfigure}
  \end{subfigure}\vspace{-1mm}
\caption{
    The per-class APs of \flood ($k=100$) were highly correlated to the baseline approaches (*-All) for active learning on ImageNet (right) and OpenImages (left).
On OpenImages with a budget of 2,000 labels, the Pearson’s correlation ($\rho$) between the baseline and SEALS for the average precision of individual classes was 0.986 for MaxEnt and 0.987 for MLP. The least-squares fit had a slope of 0.99 and y-intercept of -0.01. On ImageNet, the correlations were even higher.
}\vspace{-2mm}
  \label{fig:per_class_ap}
\end{figure}
\clearpage

\subsection{Comparison to pool of randomly selected examples}\label{appendix:random_pool}

\begin{figure}[H]
  \centering
  \begin{subfigure}{0.95\textwidth}
    \includegraphics[width=\columnwidth]{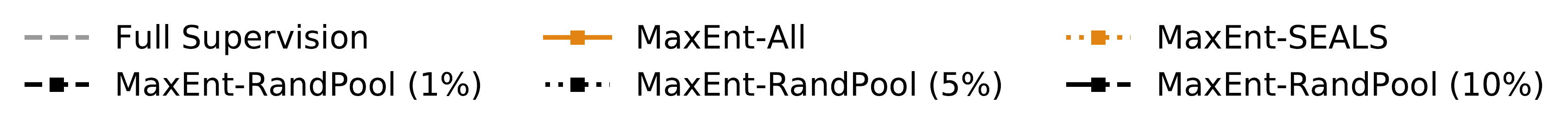}
  \end{subfigure}

  \begin{subfigure}{0.95\textwidth}
    \includegraphics[width=\columnwidth]{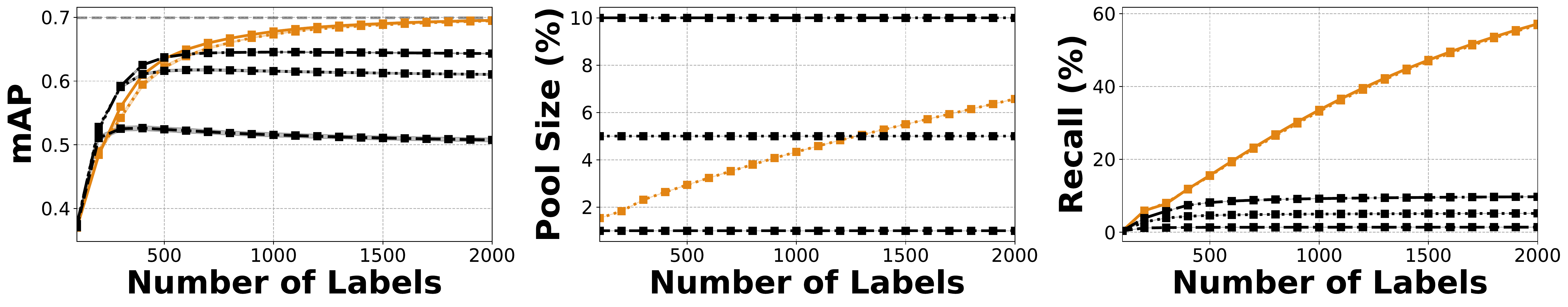}
    \caption{ImageNet ($|U|$=639,906)}
    \label{fig:imagenet_random_pool_entropy}
  \end{subfigure}

  \begin{subfigure}{0.95\textwidth}
    \includegraphics[width=\columnwidth]{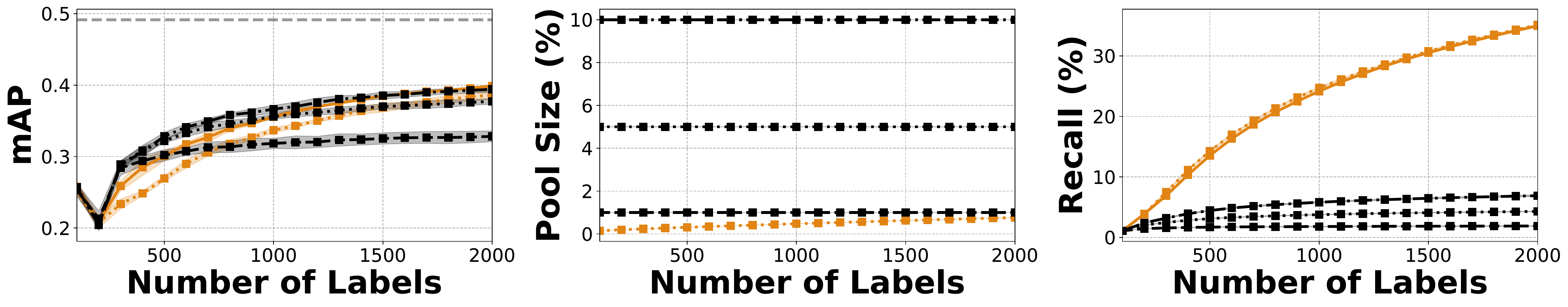}
    \caption{OpenImages ($|U|$=6,816,296)}
    \label{fig:openimages_random_pool_entropy}
    \end{subfigure}

\caption{
MaxEnt-\flood ($k=100$) versus MaxEnt applied to a candidate pool of randomly selected examples (RandPool).
Because the concepts we considered were so rare, as is often the case in practice, randomly chosen examples are unlikely to be close to the decision boundary, and a much larger pool is required to match \flood.
On ImageNet (top), MaxEnt-\flood outperformed MaxEnt-RandPool in terms of both the error the model achieves for active learning (left) and the recall of positive examples for active search (right) even with a pool containing 10\% of the data (middle).
On Openimages (bottom), MaxEnt-RandPool needed at least $5\times$ as much data to match MaxEnt-\flood for active learning and failed to achieve similar recall even with $10\times$ the data.
}
  \label{fig:random_pool_entropy}
\end{figure}
\clearpage

\begin{figure}[H]
  \centering
  \begin{subfigure}{0.95\textwidth}
    \includegraphics[width=\columnwidth]{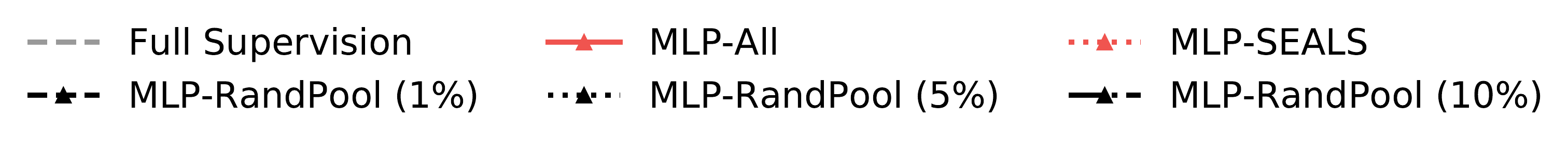}
  \end{subfigure}

  \begin{subfigure}{0.95\textwidth}
    \includegraphics[width=\columnwidth]{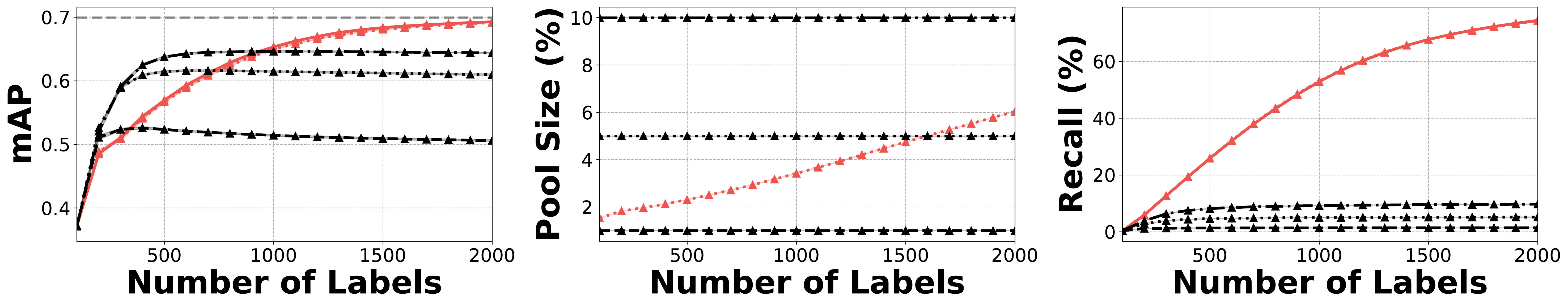}
    \caption{ImageNet ($|U|$=639,906)}
    \label{fig:imagenet_random_pool_mostpos}
  \end{subfigure}

  \begin{subfigure}{0.95\textwidth}
    \includegraphics[width=\columnwidth]{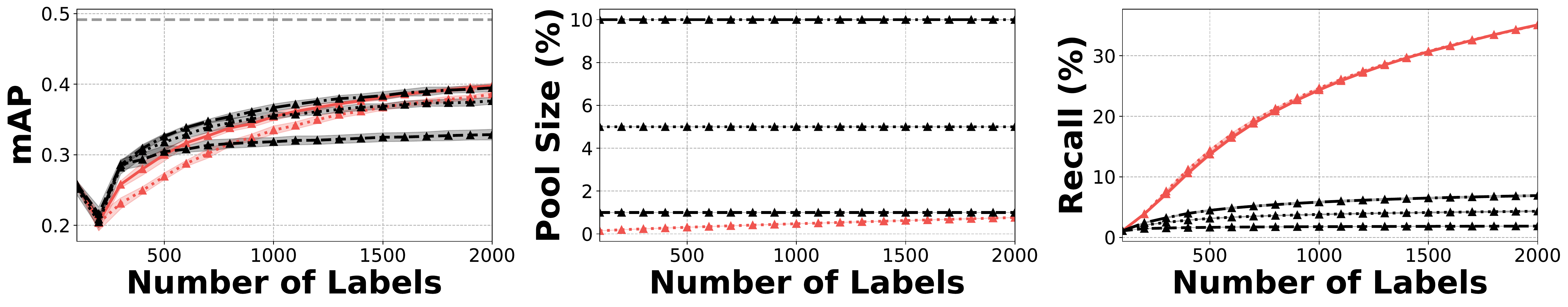}
    \caption{OpenImages ($|U|$=6,816,296)}
    \label{fig:openimages_random_pool_mostpos}
    \end{subfigure}

\caption{
MLP-\flood ($k=100$) versus MLP applied to a candidate pool of randomly selected examples (RandPool).
Because the concepts we considered were so rare, as is often the case in practice, randomly chosen examples are unlikely to be close to the decision boundary, and a much larger pool is required to match \flood.
On ImageNet (top), MLP-\flood outperformed MLP-RandPool in terms of both the error the model achieves for active learning (left) and the recall of positive examples for active search (right) even with a pool containing 10\% of the data (middle).
On Openimages (bottom), MLP-RandPool needed at least $5\times$ as much data to match MLP-\flood for active learning and failed to achieve similar recall even with $10\times$ the data.
}
  \label{fig:random_pool_mostpos}
\end{figure}

\begin{figure}[H]
  \centering
  \begin{subfigure}{0.95\textwidth}
    \includegraphics[width=\columnwidth]{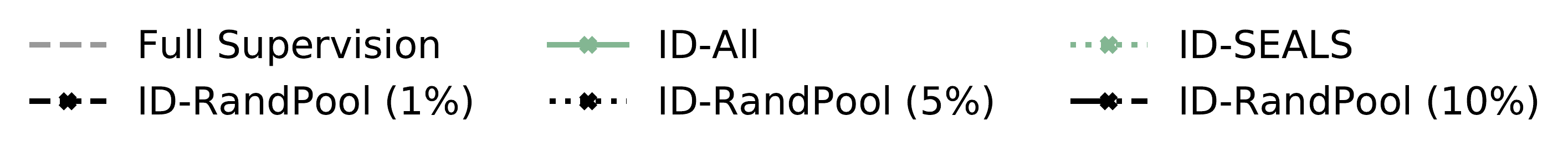}
  \end{subfigure}

  \begin{subfigure}{0.95\textwidth}
    \includegraphics[width=\columnwidth]{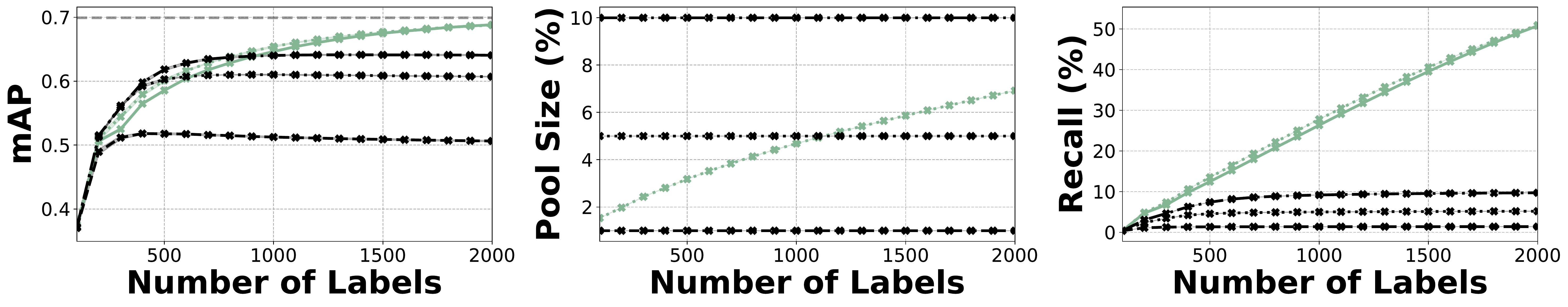}
    \caption{ImageNet ($|U|$=639,906)}
    \label{fig:imagenet_random_pool_information_density}
  \end{subfigure}

  \begin{subfigure}{0.95\textwidth}
    \includegraphics[width=\columnwidth]{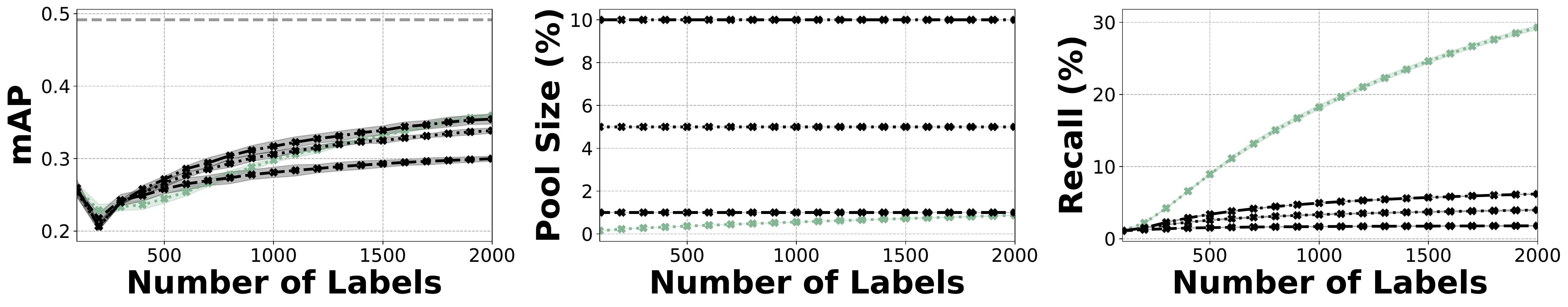}
    \caption{OpenImages ($|U|$=6,816,296)}
    \label{fig:openimages_random_pool_information_density}
    \end{subfigure}

\caption{
ID-\flood ($k=100$) versus ID applied to a candidate pool of randomly selected examples (RandPool).
Because the concepts we considered were so rare, as is often the case in practice, randomly chosen examples are unlikely to be close to the decision boundary, and a much larger pool is required to match \flood.
On ImageNet (top), ID-\flood outperformed ID-RandPool in terms of both the error the model achieves for active learning (left) and the recall of positive examples for active search (right) even with a pool containing 10\% of the data (middle).
On Openimages (bottom), ID-RandPool needed at least $5\times$ as much data to match ID-\flood for active learning and failed to achieve similar recall even with $10\times$ the data.
}
  \label{fig:random_pool_information_density}
\end{figure}

\subsection{Active learning on each selected class from OpenImages}\label{appendix:openimages_per_class_active_learning}
\begin{table}[H]
\centering
\caption{\textbf{Top $\frac{1}{3}$ of classes from Openimages for active learning.} (1 of 3) Average precision and measurements of the largest component (LC) for each selected class (153 total) from OpenImages with a labeling budget of 2,000 examples. Classes are ordered based on MaxEnt-\flood.}
\resizebox{\textwidth}{!}{
    \begin{tabular}{lccccccc}
\toprule
\makecell{\bfseries Display \\ \bfseries Name} &  \makecell{\bfseries Total\\ \bfseries Positives} &  \makecell{\bfseries Size of \\ \bfseries the LC \\ \bfseries (\%)} & \makecell{\bfseries Average \\ \bfseries Shortest \\ \bfseries Path in \\ \bfseries the LC} & \makecell{\bfseries Random \\ \bfseries (All)} &  \makecell{\bfseries MaxEnt \\ \bfseries (SEALS)} &  \makecell{\bfseries MaxEnt \\ \bfseries (All)} &  \makecell{\bfseries Full \\ \bfseries Supervision} \\
\hline
                          Citrus &         796 &  65 &                   3.34 &     $0.34$ &       $0.87$ &     $0.87$ &           $0.87$ \\
                      Cargo ship &         219 &  84 &                   2.85 &     $0.70$ &       $0.83$ &     $0.83$ &           $0.86$ \\
                      Blackberry &         245 &  87 &                   2.64 &     $0.67$ &       $0.80$ &     $0.80$ &           $0.79$ \\
                     Galliformes &         674 &  82 &                   3.98 &     $0.72$ &       $0.80$ &     $0.82$ &           $0.92$ \\
                            Rope &         618 &  59 &                   3.48 &     $0.29$ &       $0.80$ &     $0.81$ &           $0.74$ \\
                        Hurdling &         269 &  92 &                   2.48 &     $0.26$ &       $0.80$ &     $0.79$ &           $0.80$ \\
                    Roman temple &         345 &  89 &                   2.72 &     $0.63$ &       $0.79$ &     $0.79$ &           $0.82$ \\
                   Monster truck &         286 &  84 &                   2.84 &     $0.41$ &       $0.79$ &     $0.80$ &           $0.81$ \\
                           Pasta &         954 &  91 &                   3.21 &     $0.42$ &       $0.75$ &     $0.75$ &           $0.79$ \\
                           Chess &         740 &  83 &                   3.39 &     $0.53$ &       $0.73$ &     $0.74$ &           $0.86$ \\
         Bowed string instrument &         728 &  78 &                   3.05 &     $0.72$ &       $0.72$ &     $0.74$ &           $0.79$ \\
                          Parrot &        1546 &  89 &                   2.85 &     $0.59$ &       $0.72$ &     $0.76$ &           $0.92$ \\
                        Calabaza &         870 &  82 &                   3.15 &     $0.50$ &       $0.71$ &     $0.75$ &           $0.81$ \\
                       Superhero &         968 &  58 &                   5.28 &     $0.17$ &       $0.70$ &     $0.70$ &           $0.67$ \\
                           Drums &         741 &  69 &                   3.30 &     $0.52$ &       $0.70$ &     $0.72$ &           $0.83$ \\
                  Shooting range &         189 &  57 &                   3.06 &     $0.38$ &       $0.69$ &     $0.69$ &           $0.68$ \\
      Ancient roman architecture &         589 &  76 &                   3.34 &     $0.61$ &       $0.68$ &     $0.70$ &           $0.77$ \\
                        Cupboard &         898 &  88 &                   3.41 &     $0.53$ &       $0.68$ &     $0.69$ &           $0.75$ \\
                            Ibis &         259 &  93 &                   2.53 &     $0.29$ &       $0.68$ &     $0.69$ &           $0.66$ \\
                          Cattle &        5995 &  93 &                   3.22 &     $0.37$ &       $0.67$ &     $0.68$ &           $0.74$ \\
                         Galleon &         182 &  74 &                   2.54 &     $0.45$ &       $0.66$ &     $0.66$ &           $0.61$ \\
                   Kitchen knife &         360 &  63 &                   3.52 &     $0.32$ &       $0.66$ &     $0.65$ &           $0.66$ \\
                      Grapefruit &         506 &  83 &                   3.06 &     $0.50$ &       $0.65$ &     $0.65$ &           $0.69$ \\
                          Deacon &         341 &  80 &                   2.80 &     $0.48$ &       $0.64$ &     $0.64$ &           $0.67$ \\
                             Rye &         128 &  75 &                   2.63 &     $0.51$ &       $0.64$ &     $0.64$ &           $0.65$ \\
                       Chartreux &         147 &  91 &                   2.59 &     $0.50$ &       $0.63$ &     $0.63$ &           $0.69$ \\
                San Pedro cactus &         318 &  76 &                   3.32 &     $0.17$ &       $0.62$ &     $0.63$ &           $0.71$ \\
         Skateboarding Equipment &         862 &  57 &                   5.92 &     $0.20$ &       $0.62$ &     $0.66$ &           $0.66$ \\
                  Electric piano &         345 &  56 &                   4.15 &     $0.24$ &       $0.61$ &     $0.60$ &           $0.48$ \\
                           Straw &         547 &  65 &                   2.85 &     $0.33$ &       $0.61$ &     $0.62$ &           $0.61$ \\
                           Berry &         874 &  82 &                   3.78 &     $0.30$ &       $0.61$ &     $0.61$ &           $0.69$ \\
          East-european shepherd &         206 &  86 &                   2.16 &     $0.61$ &       $0.61$ &     $0.62$ &           $0.65$ \\
                            Ring &         676 &  75 &                   3.87 &     $0.15$ &       $0.61$ &     $0.64$ &           $0.64$ \\
                             Rat &        1151 &  94 &                   2.50 &     $0.32$ &       $0.60$ &     $0.60$ &           $0.61$ \\
                 Coral reef fish &         434 &  90 &                   3.07 &     $0.51$ &       $0.60$ &     $0.64$ &           $0.79$ \\
                   Concert dance &         357 &  61 &                   3.91 &     $0.37$ &       $0.60$ &     $0.60$ &           $0.70$ \\
                      Whole food &         708 &  73 &                   3.66 &     $0.18$ &       $0.58$ &     $0.60$ &           $0.57$ \\
               Modern pentathlon &         772 &  43 &                   2.59 &     $0.13$ &       $0.58$ &     $0.47$ &           $0.51$ \\
                         Gymnast &         235 &  77 &                   2.39 &     $0.39$ &       $0.57$ &     $0.59$ &           $0.65$ \\
                 California roll &         368 &  84 &                   3.49 &     $0.05$ &       $0.56$ &     $0.56$ &           $0.58$ \\
                          Shrimp &         907 &  85 &                   3.82 &     $0.07$ &       $0.56$ &     $0.56$ &           $0.58$ \\
                       Log cabin &         448 &  70 &                   3.62 &     $0.44$ &       $0.55$ &     $0.55$ &           $0.62$ \\
                  Formula racing &         351 &  88 &                   3.38 &     $0.33$ &       $0.55$ &     $0.54$ &           $0.60$ \\
                            Herd &         648 &  75 &                   3.88 &     $0.42$ &       $0.54$ &     $0.55$ &           $0.67$ \\
                      Embroidery &         356 &  81 &                   3.41 &     $0.32$ &       $0.53$ &     $0.53$ &           $0.60$ \\
                        Shelving &         810 &  66 &                   3.41 &     $0.27$ &       $0.53$ &     $0.53$ &           $0.51$ \\
                        Downhill &         194 &  84 &                   2.64 &     $0.42$ &       $0.53$ &     $0.51$ &           $0.59$ \\
                         Daylily &         391 &  87 &                   3.25 &     $0.20$ &       $0.51$ &     $0.50$ &           $0.49$ \\
             Automotive exterior &        1060 &  23 &                   2.74 &     $0.65$ &       $0.49$ &     $0.54$ &           $0.69$ \\
                   Ciconiiformes &         426 &  88 &                   3.47 &     $0.33$ &       $0.49$ &     $0.51$ &           $0.48$ \\
                       Monoplane &         756 &  81 &                   4.70 &     $0.13$ &       $0.48$ &     $0.43$ &           $0.48$ \\
\bottomrule
\end{tabular}
}
\end{table}
\clearpage

\begin{table}[H]
\centering
\caption{\textbf{Middle $\frac{1}{3}$ of classes from Openimages for active learning.} (2 of 3) Average precision and measurements of the largest component (LC) for each selected class (153 total) from OpenImages with a labeling budget of 2,000 examples. Classes are ordered based on MaxEnt-\flood.}
\resizebox{\textwidth}{!}{
    \begin{tabular}{lccccccc}
\toprule
\makecell{\bfseries Display \\ \bfseries Name} &  \makecell{\bfseries Total\\ \bfseries Positives} &  \makecell{\bfseries Size of \\ \bfseries the LC \\ \bfseries (\%)} & \makecell{\bfseries Average \\ \bfseries Shortest \\ \bfseries Path in \\ \bfseries the LC} & \makecell{\bfseries Random \\ \bfseries (All)} &  \makecell{\bfseries MaxEnt \\ \bfseries (SEALS)} &  \makecell{\bfseries MaxEnt \\ \bfseries (All)} &  \makecell{\bfseries Full \\ \bfseries Supervision} \\
\hline
                    Seafood boil &         322 &  85 &                   2.73 &     $0.31$ &       $0.48$ &     $0.49$ &           $0.51$ \\
                     Landscaping &         789 &  32 &                   4.71 &     $0.26$ &       $0.48$ &     $0.51$ &           $0.63$ \\
                         Skating &         561 &  77 &                   4.04 &     $0.17$ &       $0.48$ &     $0.43$ &           $0.40$ \\
                      Floodplain &         567 &  50 &                   4.81 &     $0.61$ &       $0.47$ &     $0.52$ &           $0.66$ \\
                        Knitting &         409 &  71 &                   3.10 &     $0.61$ &       $0.46$ &     $0.50$ &           $0.73$ \\
                             Elk &         353 &  84 &                   2.40 &     $0.15$ &       $0.46$ &     $0.48$ &           $0.45$ \\
                        Bilberry &         228 &  75 &                   3.77 &     $0.10$ &       $0.45$ &     $0.45$ &           $0.32$ \\
                            Goat &        1190 &  88 &                   3.72 &     $0.17$ &       $0.44$ &     $0.45$ &           $0.61$ \\
                   Fortification &         287 &  66 &                   3.96 &     $0.43$ &       $0.44$ &     $0.46$ &           $0.52$ \\
                    Annual plant &         677 &  38 &                   6.07 &     $0.39$ &       $0.44$ &     $0.43$ &           $0.58$ \\
 Mcdonnell douglas f/a-18 hornet &         160 &  88 &                   3.51 &     $0.11$ &       $0.44$ &     $0.47$ &           $0.37$ \\
                           Tooth &         976 &  49 &                   4.77 &     $0.16$ &       $0.44$ &     $0.48$ &           $0.56$ \\
                          Briefs &         539 &  78 &                   3.68 &     $0.15$ &       $0.43$ &     $0.44$ &           $0.46$ \\
                   Sirloin steak &         297 &  60 &                   4.97 &     $0.14$ &       $0.42$ &     $0.42$ &           $0.46$ \\
                        Smoothie &         330 &  78 &                   3.22 &     $0.15$ &       $0.41$ &     $0.41$ &           $0.38$ \\
                          Glider &         393 &  82 &                   3.94 &     $0.08$ &       $0.40$ &     $0.40$ &           $0.48$ \\
                Bathroom cabinet &         368 &  95 &                   2.39 &     $0.29$ &       $0.40$ &     $0.39$ &           $0.37$ \\
               White-tailed deer &         238 &  87 &                   3.24 &     $0.34$ &       $0.40$ &     $0.43$ &           $0.43$ \\
                    Bird of prey &         712 &  78 &                   3.81 &     $0.76$ &       $0.40$ &     $0.50$ &           $0.91$ \\
                      Egg (Food) &        1193 &  85 &                   4.31 &     $0.14$ &       $0.40$ &     $0.37$ &           $0.63$ \\
                         Soldier &        1032 &  74 &                   3.80 &     $0.62$ &       $0.40$ &     $0.41$ &           $0.72$ \\
                       Cranberry &         450 &  63 &                   4.10 &     $0.13$ &       $0.39$ &     $0.39$ &           $0.37$ \\
                          Estate &         667 &  51 &                   4.03 &     $0.47$ &       $0.39$ &     $0.40$ &           $0.54$ \\
               Chocolate truffle &         288 &  58 &                   5.47 &     $0.10$ &       $0.39$ &     $0.40$ &           $0.42$ \\
                     Town square &         617 &  58 &                   3.69 &     $0.31$ &       $0.38$ &     $0.36$ &           $0.47$ \\
                           Bakmi &         191 &  76 &                   3.34 &     $0.27$ &       $0.37$ &     $0.37$ &           $0.36$ \\
                    Trail riding &         679 &  90 &                   3.15 &     $0.21$ &       $0.37$ &     $0.37$ &           $0.38$ \\
              Aerial photography &         931 &  63 &                   3.99 &     $0.39$ &       $0.37$ &     $0.37$ &           $0.66$ \\
                          Lugger &         103 &  62 &                   3.14 &     $0.35$ &       $0.37$ &     $0.37$ &           $0.42$ \\
                     Paddy field &         468 &  70 &                   4.02 &     $0.17$ &       $0.36$ &     $0.36$ &           $0.43$ \\
                         Pavlova &         195 &  86 &                   2.60 &     $0.19$ &       $0.36$ &     $0.36$ &           $0.34$ \\
                    Steamed rice &         580 &  75 &                   4.54 &     $0.10$ &       $0.35$ &     $0.37$ &           $0.48$ \\
                          Pancit &         385 &  86 &                   3.16 &     $0.21$ &       $0.33$ &     $0.33$ &           $0.31$ \\
                         Factory &         333 &  61 &                   5.59 &     $0.17$ &       $0.33$ &     $0.34$ &           $0.35$ \\
                             Fur &         834 &  42 &                   4.31 &     $0.08$ &       $0.33$ &     $0.33$ &           $0.31$ \\
                        Stallion &         598 &  70 &                   3.58 &     $0.32$ &       $0.33$ &     $0.40$ &           $0.64$ \\
              Optical instrument &         649 &  79 &                   3.91 &     $0.15$ &       $0.33$ &     $0.33$ &           $0.28$ \\
                           Thumb &         895 &  26 &                   4.18 &     $0.07$ &       $0.32$ &     $0.39$ &           $0.41$ \\
                            Meal &        1250 &  60 &                   5.68 &     $0.52$ &       $0.32$ &     $0.38$ &           $0.59$ \\
              American shorthair &        2084 &  94 &                   3.32 &     $0.12$ &       $0.32$ &     $0.32$ &           $0.24$ \\
                        Bracelet &         770 &  46 &                   4.13 &     $0.09$ &       $0.31$ &     $0.33$ &           $0.24$ \\
      Vehicle registration plate &        5697 &  76 &                   5.89 &     $0.28$ &       $0.31$ &     $0.33$ &           $0.53$ \\
                             Ice &         682 &  50 &                   4.87 &     $0.23$ &       $0.30$ &     $0.32$ &           $0.55$ \\
                          Lamian &         257 &  80 &                   3.57 &     $0.23$ &       $0.29$ &     $0.32$ &           $0.28$ \\
                      Multimedia &         741 &  46 &                   4.12 &     $0.45$ &       $0.29$ &     $0.31$ &           $0.53$ \\
                            Belt &         467 &  41 &                   3.26 &     $0.06$ &       $0.29$ &     $0.31$ &           $0.31$ \\
                         Prairie &         792 &  44 &                   3.92 &     $0.37$ &       $0.29$ &     $0.26$ &           $0.57$ \\
                      Boardsport &         673 &  62 &                   4.08 &     $0.26$ &       $0.29$ &     $0.29$ &           $0.53$ \\
                         Asphalt &        1026 &  40 &                   4.53 &     $0.23$ &       $0.29$ &     $0.29$ &           $0.45$ \\
                  Costume design &         818 &  52 &                   3.44 &     $0.07$ &       $0.26$ &     $0.26$ &           $0.28$ \\
                         Cottage &         670 &  51 &                   4.13 &     $0.36$ &       $0.26$ &     $0.36$ &           $0.61$ \\
\bottomrule
\end{tabular}
}
\end{table}

\begin{table}[H]
\centering
\caption{\textbf{Bottom $\frac{1}{3}$ of classes from Openimages for active learning.} (3 of 3) Average precision and measurements of the largest component (LC) for each selected class (153 total) from OpenImages with a labeling budget of 2,000 examples. Classes are ordered based on MaxEnt-\flood.}
\resizebox{\textwidth}{!}{
    \begin{tabular}{lccccccc}
\toprule
\makecell{\bfseries Display \\ \bfseries Name} &  \makecell{\bfseries Total\\ \bfseries Positives} &  \makecell{\bfseries Size of \\ \bfseries the LC \\ \bfseries (\%)} & \makecell{\bfseries Average \\ \bfseries Shortest \\ \bfseries Path in \\ \bfseries the LC} & \makecell{\bfseries Random \\ \bfseries (All)} &  \makecell{\bfseries MaxEnt \\ \bfseries (SEALS)} &  \makecell{\bfseries MaxEnt \\ \bfseries (All)} &  \makecell{\bfseries Full \\ \bfseries Supervision} \\
\hline
                           Stele &         450 &  70 &                   3.74 &     $0.12$ &       $0.26$ &     $0.25$ &           $0.35$ \\
               Mode of transport &        1387 &  24 &                   4.50 &     $0.15$ &       $0.26$ &     $0.16$ &           $0.54$ \\
     Temperate coniferous forest &         328 &  59 &                   4.23 &     $0.30$ &       $0.26$ &     $0.29$ &           $0.40$ \\
                          Bumper &         985 &  37 &                   6.65 &     $0.49$ &       $0.25$ &     $0.38$ &           $0.64$ \\
                     Interaction &         924 &  15 &                   6.05 &     $0.04$ &       $0.24$ &     $0.25$ &           $0.37$ \\
                Plumbing fixture &        2124 &  89 &                   3.19 &     $0.31$ &       $0.24$ &     $0.27$ &           $0.38$ \\
                       Shorebird &         234 &  80 &                   2.76 &     $0.32$ &       $0.23$ &     $0.26$ &           $0.37$ \\
                           Icing &        1118 &  74 &                   4.20 &     $0.13$ &       $0.23$ &     $0.25$ &           $0.46$ \\
                      Wilderness &        1225 &  30 &                   4.12 &     $0.29$ &       $0.23$ &     $0.24$ &           $0.39$ \\
                    Construction &         515 &  63 &                   4.99 &     $0.13$ &       $0.23$ &     $0.26$ &           $0.34$ \\
                          Carpet &         644 &  50 &                   6.98 &     $0.05$ &       $0.23$ &     $0.28$ &           $0.43$ \\
                           Maple &        2301 &  90 &                   4.19 &     $0.06$ &       $0.22$ &     $0.21$ &           $0.36$ \\
                      Rural area &         921 &  41 &                   4.63 &     $0.33$ &       $0.22$ &     $0.28$ &           $0.50$ \\
                          Singer &         604 &  56 &                   4.06 &     $0.12$ &       $0.21$ &     $0.21$ &           $0.40$ \\
                    Delicatessen &         196 &  52 &                   2.80 &     $0.14$ &       $0.21$ &     $0.22$ &           $0.27$ \\
                           Canal &         726 &  62 &                   4.78 &     $0.22$ &       $0.21$ &     $0.26$ &           $0.46$ \\
                 Organ (Biology) &        1156 &  25 &                   3.80 &     $0.23$ &       $0.19$ &     $0.07$ &           $0.44$ \\
                           Laugh &         750 &  19 &                   6.22 &     $0.06$ &       $0.18$ &     $0.17$ &           $0.26$ \\
                         Plateau &         452 &  37 &                   3.88 &     $0.41$ &       $0.18$ &     $0.24$ &           $0.46$ \\
                           Algae &         426 &  57 &                   4.52 &     $0.15$ &       $0.18$ &     $0.19$ &           $0.26$ \\
                          Cactus &         377 &  51 &                   4.11 &     $0.05$ &       $0.17$ &     $0.18$ &           $0.22$ \\
                          Engine &         656 &  82 &                   3.43 &     $0.16$ &       $0.17$ &     $0.17$ &           $0.26$ \\
                   Marine mammal &        2954 &  91 &                   3.58 &     $0.19$ &       $0.16$ &     $0.15$ &           $0.21$ \\
                           Frost &         483 &  60 &                   4.73 &     $0.20$ &       $0.15$ &     $0.21$ &           $0.47$ \\
                           Paper &         969 &  23 &                   3.18 &     $0.16$ &       $0.15$ &     $0.14$ &           $0.41$ \\
                          Cirque &         347 &  29 &                   5.77 &     $0.43$ &       $0.15$ &     $0.40$ &           $0.55$ \\
                            Pork &         464 &  64 &                   4.44 &     $0.06$ &       $0.14$ &     $0.14$ &           $0.15$ \\
                         Antenna &         545 &  73 &                   3.66 &     $0.10$ &       $0.14$ &     $0.13$ &           $0.29$ \\
                        Portrait &        2510 &  67 &                   6.38 &     $0.23$ &       $0.13$ &     $0.18$ &           $0.43$ \\
                        Flooring &         814 &  38 &                   3.87 &     $0.10$ &       $0.13$ &     $0.14$ &           $0.20$ \\
                         Cycling &         794 &  63 &                   5.00 &     $0.53$ &       $0.13$ &     $0.28$ &           $0.66$ \\
             Chevrolet silverado &         115 &  62 &                   4.82 &     $0.05$ &       $0.09$ &     $0.08$ &           $0.12$ \\
                            Tool &        1549 &  64 &                   4.51 &     $0.08$ &       $0.09$ &     $0.10$ &           $0.13$ \\
                         Liqueur &         539 &  51 &                   5.98 &     $0.26$ &       $0.09$ &     $0.14$ &           $0.38$ \\
               Pleurotus eryngii &         140 &  84 &                   3.10 &     $0.11$ &       $0.08$ &     $0.08$ &           $0.14$ \\
                        Organism &        1148 &  21 &                   3.49 &     $0.05$ &       $0.07$ &     $0.13$ &           $0.26$ \\
                  Pelecaniformes &         457 &  85 &                   3.96 &     $0.30$ &       $0.07$ &     $0.09$ &           $0.32$ \\
                            Icon &         186 &  15 &                   3.26 &     $0.05$ &       $0.07$ &     $0.07$ &           $0.16$ \\
                         Stadium &        1654 &  77 &                   5.77 &     $0.35$ &       $0.06$ &     $0.10$ &           $0.48$ \\
                           Space &        1006 &  23 &                   4.63 &     $0.03$ &       $0.06$ &     $0.03$ &           $0.14$ \\
                 Performing arts &        1030 &  29 &                   6.97 &     $0.12$ &       $0.05$ &     $0.06$ &           $0.53$ \\
                           Mural &         649 &  41 &                   5.24 &     $0.13$ &       $0.05$ &     $0.07$ &           $0.34$ \\
                           Brown &        1427 &  16 &                   3.49 &     $0.02$ &       $0.05$ &     $0.07$ &           $0.20$ \\
                            Wall &        1218 &  27 &                   3.13 &     $0.11$ &       $0.05$ &     $0.05$ &           $0.27$ \\
                      Tournament &         841 &  47 &                   9.90 &     $0.15$ &       $0.05$ &     $0.07$ &           $0.16$ \\
                           White &        1494 &   3 &                   2.79 &     $0.02$ &       $0.03$ &     $0.01$ &           $0.10$ \\
                      Mitsubishi &         511 &  37 &                   5.14 &     $0.01$ &       $0.02$ &     $0.02$ &           $0.04$ \\
                      Exhibition &         513 &  40 &                   3.87 &     $0.03$ &       $0.02$ &     $0.02$ &           $0.14$ \\
                     Scale model &         667 &  45 &                   5.64 &     $0.05$ &       $0.02$ &     $0.02$ &           $0.13$ \\
                            Teal &         975 &  16 &                   4.08 &     $0.01$ &       $0.01$ &     $0.01$ &           $0.04$ \\
                   Electric blue &        1180 &  19 &                   3.70 &     $0.01$ &       $0.00$ &     $0.01$ &           $0.06$ \\
\bottomrule
\end{tabular}
}
\end{table}

\subsection{Active search on each selected class from OpenImages}\label{appendix:openimages_per_class_active_search}
\begin{table}[H]
\centering
\caption{\textbf{Top $\frac{1}{3}$ of classes from Openimages for active search.} (1 of 3) Recall (\%) of positives and measurements of the largest component (LC) for each selected class (153 total) from OpenImages with a labeling budget of 2,000 examples. Classes are ordered based on MLP-\flood.}
\resizebox{\textwidth}{!}{
    \begin{tabular}{lcccccc}
\toprule
\makecell{\bfseries Display \\ \bfseries Name} &  \makecell{\bfseries Total\\ \bfseries Positives} &  \makecell{\bfseries Size of \\ \bfseries the LC \\ \bfseries (\%)} & \makecell{\bfseries Average \\ \bfseries Shortest \\ \bfseries Path in \\ \bfseries the LC} & \makecell{\bfseries Random \\ \bfseries (All)} &  \makecell{\bfseries MLP \\ \bfseries (SEALS)} &  \makecell{\bfseries MLP \\ \bfseries (All)} \\
\hline
                       Chartreux &         147 &  91 &                   2.59 &      $3.5$ &    $83.9$ &  $84.6$ \\
                            Ibis &         259 &  93 &                   2.53 &      $2.0$ &    $83.9$ &  $83.9$ \\
                        Hurdling &         269 &  92 &                   2.48 &      $1.9$ &    $83.5$ &  $86.2$ \\
          East-european shepherd &         206 &  86 &                   2.16 &      $2.4$ &    $78.2$ &  $78.3$ \\
                      Blackberry &         245 &  87 &                   2.64 &      $2.0$ &    $77.5$ &  $78.5$ \\
                Bathroom cabinet &         368 &  95 &                   2.39 &      $1.4$ &    $76.8$ &  $77.1$ \\
                             Rat &        1151 &  94 &                   2.50 &      $0.5$ &    $75.1$ &  $75.2$ \\
                             Rye &         128 &  75 &                   2.63 &      $3.9$ &    $74.7$ &  $74.5$ \\
                             Elk &         353 &  84 &                   2.40 &      $1.5$ &    $73.4$ &  $74.3$ \\
                         Pavlova &         195 &  86 &                   2.60 &      $2.6$ &    $70.8$ &  $71.3$ \\
                    Seafood boil &         322 &  85 &                   2.73 &      $1.6$ &    $70.4$ &  $70.6$ \\
                    Roman temple &         345 &  89 &                   2.72 &      $1.5$ &    $69.2$ &  $68.3$ \\
                   Monster truck &         286 &  84 &                   2.84 &      $1.7$ &    $68.1$ &  $67.8$ \\
                        Downhill &         194 &  84 &                   2.64 &      $2.6$ &    $67.2$ &  $69.0$ \\
                       Shorebird &         234 &  80 &                   2.76 &      $2.1$ &    $66.8$ &  $66.4$ \\
 Mcdonnell douglas f/a-18 hornet &         160 &  88 &                   3.51 &      $3.2$ &    $66.0$ &  $67.9$ \\
                San Pedro cactus &         318 &  76 &                   3.32 &      $1.6$ &    $65.8$ &  $64.9$ \\
               Pleurotus eryngii &         140 &  84 &                   3.10 &      $3.6$ &    $65.7$ &  $66.1$ \\
                 California roll &         368 &  84 &                   3.49 &      $1.4$ &    $65.3$ &  $68.0$ \\
                         Gymnast &         235 &  77 &                   2.39 &      $2.2$ &    $64.0$ &  $64.0$ \\
                         Galleon &         182 &  74 &                   2.54 &      $2.7$ &    $62.4$ &  $61.5$ \\
                      Cargo ship &         219 &  84 &                   2.85 &      $2.3$ &    $61.1$ &  $61.7$ \\
                    Trail riding &         679 &  90 &                   3.15 &      $0.8$ &    $59.7$ &  $60.7$ \\
                         Daylily &         391 &  87 &                   3.25 &      $1.3$ &    $59.4$ &  $59.5$ \\
                      Grapefruit &         506 &  83 &                   3.06 &      $1.0$ &    $59.4$ &  $60.4$ \\
                        Bilberry &         228 &  75 &                   3.77 &      $2.2$ &    $58.9$ &  $55.2$ \\
                        Smoothie &         330 &  78 &                   3.22 &      $1.5$ &    $58.0$ &  $59.8$ \\
                      Embroidery &         356 &  81 &                   3.41 &      $1.5$ &    $57.6$ &  $57.2$ \\
                          Deacon &         341 &  80 &                   2.80 &      $1.5$ &    $57.1$ &  $57.9$ \\
                  Shooting range &         189 &  57 &                   3.06 &      $2.6$ &    $56.3$ &  $55.6$ \\
                          Glider &         393 &  82 &                   3.94 &      $1.3$ &    $55.8$ &  $57.6$ \\
               White-tailed deer &         238 &  87 &                   3.24 &      $2.2$ &    $55.8$ &  $55.9$ \\
                 Coral reef fish &         434 &  90 &                   3.07 &      $1.3$ &    $54.8$ &  $54.9$ \\
             Chevrolet silverado &         115 &  62 &                   4.82 &      $4.3$ &    $54.1$ &  $54.6$ \\
                          Lugger &         103 &  62 &                   3.14 &      $4.9$ &    $53.8$ &  $53.8$ \\
                          Pancit &         385 &  86 &                   3.16 &      $1.3$ &    $52.8$ &  $53.1$ \\
                           Chess &         740 &  83 &                   3.39 &      $0.7$ &    $51.9$ &  $50.9$ \\
                           Bakmi &         191 &  76 &                   3.34 &      $2.6$ &    $51.8$ &  $51.2$ \\
                   Kitchen knife &         360 &  63 &                   3.52 &      $1.5$ &    $50.9$ &  $53.9$ \\
                           Straw &         547 &  65 &                   2.85 &      $1.0$ &    $50.3$ &  $51.0$ \\
      Ancient roman architecture &         589 &  76 &                   3.34 &      $0.8$ &    $48.5$ &  $47.1$ \\
                          Lamian &         257 &  80 &                   3.57 &      $1.9$ &    $47.8$ &  $48.2$ \\
                         Antenna &         545 &  73 &                   3.66 &      $1.0$ &    $47.4$ &  $48.0$ \\
                        Calabaza &         870 &  82 &                   3.15 &      $0.6$ &    $46.0$ &  $45.8$ \\
                            Ring &         676 &  75 &                   3.87 &      $0.7$ &    $45.2$ &  $45.4$ \\
                   Ciconiiformes &         426 &  88 &                   3.47 &      $1.2$ &    $45.2$ &  $45.2$ \\
                       Log cabin &         448 &  70 &                   3.62 &      $1.1$ &    $44.9$ &  $45.7$ \\
         Bowed string instrument &         728 &  78 &                   3.05 &      $0.7$ &    $44.4$ &  $44.7$ \\
                           Pasta &         954 &  91 &                   3.21 &      $0.5$ &    $43.7$ &  $43.8$ \\
                        Knitting &         409 &  71 &                   3.10 &      $1.3$ &    $43.5$ &  $42.8$ \\
                            Rope &         618 &  59 &                   3.48 &      $0.8$ &    $43.0$ &  $42.8$ \\
\bottomrule
\end{tabular}
}
\end{table}
\clearpage

\begin{table}[H]
\centering
\caption{\textbf{Middle $\frac{1}{3}$ of classes from Openimages for active search.} (2 of 3) Recall (\%) of positives and measurements of the largest component (LC) for each selected class (153 total) from OpenImages with a labeling budget of 2,000 examples. Classes are ordered based on MLP-\flood.}
\resizebox{\textwidth}{!}{
    \begin{tabular}{lcccccc}
\toprule
\makecell{\bfseries Display \\ \bfseries Name} &  \makecell{\bfseries Total\\ \bfseries Positives} &  \makecell{\bfseries Size of \\ \bfseries the LC \\ \bfseries (\%)} & \makecell{\bfseries Average \\ \bfseries Shortest \\ \bfseries Path in \\ \bfseries the LC} & \makecell{\bfseries Random \\ \bfseries (All)} &  \makecell{\bfseries MLP \\ \bfseries (SEALS)} &  \makecell{\bfseries MLP \\ \bfseries (All)} \\
\hline
                  Formula racing &         351 &  88 &                   3.38 &      $1.4$ &    $42.6$ &  $41.4$ \\
                     Paddy field &         468 &  70 &                   4.02 &      $1.1$ &    $42.6$ &  $44.2$ \\
                          Engine &         656 &  82 &                   3.43 &      $0.8$ &    $41.7$ &  $40.6$ \\
                  Electric piano &         345 &  56 &                   4.15 &      $1.5$ &    $40.9$ &  $42.1$ \\
                          Shrimp &         907 &  85 &                   3.82 &      $0.6$ &    $40.4$ &  $40.8$ \\
                            Goat &        1190 &  88 &                   3.72 &      $0.4$ &    $39.6$ &  $39.6$ \\
               Chocolate truffle &         288 &  58 &                   5.47 &      $1.8$ &    $39.6$ &  $39.9$ \\
                        Cupboard &         898 &  88 &                   3.41 &      $0.6$ &    $39.6$ &  $39.6$ \\
                          Citrus &         796 &  65 &                   3.34 &      $0.7$ &    $39.3$ &  $39.6$ \\
                          Parrot &        1546 &  89 &                   2.85 &      $0.4$ &    $39.2$ &  $38.8$ \\
                    Delicatessen &         196 &  52 &                   2.80 &      $2.6$ &    $38.2$ &  $39.0$ \\
                           Berry &         874 &  82 &                   3.78 &      $0.6$ &    $37.8$ &  $37.6$ \\
                          Briefs &         539 &  78 &                   3.68 &      $1.0$ &    $37.1$ &  $37.2$ \\
                   Concert dance &         357 &  61 &                   3.91 &      $1.4$ &    $36.6$ &  $36.1$ \\
               Modern pentathlon &         772 &  43 &                   2.59 &      $0.6$ &    $35.9$ &  $32.6$ \\
                   Fortification &         287 &  66 &                   3.96 &      $1.7$ &    $35.7$ &  $37.6$ \\
                        Stallion &         598 &  70 &                   3.58 &      $0.9$ &    $35.7$ &  $36.3$ \\
                            Belt &         467 &  41 &                   3.26 &      $1.1$ &    $35.2$ &  $34.9$ \\
                   Sirloin steak &         297 &  60 &                   4.97 &      $1.8$ &    $33.9$ &  $32.7$ \\
                           Stele &         450 &  70 &                   3.74 &      $1.1$ &    $33.9$ &  $32.7$ \\
                     Galliformes &         674 &  82 &                   3.98 &      $0.7$ &    $33.9$ &  $33.9$ \\
                           Algae &         426 &  57 &                   4.52 &      $1.2$ &    $33.8$ &  $33.1$ \\
                            Herd &         648 &  75 &                   3.88 &      $0.8$ &    $33.5$ &  $33.7$ \\
                  Pelecaniformes &         457 &  85 &                   3.96 &      $1.1$ &    $33.4$ &  $37.5$ \\
                          Cactus &         377 &  51 &                   4.11 &      $1.3$ &    $33.4$ &  $35.2$ \\
                        Shelving &         810 &  66 &                   3.41 &      $0.7$ &    $33.2$ &  $33.3$ \\
                           Drums &         741 &  69 &                   3.30 &      $0.7$ &    $32.9$ &  $32.7$ \\
                       Cranberry &         450 &  63 &                   4.10 &      $1.2$ &    $32.9$ &  $33.7$ \\
                         Factory &         333 &  61 &                   5.59 &      $1.5$ &    $32.0$ &  $31.7$ \\
                  Costume design &         818 &  52 &                   3.44 &      $0.6$ &    $30.9$ &  $30.6$ \\
              Optical instrument &         649 &  79 &                   3.91 &      $0.8$ &    $30.3$ &  $32.8$ \\
                    Construction &         515 &  63 &                   4.99 &      $1.0$ &    $30.1$ &  $31.1$ \\
     Temperate coniferous forest &         328 &  59 &                   4.23 &      $1.5$ &    $30.1$ &  $27.6$ \\
                         Skating &         561 &  77 &                   4.04 &      $1.0$ &    $28.8$ &  $30.4$ \\
                      Egg (Food) &        1193 &  85 &                   4.31 &      $0.4$ &    $28.8$ &  $28.6$ \\
                    Steamed rice &         580 &  75 &                   4.54 &      $0.9$ &    $28.1$ &  $30.2$ \\
                Plumbing fixture &        2124 &  89 &                   3.19 &      $0.3$ &    $27.9$ &  $27.9$ \\
                      Whole food &         708 &  73 &                   3.66 &      $0.7$ &    $27.7$ &  $27.5$ \\
                      Boardsport &         673 &  62 &                   4.08 &      $0.8$ &    $26.8$ &  $26.5$ \\
                            Pork &         464 &  64 &                   4.44 &      $1.1$ &    $26.3$ &  $26.6$ \\
              Aerial photography &         931 &  63 &                   3.99 &      $0.6$ &    $25.8$ &  $26.1$ \\
                     Town square &         617 &  58 &                   3.69 &      $0.8$ &    $25.7$ &  $26.1$ \\
                          Estate &         667 &  51 &                   4.03 &      $0.9$ &    $24.8$ &  $25.9$ \\
                           Maple &        2301 &  90 &                   4.19 &      $0.2$ &    $24.3$ &  $24.4$ \\
                          Cattle &        5995 &  93 &                   3.22 &      $0.1$ &    $23.8$ &  $23.6$ \\
                       Superhero &         968 &  58 &                   5.28 &      $0.6$ &    $23.4$ &  $23.3$ \\
                        Bracelet &         770 &  46 &                   4.13 &      $0.6$ &    $23.2$ &  $24.8$ \\
                           Frost &         483 &  60 &                   4.73 &      $1.0$ &    $23.1$ &  $22.5$ \\
                     Scale model &         667 &  45 &                   5.64 &      $0.8$ &    $22.9$ &  $23.7$ \\
                         Plateau &         452 &  37 &                   3.88 &      $1.1$ &    $22.7$ &  $19.1$ \\
                    Bird of prey &         712 &  78 &                   3.81 &      $0.7$ &    $22.4$ &  $22.0$ \\
\bottomrule
\end{tabular}
}
\end{table}

\begin{table}[H]
\centering
\caption{\textbf{Bottom $\frac{1}{3}$ of classes from Openimages for active search.} (3 of 3) Recall (\%) of positives and measurements of the largest component (LC) for each selected class (153 total) from OpenImages with a labeling budget of 2,000 examples. Classes are ordered based on MLP-\flood.}
\resizebox{\textwidth}{!}{
    \begin{tabular}{lcccccc}
\toprule
\makecell{\bfseries Display \\ \bfseries Name} &  \makecell{\bfseries Total\\ \bfseries Positives} &  \makecell{\bfseries Size of \\ \bfseries the LC \\ \bfseries (\%)} & \makecell{\bfseries Average \\ \bfseries Shortest \\ \bfseries Path in \\ \bfseries the LC} & \makecell{\bfseries Random \\ \bfseries (All)} &  \makecell{\bfseries MLP \\ \bfseries (SEALS)} &  \makecell{\bfseries MLP \\ \bfseries (All)} \\
\hline
                           Canal &         726 &  62 &                   4.78 &      $0.7$ &    $22.4$ &  $20.9$ \\
                      Exhibition &         513 &  40 &                   3.87 &      $1.0$ &    $21.9$ &  $23.1$ \\
                          Carpet &         644 &  50 &                   6.98 &      $0.8$ &    $21.9$ &  $22.7$ \\
                       Monoplane &         756 &  81 &                   4.70 &      $0.7$ &    $21.8$ &  $20.1$ \\
                             Ice &         682 &  50 &                   4.87 &      $0.8$ &    $21.6$ &  $23.1$ \\
                             Fur &         834 &  42 &                   4.31 &      $0.6$ &    $21.2$ &  $17.3$ \\
                           Icing &        1118 &  74 &                   4.20 &      $0.4$ &    $20.5$ &  $20.1$ \\
                        Flooring &         814 &  38 &                   3.87 &      $0.6$ &    $20.4$ &  $16.9$ \\
                            Icon &         186 &  15 &                   3.26 &      $2.7$ &    $19.9$ &  $17.2$ \\
                         Prairie &         792 &  44 &                   3.92 &      $0.6$ &    $19.0$ &  $19.2$ \\
                           Tooth &         976 &  49 &                   4.77 &      $0.5$ &    $18.6$ &  $18.0$ \\
         Skateboarding Equipment &         862 &  57 &                   5.92 &      $0.6$ &    $18.1$ &  $19.3$ \\
             Automotive exterior &        1060 &  23 &                   2.74 &      $0.5$ &    $17.7$ &  $11.9$ \\
                         Cottage &         670 &  51 &                   4.13 &      $0.7$ &    $17.6$ &  $17.3$ \\
                         Soldier &        1032 &  74 &                   3.80 &      $0.5$ &    $17.3$ &  $16.8$ \\
                   Marine mammal &        2954 &  91 &                   3.58 &      $0.2$ &    $17.3$ &  $17.2$ \\
                            Tool &        1549 &  64 &                   4.51 &      $0.3$ &    $17.0$ &  $16.9$ \\
                      Multimedia &         741 &  46 &                   4.12 &      $0.7$ &    $16.8$ &  $17.1$ \\
              American shorthair &        2084 &  94 &                   3.32 &      $0.3$ &    $16.5$ &  $16.7$ \\
                         Asphalt &        1026 &  40 &                   4.53 &      $0.5$ &    $15.1$ &  $11.5$ \\
                          Singer &         604 &  56 &                   4.06 &      $0.9$ &    $14.6$ &  $13.6$ \\
                      Floodplain &         567 &  50 &                   4.81 &      $0.9$ &    $14.6$ &  $14.0$ \\
                      Rural area &         921 &  41 &                   4.63 &      $0.6$ &    $14.2$ &  $13.2$ \\
                      Mitsubishi &         511 &  37 &                   5.14 &      $1.0$ &    $12.6$ &  $11.8$ \\
                 Organ (Biology) &        1156 &  25 &                   3.80 &      $0.5$ &    $12.1$ &  $15.9$ \\
                           Paper &         969 &  23 &                   3.18 &      $0.5$ &    $12.0$ &  $14.8$ \\
                    Annual plant &         677 &  38 &                   6.07 &      $0.7$ &    $11.8$ &  $10.7$ \\
                   Electric blue &        1180 &  19 &                   3.70 &      $0.5$ &    $11.5$ &   $9.4$ \\
                         Stadium &        1654 &  77 &                   5.77 &      $0.3$ &    $10.8$ &   $9.3$ \\
                           Mural &         649 &  41 &                   5.24 &      $0.8$ &    $10.4$ &  $10.3$ \\
                            Teal &         975 &  16 &                   4.08 &      $0.5$ &     $9.9$ &  $10.4$ \\
                          Cirque &         347 &  29 &                   5.77 &      $1.5$ &     $9.9$ &   $9.8$ \\
                            Wall &        1218 &  27 &                   3.13 &      $0.4$ &     $9.3$ &  $12.0$ \\
                           Thumb &         895 &  26 &                   4.18 &      $0.6$ &     $9.3$ &  $13.8$ \\
                     Landscaping &         789 &  32 &                   4.71 &      $0.7$ &     $9.2$ &   $9.3$ \\
      Vehicle registration plate &        5697 &  76 &                   5.89 &      $0.1$ &     $8.7$ &   $8.3$ \\
                            Meal &        1250 &  60 &                   5.68 &      $0.4$ &     $8.5$ &   $9.1$ \\
                      Wilderness &        1225 &  30 &                   4.12 &      $0.4$ &     $8.5$ &   $9.8$ \\
                         Liqueur &         539 &  51 &                   5.98 &      $1.0$ &     $8.0$ &  $12.8$ \\
                           Space &        1006 &  23 &                   4.63 &      $0.5$ &     $7.8$ &   $6.3$ \\
                         Cycling &         794 &  63 &                   5.00 &      $0.6$ &     $7.3$ &   $7.8$ \\
                           Brown &        1427 &  16 &                   3.49 &      $0.4$ &     $7.2$ &   $2.6$ \\
                        Organism &        1148 &  21 &                   3.49 &      $0.4$ &     $6.8$ &   $2.0$ \\
                           Laugh &         750 &  19 &                   6.22 &      $0.7$ &     $6.6$ &   $8.6$ \\
                          Bumper &         985 &  37 &                   6.65 &      $0.5$ &     $5.9$ &   $8.3$ \\
                        Portrait &        2510 &  67 &                   6.38 &      $0.2$ &     $5.8$ &   $5.3$ \\
               Mode of transport &        1387 &  24 &                   4.50 &      $0.4$ &     $5.1$ &   $3.6$ \\
                     Interaction &         924 &  15 &                   6.05 &      $0.6$ &     $4.5$ &   $4.6$ \\
                      Tournament &         841 &  47 &                   9.90 &      $0.6$ &     $4.3$ &   $5.1$ \\
                 Performing arts &        1030 &  29 &                   6.97 &      $0.5$ &     $2.3$ &   $2.5$ \\
                           White &        1494 &   3 &                   2.79 &      $0.3$ &     $2.0$ &   $0.5$ \\
\bottomrule
\end{tabular}
}
\end{table}

\subsection{Self-supervised embeddings (SimCLR) on ImageNet}\label{appendix:simclr}

\begin{figure}[H]
  \centering
  \begin{subfigure}{0.95\textwidth}
    \includegraphics[width=\columnwidth]{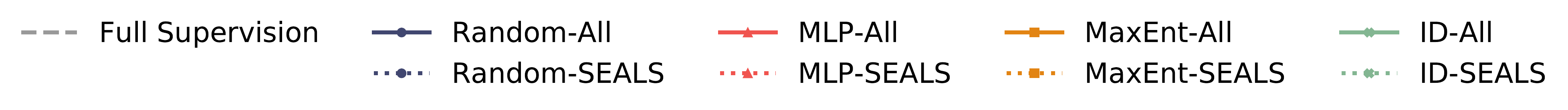}
  \end{subfigure}

  \begin{subfigure}{0.95\textwidth}
    \includegraphics[width=\columnwidth]{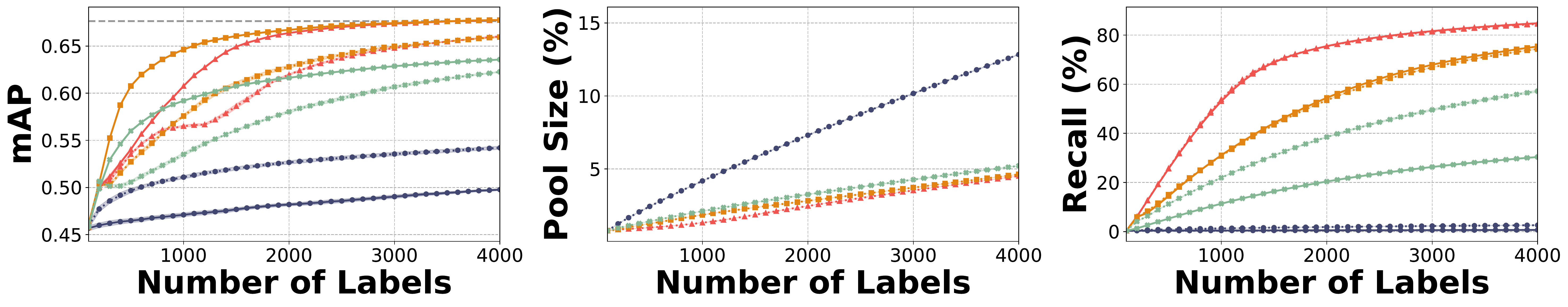}
    \label{fig:imagenet_simclr}
  \end{subfigure}

\caption{
Active learning and search on ImageNet with self-supervised embeddings from SimCLR~\citep{chen2020simple}.
Because the self-supervised training for the embeddings did not use the labels, results are average across all 1,000 classes and $|U|$=1,281,167. 
To compensate for the larger unlabeled pool, we extended the total labeling budget to 4,000 compared to the 2,000 used in Figure~\ref{fig:active_learning_and_search}.
Across strategies, \flood with $k=100$ substantially outperforms random sampling in terms of both the mAP the model achieves for active learning (left) and the recall of positive examples for active search (right), while only considering a fraction of the data $U$ (middle).
For active learning, the gap between the baseline and SEALS approaches is slightly larger than in Figure~\ref{fig:active_learning_and_search}, which is likely due to the larger pool size and increased average shortest paths (see Figure~\ref{fig:latent_structure_simclr}).
}
  \label{fig:active_learning_and_search_simclr}
\end{figure}

\begin{figure}[H]
  \centering
  \begin{subfigure}{.475\textwidth}
      \includegraphics[width=\columnwidth]{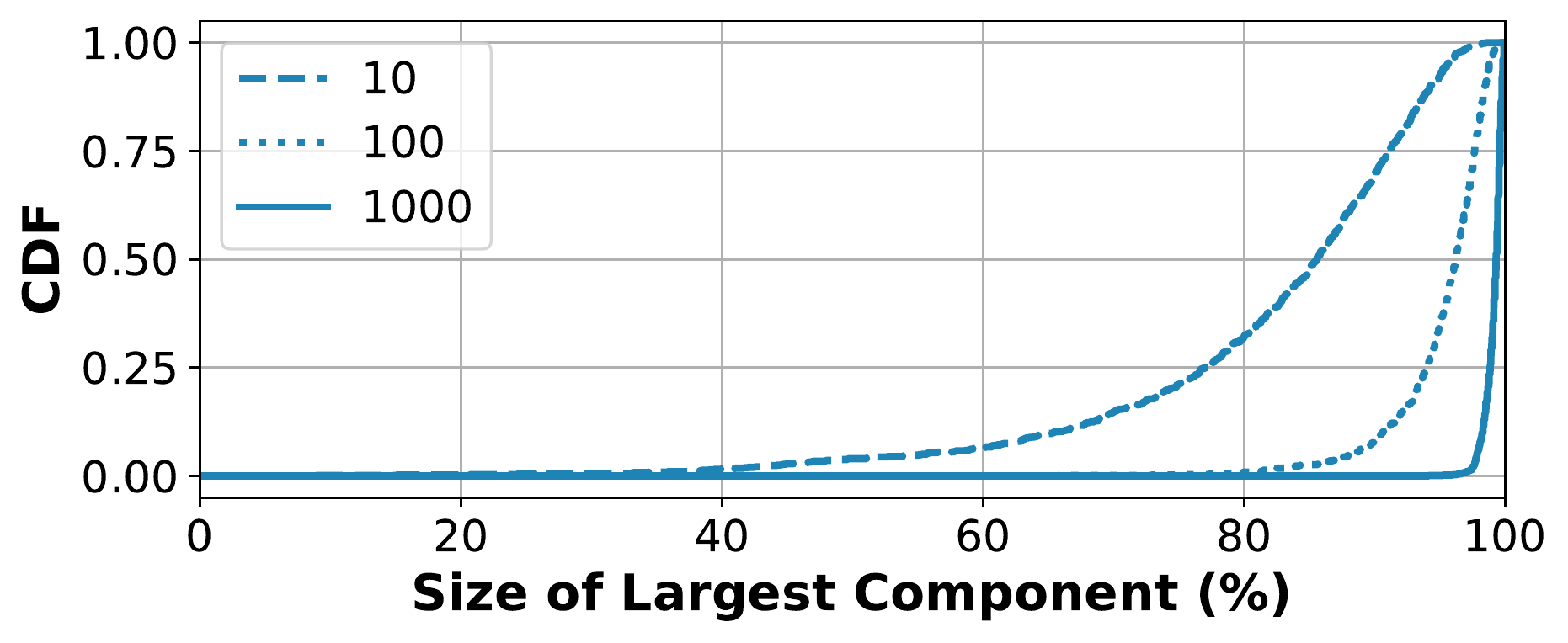}
    \label{fig:imagenet_simclr_clusters_lc}
  \end{subfigure}
  \begin{subfigure}{.475\textwidth}
      \includegraphics[width=\columnwidth]{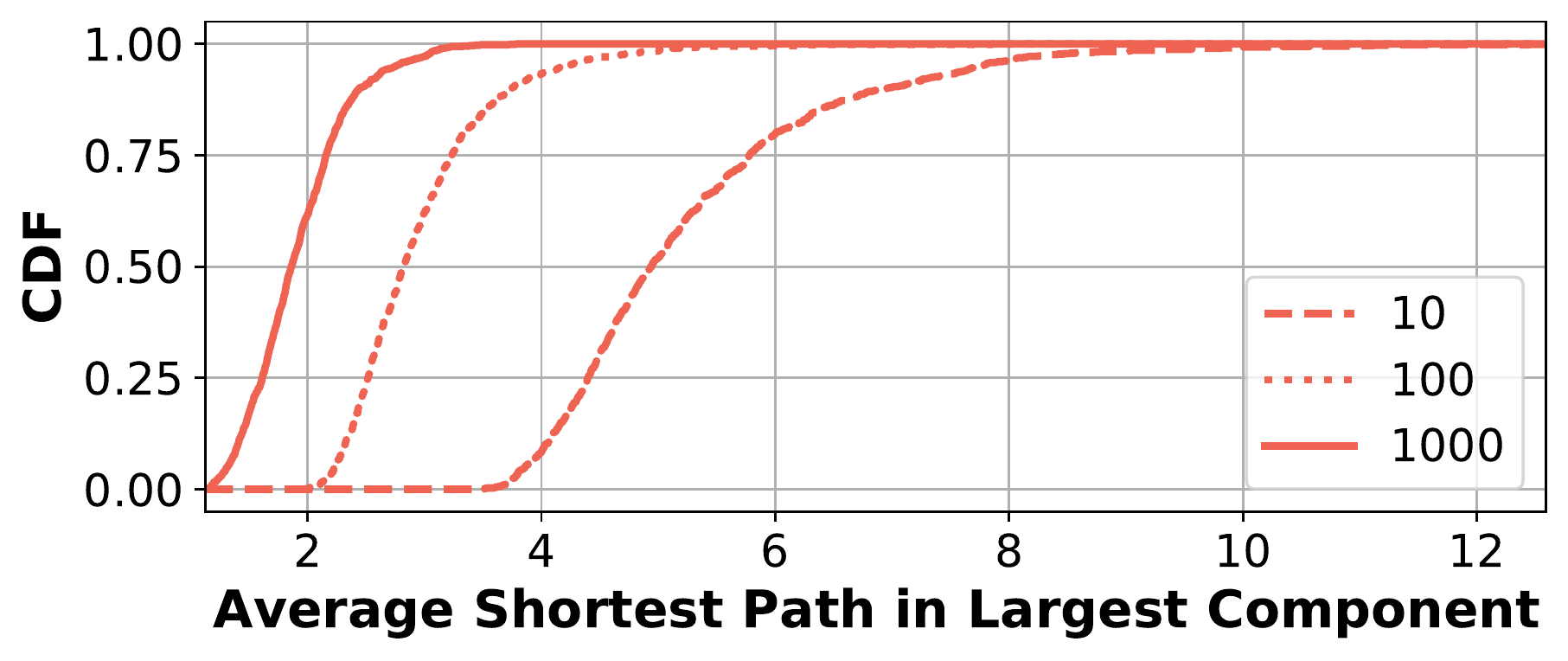}
    \label{fig:imagenet_simclr_clusters_asp}
  \end{subfigure}
\caption{
Measurements of the latent structure of unseen concepts in ImageNet with self-supervised embeddings from SimCLR~\citep{chen2020simple}.
In comparison to Figure~\ref{fig:imagenet_clusters}, the $k$-nearest neighbor graph for unseen concepts was still well connected, forming large connected components (left) for even moderate values of $k$, but the average shortest path between examples was slightly longer (right).
The increased path length is not too surprising considering the fully supervised model still outperformed the linear evaluation of the self-supervised embeddings in~\citet{chen2020simple}.
  }\vspace{-2mm}
  \label{fig:latent_structure_simclr}
\end{figure}

\clearpage

\subsection{Self-supervised embedding (Sentence-BERT) on Goodreads}\label{sec:goodreads}

We followed the same general procedure described in Section~\ref{sec:experimental_setup}, aside from the dataset specific details below.
Goodreads spoiler detection~\citep{wan2019spoiler} had 17.67 million sentences with binary spoiler annotations.
Spoilers made up 3.224\% of the data, making them much more common than the rare concepts we evaluated in the other datasets.
Following~\citet{wan2019spoiler}, we used 3.53 million sentences for testing (20\%), 10,000 sentences as the validation set, and the remaining 14.13 million sentences as the unlabeled pool.
We also switched to the area under the ROC curve (AUC) as our primary evaluation metric for active learning to be consistent with~\citet{wan2019spoiler}.
For $G_z$, we used a pre-trained Sentence-BERT model (SBERT-NLI-base)~\citep{reimers2019sentence}, applied PCA whitening to reduce the dimension to 256, and performed $l^2$ normalization.

\subsubsection{Active search}\label{sec:goodreads_active_search}
\flood achieved the same recall as the baseline approaches, but only considered less than 1\% of the unlabeled data in the candidate pool, as shown in Figure~\ref{fig:goodreads_active_learning_and_search}.
At a labeling budget of 2,000, MLP-ALL and MLP-\flood recalled $0.15\pm0.02$\% and $0.17\pm0.05$\%, respectively, while MaxEnt-All and MaxEnt-\flood achieved $0.14\pm0.04$\% and $0.11\pm0.06$\% recall respectively.
Increasing the labeling budget to 50,000 examples, increased recall to \textasciitilde3.7\% for MaxEnt and MLP but maintained a similar relative improvement over random sampling, as shown in Figure~\ref{fig:goodreads_xl}.
ID-\flood performed worse than the other strategies.
However, all of the active selection strategies outperformed random sampling by up to an order of magnitude.

\begin{figure}[H]
  \centering
  \begin{subfigure}{0.95\textwidth}
    \includegraphics[width=\columnwidth]{figures/imagenet_legend_al.pdf}
  \end{subfigure}

  \begin{subfigure}{0.95\textwidth}
    \includegraphics[width=\columnwidth]{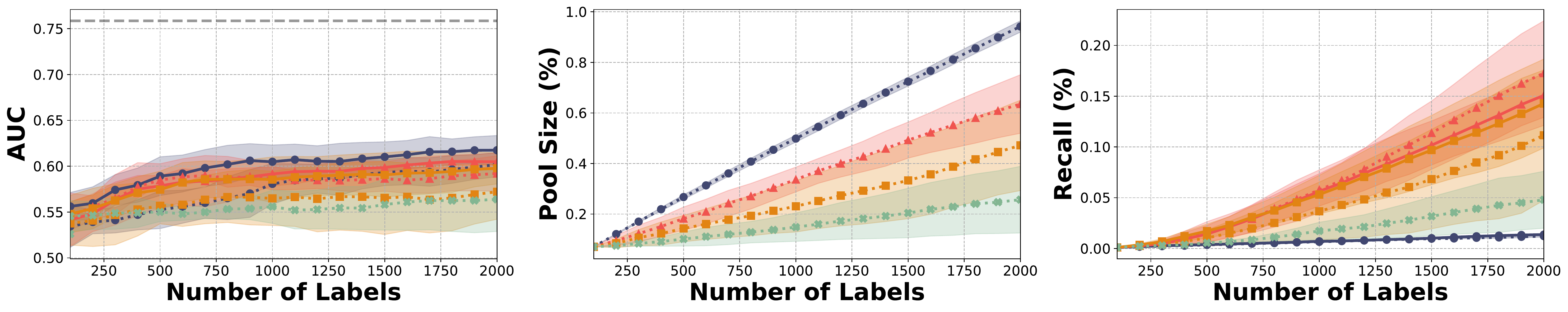}
    \label{fig:goodreads_pos5}
    \end{subfigure}
\caption{
Active learning and search on Goodreads with Sentence-BERT embeddings.
Across datasets and strategies, \flood with $k=100$ performs similarly to the baseline approach in terms of both the error the model achieves for active learning (left) and the recall of positive examples for active search (right), while only considering a fraction of the data $U$ (middle).
}
  \label{fig:goodreads_active_learning_and_search}
\end{figure}

\subsubsection{Active learning}\label{sec:goodreads_active_learning}
At a labeling budget of 2,000 examples, all the selection strategies were indistinguishable from random sampling.
Increasing the labeling budget did not help, as shown in Figure~\ref{fig:goodreads_xl}.
Unlike ImageNet and OpenImages, Goodreads had a much higher fraction of positive examples (3.224\%), and the examples were not tightly clustered as described in Section~\ref{sec:goodreads_latent_structure}.

\begin{figure}[H]
  \centering
  \begin{subfigure}{0.95\textwidth}
    \includegraphics[width=\columnwidth]{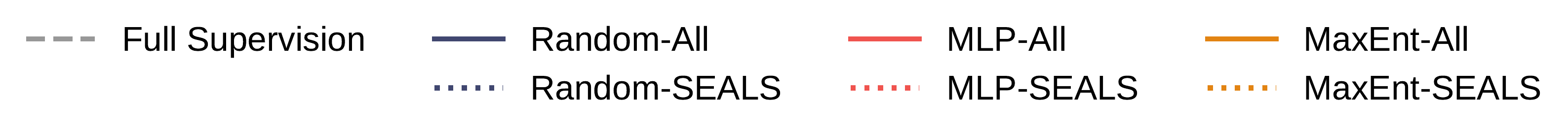}
  \end{subfigure}

  \begin{subfigure}{0.95\textwidth}
    \includegraphics[width=\columnwidth]{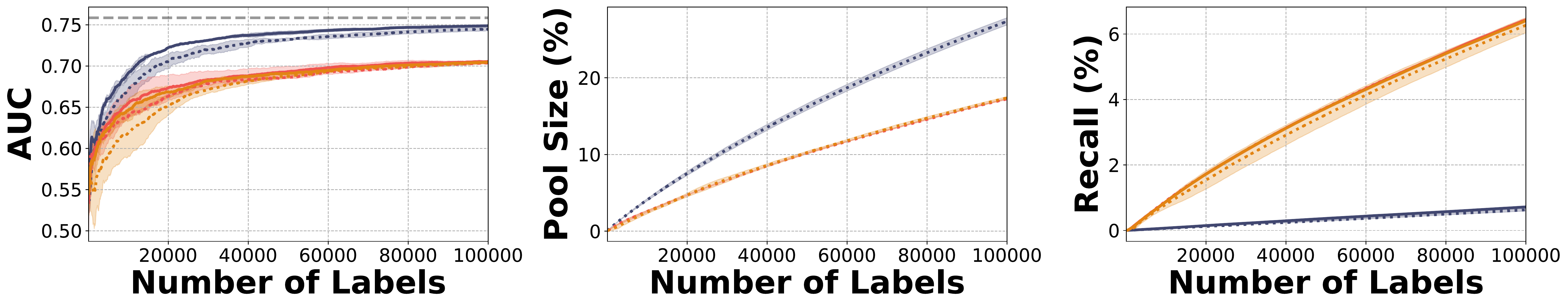}
    \label{fig:goodreads_sentence_level_pos5_xl}
  \end{subfigure}
\caption{
Active learning and search on Goodreads with a labeling budget of 100,000 examples.
Across strategies, \flood with $k=100$ performed similarly to the baseline approach in terms of both the error the model achieved for active learning (left) and the recall of positive examples for active search (right), while only considering a fraction of the data $U$ (middle).
ID was excluded because of the growing pool size and computation.
For active search, MaxEnt and MLP continued to improve recall.
For active learning, all the selection strategies (both with and without SEALS) performed worse than random sampling despite the larger labeling budget.
This gap was likely due to spoilers being book specific and the higher fraction of positive examples in the unlabeled pool, causing relevant examples to be spread almost uniformly across the space (see Section~\ref{sec:goodreads_latent_structure}).
}
  \label{fig:goodreads_xl}
\end{figure}

\subsubsection{Latent structure}\label{sec:goodreads_latent_structure}

The large number of positive examples in the Goodreads dataset limited the analysis we could perform.
We could only calculate the size of the largest connected component in the nearest neighbor graph (Figure~\ref{fig:latent_structure_goodreads}).
For $k=10$, only 28.4\% of the positive examples could be reached directly, but increasing $k$ to 100 improved that dramatically to 96.7\%.
For such a large connected component, one might have expected active learning to perform better in Section~\ref{sec:goodreads_active_learning}.
By analyzing the embeddings, however, we found that examples are spread almost uniformly across the space with an average cosine similarity of 0.004.
For comparison, the average cosine similarity for concepts in ImageNet and OpenImages was $0.453\pm0.077$ and $0.361\pm0.105$ respectively.
This uniformity was likely due to the higher fraction of positive examples and spoilers being book specific while Sentence-BERT is trained on generic data.
As a result, even if spoilers were tightly clustered within each book, the books were spread across a range of topics and consequently across the embedding space, illustrating a limitation and opportunity for future work.

\begin{figure}[H]
\centering
\includegraphics[width=0.80\textwidth]{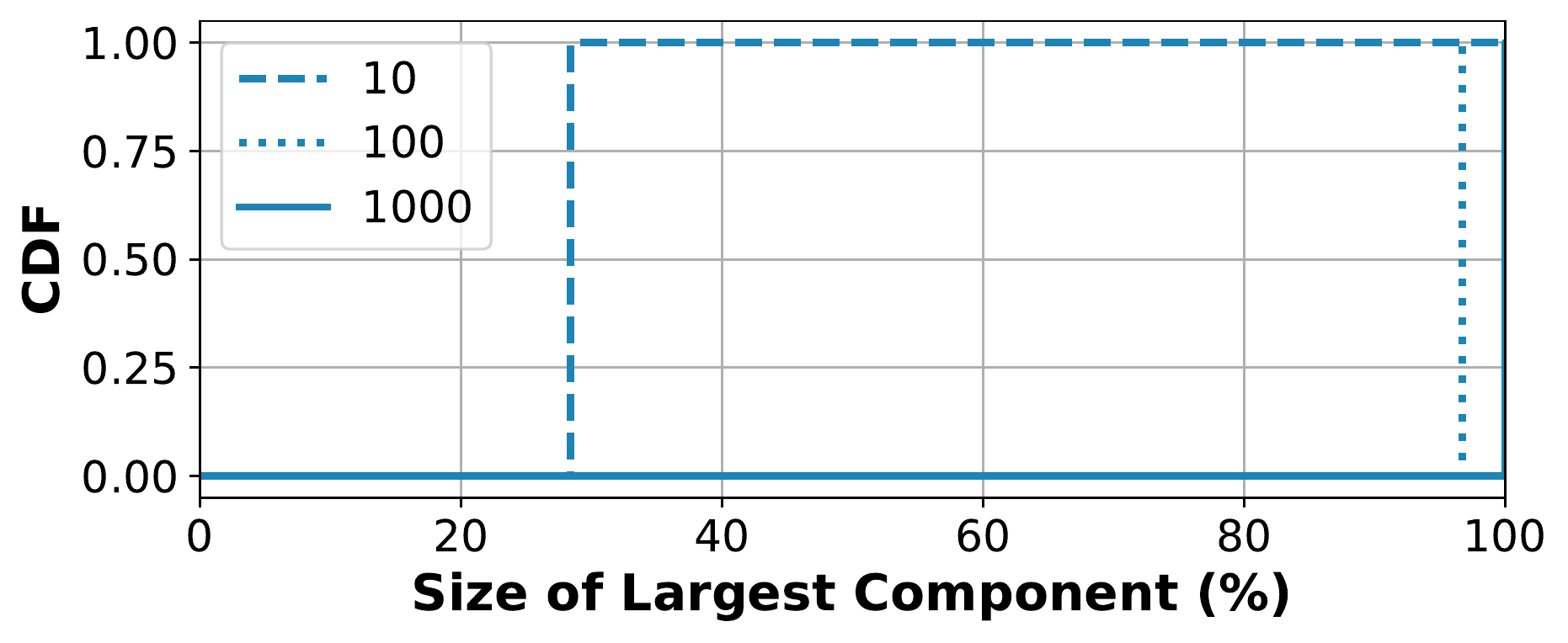}
\caption{Cumulative distribution function (CDF) for the largest connected component in the Goodreads dataset with varying values of $k$.}
\label{fig:latent_structure_goodreads}
\end{figure}
\clearpage

\end{document}